\documentclass[twoside,11pt]{article}

% Any additional packages needed should be included after jmlr2e.
% Note that jmlr2e.sty includes epsfig, amssymb, natbib and graphicx,
% and defines many common macros, such as 'proof' and 'example'.
%
% It also sets the bibliographystyle to plainnat; for more information on
% natbib citation styles, see the natbib documentation, a copy of which
% is archived at http://www.jmlr.org/format/natbib.pdf

% Available options for package jmlr2e are:
%
%   - abbrvbib : use abbrvnat for the bibliography style
%   - nohyperref : do not load the hyperref package
%   - preprint : remove JMLR specific information from the template,
%         useful for example for posting to preprint servers.
%
% Example of using the package with custom options:
%
% \usepackage[abbrvbib, preprint]{jmlr2e}

\usepackage{jmlr2e}

\usepackage{graphicx}
\usepackage{amsmath}
\usepackage{amssymb}
\usepackage{booktabs}
%\usepackage[pagebackref,breaklinks,colorlinks]{hyperref}
%%% ICML SPECIFIC %%%
% Recommended, but optional, packages for figures and better typesetting:
% Include other packages here, before hyperref.
\usepackage{subcaption}
% hyperref makes hyperlinks in the resulting PDF.
% If your build breaks (sometimes temporarily if a hyperlink spans a page)
% please comment out the following usepackage line and replace
% \usepackage{icml2021} with \usepackage[nohyperref]{icml2021} above.
% \usepackage{hyperref}
% Attempt to make hyperref and algorithmic work together better:
%\newcommand{\theHalgorithm}{\arabic{algorithm}}
%%% ICML SPECIFIC %%%

%\usepackage{fullpage,cite,subcaption,titlesec,MnSymbol}
%\usepackage[bottom,hang,flushmargin]{footmisc} 
%\usepackage{caption}
%\setlength{\captionmargin}{30pt}
%\usepackage{wrapfig}
%\useaackage{hyperref,movie15}
\usepackage{graphicx,amsmath,amssymb,bm,breakurl,epsfig,epsf,color,mathbbol,fmtcount,semtrans,multirow,comment,boldline,pgfplots}
\usepackage{tikz}
%\usetikzlibrary{pgfplots.groupplots}
%\usepackage[utf8]{inputenc} % allow utf-8 input
%\usepackage[T1]{fontenc}    % use 8-bit T1 fonts
\usepackage{url}            % simple URL typesetting
\usepackage{booktabs}       % professional-quality tables
\usepackage{amsfonts}       % blackboard math symbols
\usepackage{nicefrac}       % compact symbols for 1/2, etc.
\usepackage{microtype}      % microtypography
\usepackage{mathtools}
\definecolor{darkred}{RGB}{150,0,0}
\definecolor{darkgreen}{RGB}{0,150,0}
\definecolor{darkblue}{RGB}{0,0,200}
\definecolor{purple}{RGB}{100,0,100}

\setcounter{secnumdepth}{3}

\newtheorem{assumption}{Assumption}

\newtheorem{fact}{Fact}
%\newtheorem{lemma}{Lemma}
%\newtheorem{corollary}{Corollary}
%\newtheorem{proposition}{Proposition}
%\newtheorem{definition}{Definition}

%\newtheorem{conjecture}{Conjecture}

%\newtheorem{remark}{Remark}%[subsection]

%\newtheorem{example}{Example}
%\numberwithin{equation}{section} 

\newcommand{\BC}{\bar{C}}
\newcommand{\UB}{B}

\newcommand{\SO}[1]{}%\textcolor{darkblue}{SO: #1}}

\newcommand{\clr}[1]{\textcolor{darkred}{#1}}
\newcommand{\cln}[1]{\textcolor{red}{}}
\setlength\marginparwidth{50pt}
\newcommand{\som}[1]{\marginpar{\color{darkblue}\tiny\ttfamily SO: #1}}
\newcommand{\ylm}[1]{}%\marginpar{\color{purple}\tiny\ttfamily YL: #1}}

\newenvironment{myenumerate}{
\begin{enumerate}
  \setlength{\topsep}{1mm}
  \setlength{\itemsep}{0.25mm}
  \setlength{\parskip}{0.25mm}
  \setlength{\itemindent}{0mm}
  \setlength{\partopsep}{0mm}
  \setlength{\labelwidth}{15mm}
  \setlength{\leftmargin}{4mm}
}{\end{enumerate}}

%% TT's definitions
\def \endprf{\hfill {\vrule height6pt width6pt depth0pt}\medskip}

\DeclarePairedDelimiter{\floor}{\lfloor}{\rfloor}
\newcommand{\tsn}[1]{{\left\vert\kern-0.25ex\left\vert\kern-0.25ex\left\vert #1 
    \right\vert\kern-0.25ex\right\vert\kern-0.25ex\right\vert}}

%% TT's definitions
\newcommand{\Lci}[1]{{\cal{L}}^{\st,#1}}

\newcommand{\ME}[1]{\text{MM}^{#1}_{\text{emp}}}
\newcommand{\MA}[1]{\overline{\text{MM}}^{#1}_{\text{emp}}}

\newcommand{\frz}{\text{frz}}
\newcommand{\new}{\text{new}}
\newcommand{\lip}[1]{\text{Lip}(#1)}

\newcommand{\eps}{\varepsilon}
\newcommand{\cz}{\eps_0}

\newcommand{\st}{\star}

\newcommand{\NNZ}{\text{NNZ}}
\newcommand{\FLOP}{\text{FLOP}}
\newcommand{\ONNZ}{\overline{\text{NNZ}}}
\newcommand{\OFLOP}{\overline{\text{FLOP}}}

\newcommand{\beq}{\begin{equation}}
\newcommand{\ba}{\begin{align}}
\newcommand{\ea}{\end{align}}

\newcommand{\eeq}{\end{equation}}

\newcommand{\nn}{\nonumber}

%\newcommand{\kall}[1]{\prod_{i=1}^{#1}\tn{\lay{i}}}

%{L^{#1}\prod_{i=1}^{#1}\tn{\lay{i}}}

\newcommand{\Gb}{{\mtx{G}}}

\newcommand{\Lc}{{\cal{L}}}
\newcommand{\Lco}{{\cal{L}}^{\st}}

\newcommand{\Lch}{{\widehat{\cal{L}}}}

\newcommand{\Lcb}{{\bar{\cal{L}}}}

\newcommand{\Dc}{{\cal{D}}}

\newcommand{\Hb}{{\mtx{H}}}

\newcommand{\seq}{\text{seq}}
\newcommand{\pf}{\phi_{\text{frz}}}
\newcommand{\pfh}{{\phi}_{\text{frz}}}
\newcommand{\pn}{\phi_{\text{new}}}
\newcommand{\pnh}{\hat{\phi}_{\text{new}}}

\newcommand{\order}[1]{{\cal{O}}(#1)}
\newcommand{\ordet}[1]{{\widetilde{\cal{O}}}(#1)}
\newcommand{\ordett}[1]{{\widetilde{\cal{O}}}_{\tau}(#1)}

\newcommand{\z}{{\vct{z}}}

\newcommand{\tn}[1]{\|{#1}\|_{\ell_2}}

\newcommand{\lin}[1]{\|{#1}\|_{L_\infty}}

\newcommand{\tf}[1]{\|{#1}\|_{F}}

\newcommand{\bhi}{\vct{\phi}}

\newcommand{\Sc}{\mathcal{S}}
\newcommand{\Cc}{\mathcal{C}}
\newcommand{\Ccn}{\mathcal{C}^{\bPhi}_{\text{new}}}
\newcommand{\MM}{\text{MM}_{\text{frz}}}%^{\text{new}}
\newcommand{\MS}[1]{\text{MM}_{\text{seq}}^{\st,#1}}

\newcommand{\cc}[1]{\mathcal{C}(#1)}%{\mathcal{C}_{#1}}

\newcommand{\Zc}{\mathcal{Z}}
\newcommand{\noi}{\noindent}
\newcommand{\bPne}{{\boldsymbol{\Phi}}_{\text{new},\eps}}

\newcommand{\Rc}{\mathcal{R}}

\newcommand{\bt}{{\boldsymbol{\theta}}}

\newcommand{\bPhi}{{\boldsymbol{\Phi}}}
\newcommand{\bPf}{{\boldsymbol{\Phi}}_{\text{frz}}}
\newcommand{\bPn}{{\boldsymbol{\Phi}}_{\text{new}}}

\newcommand{\bhb}{\vct{\hb}}
\newcommand{\bGam}{{\boldsymbol{\Gamma}}}
\newcommand{\Bc}{\mathcal{B}}

\newcommand{\Mc}{\mathcal{M}}

\newcommand{\vb}{\vct{v}}

\newcommand{\Tn}{T}%_{\text{new}}}

\newcommand{\fb}{\vct{f}}

\newcommand{\ab}{\vct{a}}

\newcommand{\hb}{{\vct{h}}}
\newcommand{\hhb}{{\vct{\hat{h}}}}
\newcommand{\hh}{{\hat{h}}}

\newcommand{\Tc}{\mathcal{T}}
\newcommand{\Fc}{\mathcal{F}}

\newcommand{\Xc}{\mathcal{X}}
\newcommand{\Yc}{\mathcal{Y}}

%%YP's macros

\newcommand{\m}{\vct{m}}
\newcommand{\mn}{\vct{m}^{\text{new}}}
\newcommand{\mnt}{\vct{m}^{\text{new}}_t}
\newcommand{\ma}{\vct{m}^{\text{frz}}}
%{\vct{m}^{\text{free}}}

%\newcommand{\lip}{L}

\newcommand{\bz}{B}
\newcommand{\bo}{\bar{B}}

\newcommand{\x}{\vct{x}}

\newcommand{\y}{\vct{y}}

\newcommand{\W}{\mtx{W}}

\newcommand{\Wc}{{\cal{W}}}

\newcommand{\bgl}{{~\big |~}}

%--------------

% EJC's macros

\definecolor{emmanuel}{RGB}{255,127,0}

\newcommand{\R}{\mathbb{R}}
\newcommand{\Pro}{\mathbb{P}}

\newcommand{\E}{\operatorname{\mathbb{E}}}

\newcommand{\vct}[1]{\bm{#1}}
\newcommand{\mtx}[1]{\bm{#1}}

% MS's macros

\newcommand{\Vb}{{\mtx{V}}}

\newcommand{\red}{\textcolor{black}}
%[1]{\textcolor{red}{#1}}% \textcolor{red}}
\newcommand{\blue}{}%\textcolor{blue}}
%\textcolor{blue}}

\newcommand{\app}{}%\textcolor{blue}}

\usepackage{hyperref}
\usepackage{url} 
\usepackage{algpseudocode}
\usepackage{algorithm}
\usepackage{wrapfig}

% Definitions of handy macros can go here

% Heading arguments are {volume}{year}{pages}{date submitted}{date published}{paper id}{author-full-names}

%\jmlrheading{1}{2000}{1-48}{4/00}{10/00}{meila00a}{Marina Meil\u{a} and Michael I. Jordan}

% Short headings should be running head and authors last names

\ShortHeadings{Provable and Efficient Continual Representation Learning}{Li, Li, Asif and Oymak}
\firstpageno{1}

\begin{document}

\title{Provable and Efficient Continual Representation Learning}

\author{\name Yingcong Li \email yli692@ucr.edu \\
       \addr Department of Electrical and Computer Engineering\\
       University of California, Riverside
       %\\Riverside, CA 92521, USA
       \AND
       \name Mingchen Li \email mli176@ucr.edu \\
       \addr Department of Computer Science and Engineering\\
       University of California, Riverside
       %\\Riverside, CA 92521, USA
       \AND
       \name M. Salman Asif \email sasif@ece.ucr.edu \\
       \addr Department of Electrical and Computer Engineering\\
       University of California, Riverside
       %\\Riverside, CA 92521, USA
       \AND
       \name Samet Oymak \email oymak@ece.ucr.edu \\
       \addr Department of Electrical and Computer Engineering\\
       University of California, Riverside
       %\\Riverside, CA 92521, USA
}

\editor{}

\maketitle

\begin{abstract}
In continual learning (CL), the goal is to design models that can learn a sequence of tasks without catastrophic forgetting. Despite rich set of CL techniques, relatively little understanding exists on how representations built by previous tasks benefit new tasks that are added to the network. To this aim, we study \emph{continual representation learning} (CRL) problem where we learn an evolving representation as new tasks arrive. Focusing on zero-forgetting methods where tasks are embedded in subnetworks, we first provide experiments demonstrating CRL can significantly boost sample-efficiency when learning new tasks. To explain this, we establish sample complexity and generalization error bounds for new tasks which formalize the benefits of previously-learned representations. Our analysis and experiments show that CL benefits more when the initial tasks have large sample size and high representation diversity. Diversity ensures that adding new tasks incurs small representation mismatch while requiring few additional trainable weights. Finally, we study computational-efficiency of CRL and propose a novel variant of the PackNet algorithm that employs joint channel \& weight pruning. Our method embeds tasks in channel-sparse subnets requiring up to 80\% less FLOPs to compute while approximately retaining accuracy and is shown to be competitive with various baselines.
%In summary, this work takes a step towards data and compute-efficient CL with a representation learning perspective.
\end{abstract} 
%\jmlrm{a list of five key-words}
\begin{keywords}
  {continual learning, multitask representation learning, pruning, generalization bounds, efficient architectures}
\end{keywords}

\section{Introduction}
%In neural network, continual learning mimics humans' ability of
%Humans have innate abilities to acquire knowledge and transfer experience for future learning, referred to as continual learning or lifelong learning
%Continual learning or lifelong learning aims to learn a model for numerous tasks sequentially, without 

%Continual learning~\cite{aljundi2017expert,kirkpatrick2017overcoming,rannen2017encoder,chen2018lifelong} aims to train a model for a sequence of tasks without catastrophic forgetting~\cite{mccloskey1989catastrophic,mcclelland1995there,pfulb2019comprehensive}.

\emph{Continual learning} (CL) or lifelong learning aims to build a model for a non-stationary and never-ending sequence of tasks, without access to previous or future data (\cite{thrun1998lifelong,chen2018lifelong,parisi2019continual}). The main challenge in CL is that a standard learning procedure usually results in performance reduction on previously trained tasks if their data is not available. The phenomenon is termed as \emph{catastrophic forgetting} (\cite{mccloskey1989catastrophic,kirkpatrick2017overcoming,pfulb2019comprehensive}).

Importance of continual learning in real-life inference and decision making scenarios led to a rich set of CL techniques~(\cite{delange2021continual,van2019three})\ylm{methods including replay-based, regularization-based, and architecture-based strategies}. However, the theoretical principles of continual learning is relatively less understood and the progress is under-whelming compared to the algorithmic advances despite recent progress {(see \cite{bennani2020generalisation,doan2021theoretical,lee2021continual})}. In this work, for the first time, we investigate the problem of \emph{Continual Representation Learning} (CRL) to answer

% \hspace{-10pt}
\begin{center}
\fbox{\centering\begin{minipage}{1.0\textwidth}
\centering\textit{
What are the statistical benefits of previous feature representations for learning a new task? Can we build an insightful theory explaining empirical performance?}
%tasks in continual representation learning? What are the fundamental principles to design an efficient continual learning algorithm.}
\end{minipage}}
\end{center}

%the following core questionunderstand how the feature representations learned for previous tasks can provably help new tasks. Besides developing 

Our key contribution is addressing both questions affirmatively. We develop our algorithm and theory for architecture-based zero-forgetting CL which includes  PackNet~(\cite{mallya2018packnet}), CPG~(\cite{hung2019compacting}), and RMN~(\cite{wu2020understanding}). These methods eliminate forgetting by training a sub-network for each task and freezing the trained parameters. At a high level, all of these methods inherently have the potential of continual representation learning by allowing new tasks to reuse the frozen feature representations built for earlier tasks. However, quantifying the empirical benefits/performance of these representations and building the associated theory have been elusive. We overcome this via innovations in experiment design, theory, and algorithms:

\begin{figure}[t]
% \begin{wrapfigure}{r}{0.5\textwidth}
\vspace{-20pt}
\centering
\begin{minipage}[t]{.65\textwidth}
  \centering
  \begin{tikzpicture}
  \hspace{-12pt}
        \node at (0,0) [scale=1.1] {\includegraphics[width=\linewidth]{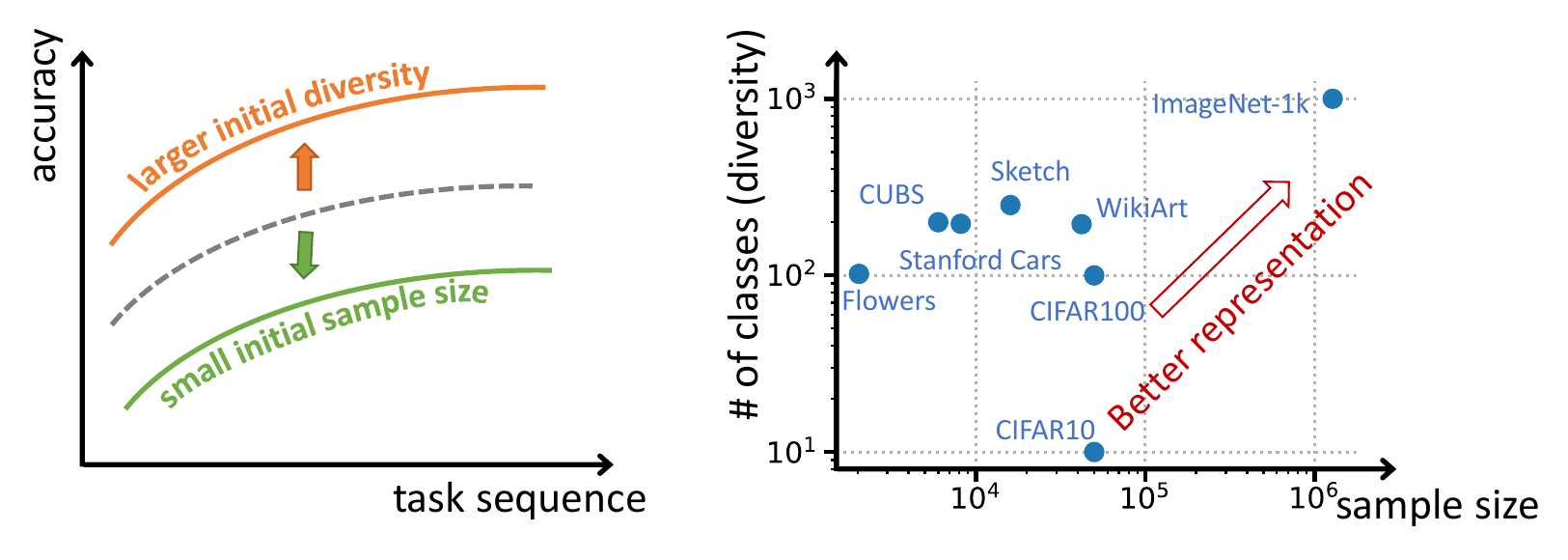}};
        \node at (-3.2,-1.8) [scale=0.8]{(a)};
        \node at (2.3, -1.8) [scale=0.8]{(b)};
    \end{tikzpicture}\vspace{0pt}
	%does not give enough freedom to tasks decreasing performance. Since FLOPs set the constraint on the model size and $\alpha$ decides the number new trainable weights, setting both of them large leads to a failure where most weights are trained and frozen in first several tasks (green and red lines). On the opposite, the blue line shows that smaller $\alpha$ and FLOPs also lead to a catastrophic failure because there are too few trainable parameters for each task.  	

\end{minipage}
\caption{{We show that task diversity and sample size are two essential factors in continual representation learning. In (a), dashed black curve depicts the test accuracy when CL tasks arrive in random order. The orange and green curves depict the test accuracies when tasks are ordered with decreasing diversity (more diverse first) and increasing sample size (small sample first) respectively. Figure (b) displays the \# of classes (a natural proxy for diversity) and sample size for various datasets. %$\alpha=0.1$ strikes a balance between these extremes.
	}}	\label{fig:intro}\vspace{-7pt}

% \end{wrapfigure}
\end{figure}

\noindent$\bullet$ \textbf{Theoretical and empirical benefits of continual representations.} We establish theoretical results and sample complexity bounds for CRL by using tools from empirical process theory. Within our model, a new task uses frozen feature map $\pf$ of previous tasks and learns an additional task-specific representation $\pn$. For PackNet (\cite{mallya2018packnet}), $\pf$ corresponds to all the nonzero weights so far and $\pn$ is the nonzeros allocated to the new task, thus $\pf$ requires a lot more data to learn.  Our theory (see Section~\ref{sec:crl_theory}) explains (1) how the new task can reuse the frozen feature map to greatly reduce the sample size and (2) how to quantify the representational compatibility between old and new tasks. Specifically, we fully avoid the statistical cost of learning $\pf$ and replace it with a \emph{``representational mismatch''} term between the new task and frozen features. \red{Additionally, inspired by \cite{srebro2010smoothness,bartlett2005local}, we obtain faster rates using a localization argument for our CRL problem (see Section~\ref{sec:crl_theory}).} \app{We also extend our results from a single task to learning a sequence of tasks by quantifying the aggregate impact of finite data on representation $\pf$ which evolves as new tasks arrive (see Section~\ref{app C})}.

% where we emphasize sample complexity% which are also consistent with our theory
An important conclusion is that the ideal frozen representation $\pf$ should contain \textbf{diverse and high-quality features} so that it has small representational \emph{mismatch} with new tasks. {This conclusion is depicted in Figure~\ref{fig:intro} where we highlight the impact of diversity and sample size on task accuracy.} It is also in line with the intuitions from multitask representation learning (\cite{maurer2016benefit,tripuraneni2020theory}) and motivates the following CL principle supported by our experiments:
\begin{quote}
\hspace{-27pt}\boxed{\textit{First learn diverse and large-data tasks so that their representations help upcoming tasks.}}
\end{quote}
%\begin{center}
%\fbox{\begin{minipage}{1.0\textwidth}
%\centering\textit{
%First learn diverse and large-data tasks so that their representations help upcoming tasks.}
%%tasks in continual representation learning? What are the fundamental principles to design an efficient continual learning algorithm.}
%\end{minipage}}
%\end{center}

 %These intuitions are nicely . 
 Indeed, we show in Fig.~\ref{fig:LS} that a new task with small data achieves significantly higher accuracy if it is added later in the task sequence (so that $\pf$ becomes more diverse). Then, we show in Fig.~\ref{fig:diversity} that choosing the first task to be diverse (such as ImageNet) helps all the downstream tasks. Finally, we show in Fig.~\ref{fig:Sample} that it is better to first learn tasks with large sample sizes to ensure high-quality features. 
{Our results on the importance of task order also relate to curriculum learning (\cite{bengio2009curriculum}) where the agent gets to choose the order tasks are learned. However, instead of curriculum learning, which aims to learn one task from easy to hard, our work is based on a more general continual learning setup. Our conclusion on task diversity provides a new perspective on the learning order of curriculum learning that we should first learn more representative tasks instead of the easier tasks. }

\noindent$\bullet$ \textbf{Algorithms for inference-efficient CRL.} Zero-forgetting CL methods often need a large neural network (dubbed as supernetwork) to load numerous tasks into subnetworks, or dynamically expand the model to avoid forgetting (\cite{delange2021continual}). Thus, CL/CRL may incur a large computational cost during inference. This leads us to ask whether one can retain the accuracy benefits of CRL while ensuring that each new task utilizes an \textbf{inference-efficient} sub-network, thus achieving the best of both worlds. We quantify inference-efficiency via floating point operations (FLOPs) required to compute the task subnetwork. To this end, we propose Efficient Sparse PackNet (ESPN) algorithm (Fig.~\ref{overview fig}). In a nutshell, ESPN guarantees inference-efficiency by incorporating a channel-pruning stage within PackNet-style approaches. Via extensive evaluations, we find that, ESPN incurs minimal loss of accuracy while greatly reducing FLOPs (as much as 80\% in our SplitCIFAR-100 experiments, Table \ref{CIFARtable}).

{In summary, this work makes key contributions to continual representation learning from empirical, theoretical, and algorithmic perspectives. {We remark that our main work focuses on the task-incremental CL setting with parameter-isolation.} However, we anticipate that the high-level principles arising from our findings will inform alternative continual learning strategies as well. In the remainder of the paper, we discuss the related work, detail our empirical and theoretical findings on CRL, and present the ESPN algorithm and its evaluations. }
\section{Related Work}\label{app:related}

%Our contributions are closely related to the continual learning methods as well as representation learning theory. Due to limited space, the latter discussion is deferred to supplementary.

%\noindent \textbf{Continual learning.}
Our contributions are most closely related to the continual learning literature. Our theory and algorithm also connect to representation learning and neural network pruning.
%\& compression.

%or lifelong
\noindent\textbf{Continual learning.}
%Continual learning~\cite{aljundi2017expert,kirkpatrick2017overcoming,rannen2017encoder,chen2018lifelong} aims to train a model for a sequence of tasks without catastrophic forgetting~\cite{mccloskey1989catastrophic,mcclelland1995there,pfulb2019comprehensive}.
A number of methods for continual and lifelong learning have been proposed to tackle the problem of catastrophic forgetting and existing approaches can be broadly categorized into three groups~(\cite{delange2021continual}): replay-based~(\cite{lopez2017gradient,rebuffi2017icarl,rolnick2018experience,buzzega2021rethinking,aljundi2019gradient,borsos2020coresets}), regularization-based~(\cite{kirkpatrick2017overcoming,zenke2017continual,li2017learning,jung2020continual}), and architecture-based methods~(\cite{fernando2017pathnet,yoon2017lifelong,mallya2018piggyback,rusu2016progressive}). \red{\cite{lubana2021quadratic} shows that the quadratic regularization technique plays a rule in preventing catastrophic forgetting.} Recent work \cite{ramesh2021model} explores statistical challenges associated with continual learning in terms of the relatedness across tasks through replay-based strategies. In comparison, we focus on the benefits of representation learning and establish how representation built for previous tasks can drastically reduce the sample complexity on new tasks in terms of representation mismatch without access to past data. Another key difference is that our analysis allows for learning multiple sequential tasks rather than a single task. In our work, consistent with our theory, we focus on zero-forgetting CL (\cite{mallya2018packnet,hung2019compacting,mallya2018piggyback,wortsman2020supermasks,kaushik2021understanding,tu2020extending}), which is a sub-branch of architecture-based methods and completely eliminates forgetting. PackNet~(\cite{mallya2018packnet}), CPG~(\cite{hung2019compacting}), and RMN~(\cite{kaushik2021understanding}) train a sub-network for each task and implement zero-forgetting by freezing the trained parameters, while Piggyback (\cite{mallya2018piggyback}) trains masks only over pretrained model. Inspired by \cite{ramanujan2020s}, SupSup (\cite{wortsman2020supermasks}) trains a binary mask for each task while keeping the underlying model fixed at initialization. However, finding of \cite{frankle2018lottery} shows that a network can reduce by over 90\% of parameters without performance reduction. This inspires our weight-allocation strategy to adapt PackNet to more sparse sub-networks. Unlike PackNet which prunes the network by keeping the largest absolute weights in each layer and reuses all the frozen weights, CPG and RMN apply real-valued mask to each fixed entry and prune by keeping the largest values of the mask. SupSup is motivated by \cite{ramanujan2020s,zhou2019deconstructing,malach2020proving} which show that a sufficiently over-parameterized random network contains a sub-network with roughly the same accuracy as the target network without training. In essence, it aims to find masks only over random network, and it is adaptable to infinite tasks. However, it leads to inefficient inference (due to using the full network) and potentially large memory requirements (as one has to store a mask as large as supernet rather than a subnet).

In CL, in order to embed the super-network with many tasks or to achieve acceptable performance over a masked random network, sufficiently large networks are needed, which naturally leads to inefficiency during inference-time without proper safeguards. Addressing this challenge appears to be an unexplored avenue as far as we are aware. While in this work, we present Efficient Sparse PackNet (ESPN) that implements zero-forgetting CL and achieves state-of-the-art accuracy with less computational demand.

We also emphasize that there are several interesting works on the theory of continual learning such as \cite{lee2021continual, doan2021theoretical,bennani2020generalisation,mirzadeh2020understanding,yin2020optimization}. These works focus on NTK-based analysis for deep nets, theoretical investigation of orthogonal gradient descent (\cite{farajtabar2020orthogonal}), and task-similarity. However, to the best of our knowledge, ours is the first work on the representation learning ability and the associated data-efficiency.
% SupSup is among the stat-of-the-art methods for CL. However, 
% These non-forgetting CL methods have shown very promising results. However in order to embed the super-network with many tasks or to achieve acceptable performance over a masked random network, large networks or large masks are needed. For instance, SupSup is among the state-of-the-art methods for CL, but it has to build a binary mask for each new task that has as many entries as the number of parameters in the network. Additionally, without accounting for the network structure, the inference-time compute cost of these networks is high even for simple tasks. Our proposed  method is designed to overcome these challenges. In addition to achieving zero-forgetting, our algorithm also provides efficient inference in terms of FLOPs.

% \textcolor{red}{LIZZIE ADDS MORE HERE (specifically let us not miss citations of PackNet)}

%and continual learning
\noindent\textbf{Representation learning theory.} The rise of deep learning motivated a growing interest in theoretical principles behind representation learning. Similar in spirit to this project, \cite{maurer2016benefit} provides generalization bounds for representation-based transfer learning in terms of the Rademacher complexities associated with the source and target tasks. Some of the earliest works towards this goal include \cite{baxter2000model}~and linear settings of \cite{lounici2011oracle,pontil2013excess,wang2016distributed,cavallanti2010linear}. More recent works (\cite{hanneke2020no,lu2021power,kong2020meta,qin2022non,wu2020understanding,garg2020functional,gulluk2021sample,xu2021statistical,du2020few,tripuraneni2020theory,qin2022non,tripuraneni2020provable,sun2021towards,maurer2016benefit,arora2019theoretical,chen2021weighted,sun2021towards}) consider variations beyond supervised learning, concrete settings or established more refined upper/lower performance bounds. There is also a long line of works related to model agnostic meta-learning (\cite{finn2017model,denevi2019online,balcan2019provable,khodak2019adaptive}). {Unlike these works, we consider the CL setting where tasks arrive sequentially and establish how the representation learned by earlier tasks provably helps learning the new tasks with fewer samples and better accuracy.} \red{Similar to \cite{bartlett2005local,srebro2010smoothness}, we also obtain faster rate bounds using a localization argument but for continual learning problem with multiple tasks. To the best of our knowledge, existing works on multitask learning do not achieve fast rates except for linear representations (\cite{tripuraneni2020provable,sun2021towards}) which is a distinguishing feature of our bound.}

%\textcolor{red}{LIZZIE adds more discussions/citations here if necessary.}

%The pruning algorithms can be applied before, during, or after training a dense model \cite{lee2018snip,wang2020picking,jin2016training,oymak2018learning} and in this work we focus on after training. Related to over-parameterizarion, 
\noindent\textbf{Neural network pruning.} Our work is naturally related to neural network pruning methods and compression techniques as we embed tasks into sub-networks. Large model sizes in deep learning have led to a substantial interest in model pruning/quantization (\cite{han2015deep,hassibi1993second,lecun1990optimal}). DNN pruning has a diverse literature with various architectural, algorithmic, and hardware considerations (\cite{sze2017efficient,han2015learning}).  Here, we mention the ones related to our work. \cite{frankle2018lottery} empirically shows that a large DNN contains a small subset of favorable weights (for pruning), which can achieve similar performance to the original network when trained with the same initialization. \cite{zhou2019deconstructing,malach2020proving,pensia2020optimal} demonstrate that there are subsets with good test performance even without any training and provide theoretical guarantees. In relation, \cite{chang2021provable} establishes the theoretical benefits of training large over-parameterized networks to improve downstream pruning performance.

% \textcolor{red}{LIZZIE ADDS MORE HERE on SUPSUP and CHANNEL PRUNING}
Although weight pruning is proven to be a good way to reduce model parameters and maintain performance, practically, it does not lead to compute efficiency except for some dedicated hardwares~(\cite{han2016eie}). Unlike weight pruning, structured/channel pruning prunes the model at the channel level which results in a slim sub-network carrying much fewer FLOPs than the original dense model~(\cite{liu2017learning, zhuang2020neuron, wen2016learning, ye2018rethinking}). For example~\cite{wen2016learning,zhou2016less,alvarez2016learning,lebedev2016fast,he2017channel} prune models by adding a sparse regularization over model weights whereas \cite{liu2017learning, zhuang2020neuron} only add regularization over channel factors, and prune channels with lower scaling factors. However, these prior works do not focus on the scenario where almost all of FLOPs pruned, for example with only 1\% of original FLOPs remained. To achieve this goal, we present an innovative channel pruning method based on FLOP-aware penalization. Our technique is inspired from \cite{liu2017learning, zhuang2020neuron} (they use sparsity regularization over BatchNorm weights only). However, as demonstrated {in Appendix~\ref{pruning sec}}, our pruning method outperforms both of them in our evaluations.
\begin{figure*}[t]
\vspace{-7pt}
    \centering
    \begin{tikzpicture}
    \node at (0,0) {
    \includegraphics[width=1\textwidth]{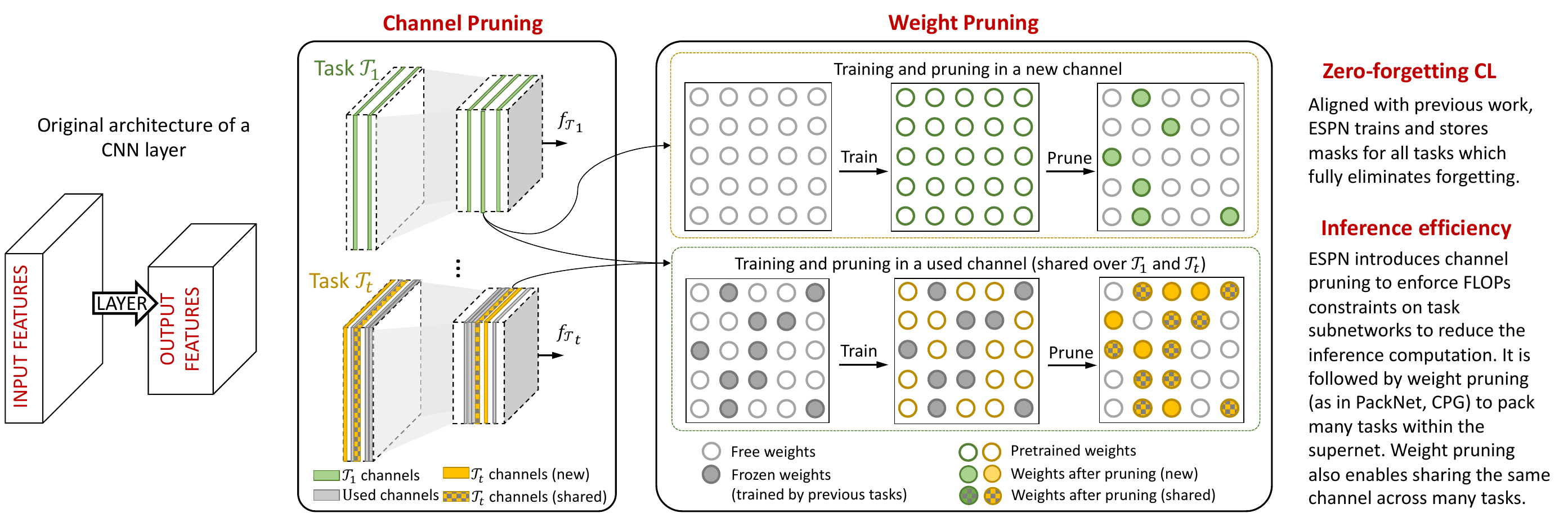}
    };
    \end{tikzpicture}\vspace{-7pt}
    \caption{This figure summarizes our ESPN algorithm that infuses inference efficiency in PackNet-style CL techniques by jointly pruning channels and weights. \textbf{Left side} displays the input and output feature tensors of a convolutional layer of the original supernet architecture. \textbf{Right side} depicts adding two tasks into the network.  \textbf{{Channel pruning}} over the input and output features of $\Tc_{1}$ to $\Tc_t$ is used to enforce FLOPs constrained by selecting a few channels out of the total channels (Green and Yellow planes in the network). We remark that used filters can be selected again and shared by multiple tasks to enable representation sharing. 
    % Pruned CNN layer uses $C_{\text{IN}}^t\times C_{\text{OUT}}^t$ sub-filters out of a total $C_{\text{IN}}\times C_{\text{OUT}}$. Thus task $\Tc_{t}$'s sub-network CNN layer size is $(C_{\text{IN}}^t\times K\times K\times C_{\text{OUT}}^t)$ and FLOPs are reduced by a factor of $(C_{\text{IN}}^tC_{\text{OUT}}^t)/(C_{\text{IN}}C_{\text{OUT}})$ compared to supernet. 
    \textbf{Weight Pruning} depicts the evolution of the sub-filters within channels as we add more tasks. Weight pruning within sub-networks enables the sub-filters to be shared across tasks. For each task, we firstly train all free weights on the task data (Green/Yellow rings). After training, we use magnitude pruning to sparsify sub-filters on the entire filter including frozen weights (Gray circles). For $\Tc_t$, since some channels are shared with previous tasks, the Gray circles are frozen and only the Yellow rings are trained. However during pruning, we consider all weights and pick both solid Yellow circles (new weights) from free weights and Yellow mosaic circles (used/shared weights) from frozen weights.}
    \label{overview fig}\vspace{-7pt}
\end{figure*}
\section{Efficient Sparse PackNet (ESPN) Algorithm}\label{sec: espnalgo}
 We will use PackNet and ESPN throughout the paper to study continual representation learning. Thus, we first introduce the high-level idea of our ESPN algorithm which augments PackNet. Suppose we have a single model referred to as supernetwork and a sequence of tasks. Our goal is to train and find optimal sparse sub-networks within the supernet that satisfy both FLOPs and sparsity restrictions without any performance reduction or forgetting of earlier tasks. Figure~\ref{overview fig} illustrates our proposed algorithm. We propose a joint channel and weight pruning strategy in which FLOPs constraint (in channel pruning) is important for efficient inference and sparsity constraint (in weight pruning) is important for \emph{packing} all tasks into the network even for a large number of tasks $T$. In essence, ESPN equips PackNet-type methods with inference-efficiency using an innovative FLOP-aware channel pruning strategy. \app{Section \ref{sec: espn_detail} and appendix provide implementation details and inference-time evaluations on ESPN.}% are deferred to .% for reader interests. 

\noindent{\textbf{Notation.}} ESPN admits FLOPs constraint as an input parameter, which is a critical aspect of the experimental evaluations. Let MAX\_FLOPs be the FLOPs required for one forward propagation through the dense supernetwork. In our evaluations, for $\gamma\in (0,1]$, we use \textbf{ESPN-$\gamma$} to denote our CL Algorithm \ref{algo 1} in which each task obeys the FLOPs constraint $\gamma\times \text{MAX\_FLOPs}$. Similarly, \textbf{Individial-$\gamma$} will be the baseline that each task is trained individually (from scratch) on the full supernetwork while using at most $\gamma\times \text{MAX\_FLOPs}$. \textbf{Individual} is same as \textbf{Individual}-1 where we train the whole network without pruning.
% \blue{ without weight pruning}.

% \begin{figure}[t]
% \vspace{-7pt}
% \centering
% \begin{minipage}[t]{.48\textwidth}
%   \centering
%   \begin{tikzpicture}
%         \node at (0,0) [scale=0.7] {\includegraphics[width=\linewidth]{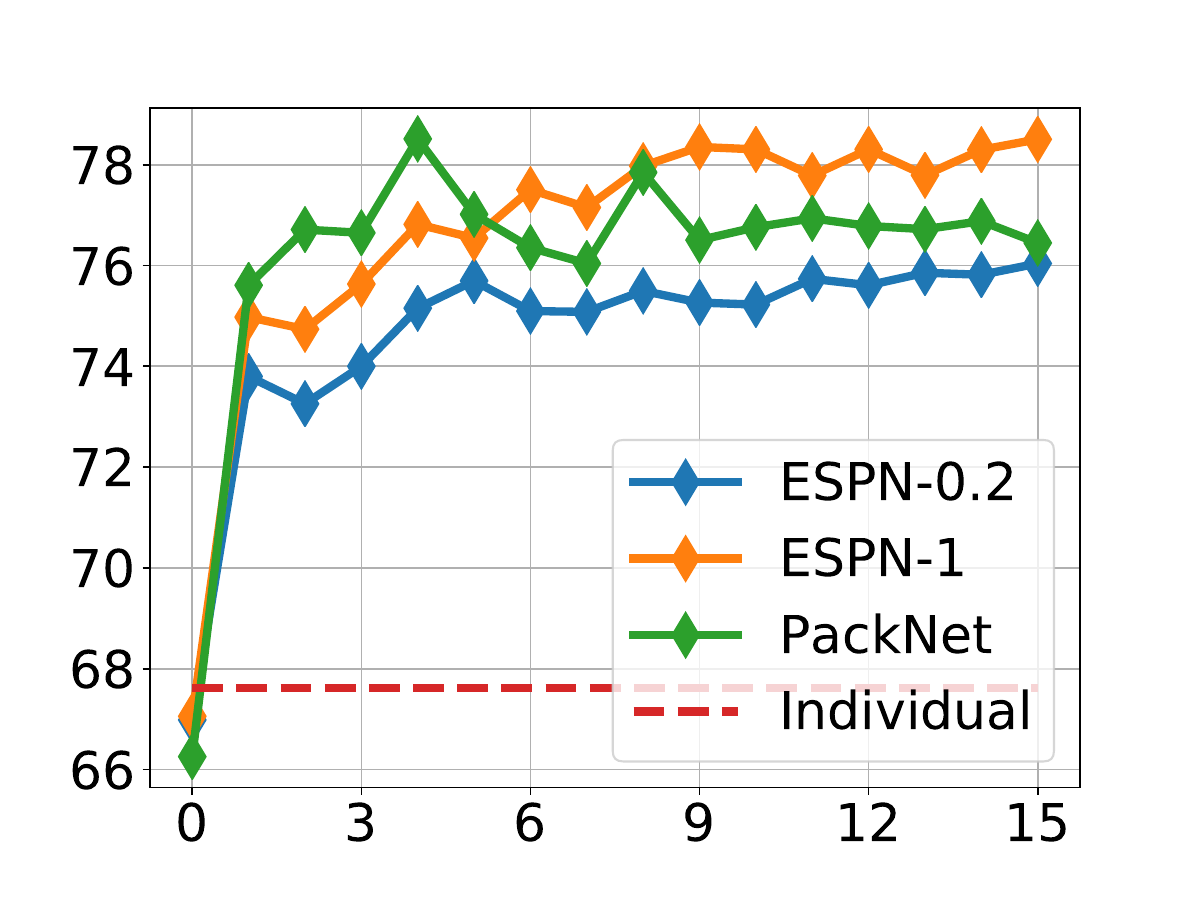}};
%         %\node at (0,-2.2) [scale=1.0] {Supernet};
%         %\node at (-2.85,0) [scale=1.0,rotate=90] {Accuracy};
%     \end{tikzpicture}
% \end{minipage}\vseven
% \caption{{
% 	%We study the data efficiency in CL using SplitCIFAR100.
% 	$15$ SplitCIFAR100 tasks $\Dc_{cl}=\{\Tc_1,..,\Tc_{15}\}$ with original sample sizes are trained with different CL methods to obtain models $s_0$ to $s_{15}$. Here $s_i$ is the supernet after learning first $i$ CL tasks. The remaining 5 tasks $\Dc_t=\{\Tc_{16},...,\Tc_{20}\}$ only have 10\% of their original training samples. These are trained on top of $s_i$ to test the knowledge transfer from previous CL tasks. We plot the average test accuracy on these 5 tasks. The results show that ESPN and PackNet greatly benefits from previously learned tasks. Our Theorem \ref{cl thm} provides theoretical support for these observations.
% 	% (250 in total for each task)
% 	%transferability of the knowledge learned
% 	% for $0\leq k\leq 15$
% 	% ESPN performance gracefully increases for larger $k$. 
% 	%Observe that, the performance of ESPN gradually increase when increasing the number of learned tasks reveals its outstanding ability on transferring knowledge from previous learned tasks.
% 	}}\vspace{-7pt}\label{fig:LS}
% \end{figure}

\begin{figure*}[t]
\vspace{-7pt}
\centering
\begin{subfigure}[t]{.315\textwidth}
  \centering
  \begin{tikzpicture}\hspace{-10pt}
        \node at (0,0) [scale=1.15] {\includegraphics[width=\linewidth]{figs/LS.pdf}};
        \node at (0,-2.1) [scale=0.8] {{Task ID $t$ of the new task}};
        \node at (-2.6,0) [scale=0.8,rotate=90] {Accuracy};
    \end{tikzpicture}
        \centering
  \caption{Learning the small task later down the line helps noticeably. Here, we first learn $t-1$ SplitCIFAR100 tasks continually with full datasets and add task $t$ with a small dataset.}
  %We find that larger $t$ achievesIt is better to add a new task with small dataset after learning more previous tasks.}}
  %Benefit in reducing sample complexity.
	%We study the data efficiency in CL using SplitCIFAR100.
	%$15$ SplitCIFAR100 tasks $\Dc_{cl}=\{\Tc_1,..,\Tc_{15}\}$ with original sample sizes are trained with different CL methods to obtain models $s_0$ to $s_{15}$. Here $s_i$ is the supernet after learning first $i$ CL tasks. The remaining 5 tasks $\Dc_t=\{\Tc_{16},...,\Tc_{20}\}$ only have 10\% of their original training samples. These are trained on top of $s_i$ to test the knowledge transfer from previous CL tasks. We plot the average test accuracy on these 5 tasks. The results show that ESPN and PackNet greatly benefits from previously learned tasks. Our Theorem \ref{cl thm} provides theoretical support for these observations.
	% (250 in total for each task)
	%transferability of the knowledge learned
	% for $0\leq k\leq 15$
	% ESPN performance gracefully increases for larger $k$. 
	%Observe that, the performance of ESPN gradually increase when increasing the number of learned tasks reveals its outstanding ability on transferring knowledge from previous learned tasks.
	\hspace{-10pt}\label{fig:LS}
\end{subfigure}\hspace{10pt}\begin{subfigure}[t]{.315\textwidth}
  \centering
  \begin{tikzpicture}\hspace{-10pt}
        \node at (-0,0) [scale=1.15] {\includegraphics[width=\linewidth]{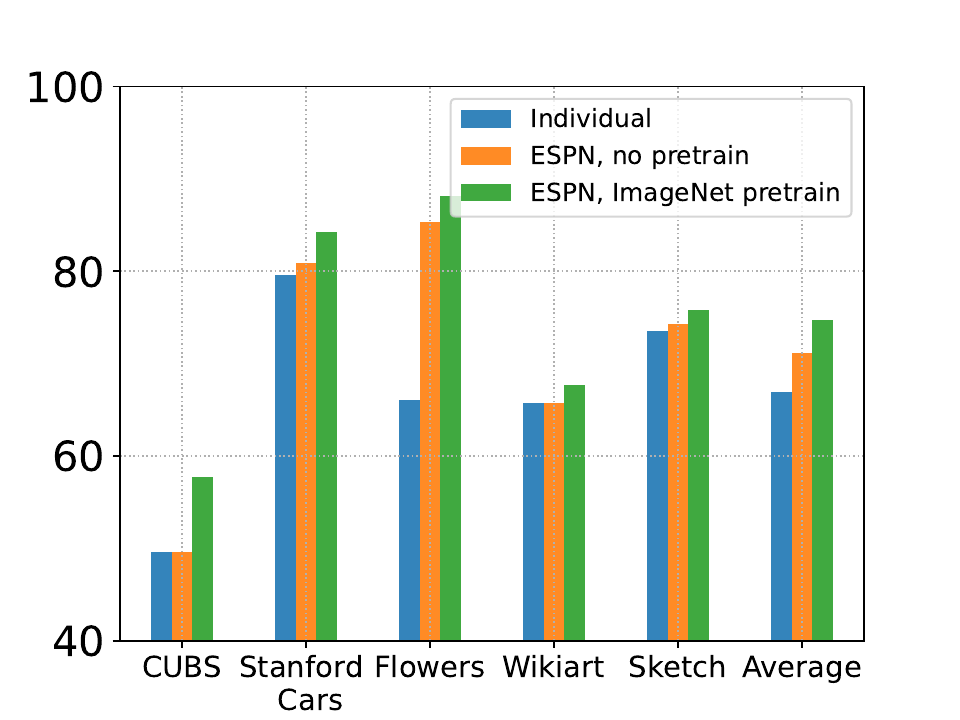}};
        \node at (0,-2.1) [scale=0.8] {Tasks};
        %\node at (-2.95,0) [scale=0.9,rotate=90] {Accuracy};
    \end{tikzpicture}
    \caption{{Very diverse initial tasks boost all downstream tasks. Here, Green bars continually learn (CUBS to Sketch) with ImageNet as first task, whereas Orange bars are without.
    % We train 5 tasks on 3 different setup to show the benefit of task diversity. The orange and green bars display the test accuracy of ESPN with random initialization and ImageNet pre-train, respectively. Observed that ESPN with no pretrain gradually outperforms individual training with increasing of learned tasks. ImageNet pretrain further improves the continual representation learning on all tasks, showing that the model pretrained with a diverse dataset can boost the continual learning. 
    }}\label{fig:diversity}
	
\end{subfigure}\hspace{10pt}\begin{subfigure}[t]{.315\textwidth}
  \centering
   \begin{tikzpicture}\hspace{-10pt}
        \node at (0,-0.) [scale=1.2] {\includegraphics[width=\linewidth]{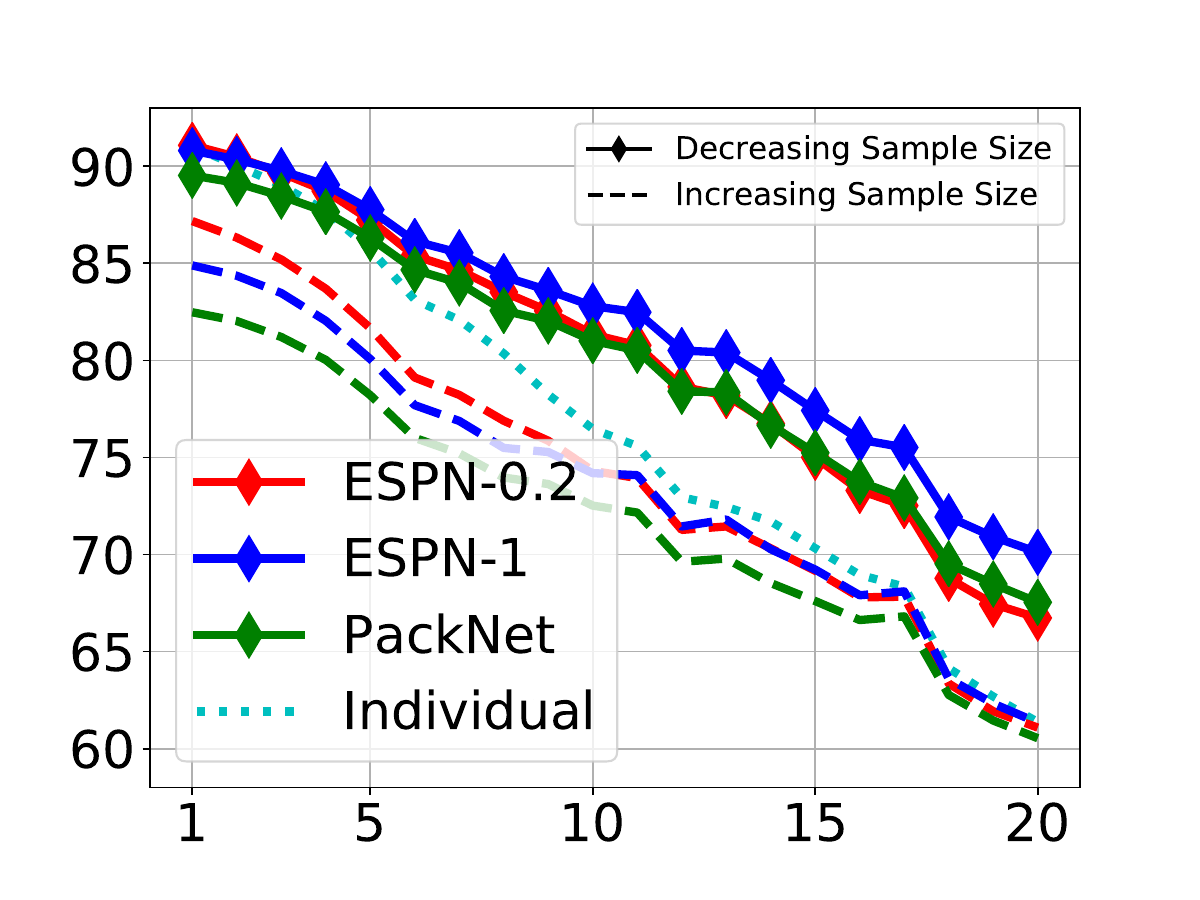}};
        \node at (0,-2.1) [scale=0.8] {Task ID (larger ID has less data)};
        %\node at (-2.95,0) [scale=0.9,rotate=90] {Accuracy};
    \end{tikzpicture}
    \centering
    \caption{{Learning large sample tasks first enables building better representations for future small tasks. Here, solid curves train large tasks first whereas dashed curves train small tasks first.
    % We train SplitCIFAR100 dataset but with exponentially decreasing/increasing sample size, where task ID 1 is the largest including the all samples while task ID 20 is the smallest with only 5\% of original training data. Solid and dashed curves are training ordered with decreasing and increasing sample size, respectively. The big gap between them shows the benefits of larger initial sample size in CL. Dotted curves are individual results and the improvement of solid curves from them is another evidence showing that CL benefits in reducing sample complexity.
    }}\label{fig:Sample}
	
\end{subfigure}\vspace{-10pt}
\caption{{These figures show motivating empirical findings on how representations of earlier tasks help new tasks in continual learning (see Sec.~\ref{sec:crl_exp} for further discussion). ESPN-$\gamma$ is our inference-efficient algorithm (Sec.~\ref{sec: espnalgo}) and with $\gamma=0.2$, each task requires only 20\% compute during inference. In Section \ref{sec:crl_theory} we develop theoretical analysis and provable guarantees for CRL to shed light on these findings.}}\label{figure 2 label} \vspace{-7pt}
\end{figure*}

% \section{Theoretical Analysis of Data Efficiency and Continual Representation Learning}
%We present experimental evidence of 1) how CRL helps data efficiency of new tasks and 2) the role of task diversity and sample size in CL in .
\section{Empirical and Theoretical Insights for Continual Representation Learning}\label{sec:crl}
In this section, we discuss continual learning from the representation learning perspective. {We first present our experimental insights which show that (1) features learned from previous tasks help reduce the sample complexity of new tasks and (2) the order of task sequence (in terms of diversity and sample size) is critical for the success of CRL. In Section~\ref{sec:crl_theory}, we present our theoretical framework and a rigorous analysis in support of our experimental findings.}

% so that without good representations, they will achieve low accuracy

\subsection{Empirical Investigation of CRL}\label{sec:crl_exp}
We further elucidate upon Figure \ref{figure 2 label} and discuss the role of sample size (for both past and new tasks) and task diversity.

\noindent\textbf{Investigating data efficiency (Fig \ref{fig:LS})} A good test to assess benefit of CRL is by constructing settings where new tasks have fewer samples. Consider SplitCIFAR100 for which $100$ classes are randomly partitioned into $20$ tasks. We partition the tasks into two sets: a continual learning set $\Dc_{cl}=\{\Tc_1,\dots,\Tc_{15}\}$ and a test set $\Dc_t=\{\Tc_{16},\dots,\Tc_{20}\}$. Test set is used to assess data efficiency and therefore, (intentionally) contains only 10\% of the original sample size (250 samples per task instead of 2500). We first train the network sequentially using  $\Dc_{cl}$  via ESPN/PackNet and create checkpoints of the supernetwork at different task IDs: At time $t$, we get a supernet trained with $\Tc_1,\dots,\Tc_t$, for $t\leq15$. {$t=0$} stands for the initial supernet without any training. Then we assess the representation quality of different supernet by individually training tasks in $\Dc_t$ on it. Fig.~\ref{fig:LS} displays the test accuracy of tasks in $\Dc_t$ where we used SplitCIFAR100 setting detailed in Sec.~\ref{sec:setting}. Figure~\ref{fig:LS} shows that ESPN-0.2,  ESPN-1, and PackNet methods all benefit from features trained by earlier tasks since the accuracy is above $75\%$  when we use supernets trained sequentially with multiple tasks. In contrast, individual learning trains separate models for each task where no knowledge is transferred; the accuracy is close to  $68\%$. Note that the performance of ESPN gradually increases with the growing number of continual tasks. This reveals its ability to successfully transfer knowledge from previous learned tasks and reduce sample complexity.
%\Salnote{This is a problem. The experiment setting is described in Section 5, we are discussing them in Sec. 4}
%\ylm{Should we emphasize that pretraining here is see ImageNet as first CL tasks, not just use a ImageNet pretrained model to finetuning.}

\noindent\textbf{Importance of task order and   diversity.} To study how task order and diversity benefits CL, similar to \cite{mallya2018packnet,hung2019compacting}, we use $6$ image classification tasks, where ImageNet-1k~(\cite{krizhevsky2012imagenet}) is the first task, followed by CUBS~(\cite{wah2011caltech}), Stanford Cars~(\cite{krause20133d}), Flowers~(\cite{nilsback2008automated}), WikiArt~(\cite{saleh2015large}) and Sketch~(\cite{eitz2012humans}). Intuitively, ImageNet should be trained first because of its higher diversity. 
Figure~\ref{fig:diversity} shows the accuracy improvement on the $5$ tasks that follow ImageNet pretraining compared to individual training. 
The results are displayed in Fig.~\ref{fig:diversity} where Green bars are CL with ImageNet as the first task, Orange is CL without ImageNet, and Blue is Individual training. In essence, this shows the importance of initial representation diversity in CL since results with the ImageNet pretraining (Green) are consistently and strictly better than no pretraining (Orange) and Individual (Blue).
We note that related experiments for zero-forgetting CL have been reported in \cite{hung2019compacting,mallya2018piggyback,mallya2018packnet,mallya2018piggyback,tu2020extending}. Unlike these works, experiments in Figure~\ref{fig:diversity} aim to isolate the representation learning benefit of the ImageNet dataset by training other 5 tasks continually with/without ImageNet. {We defer implementation details to the appendix.}
%These provide further motivation for the benefits of CRL and our results in Section \ref{sec:crl_theory}. 

%

%. Here, we train the above 5 tasks (except ImageNet) using ESPN with a randomly initialized model (shown in orange) and an ImageNet pretrained model (shown in green). We also compare the results to individual training performance (shown in blue), where each task is trained with a separate model.

% \begin{figure}[t]
% \vspace{-7pt}
% \centering
% \begin{minipage}[t]{.48\textwidth}
%   \centering
%   \begin{tikzpicture}
%         \node at (0,0) [scale=0.75] {\includegraphics[width=\linewidth]{figs/diversity.pdf}};
%         %\node at (0,-2.4) [scale=1.0] {Task ID};
%         \node at (-2.95,0) [scale=0.9,rotate=90] {Accuracy};
%     \end{tikzpicture}
	
% \end{minipage}\vseven
% \caption{{
% 	\red{Not done.}
% 	}}\vspace{-7pt}\label{fig:diversity}
% \end{figure}

\noindent\textbf{Importance of sample size.}
Finally, we show that the sample size is also critical for CRL because one can build higher-quality (less noisy) features with more data. To this end, we devise another experiment based on SplitCIFAR100 dataset. Instead of using original tasks each with 5 classes and 2,500 samples, we train the first task (task ID 1 in Fig.~\ref{fig:Sample}) with 2,500 samples, then decrease the sample size for all the following tasks using the rule $\floor{2,500\times (1/20)^{\text{(ID-1)}/19}}$ until the last task (task ID 20 in Fig.~\ref{fig:Sample}) has only 125 training samples. Figure~\ref{fig:Sample} presents the results, where solid curves are obtained by training Task ID 1 to 20 with decreasing sample size, dashed curves are for training Task ID 20 to 1 with increasing sample size, and dotted curves are for individual training where task order does not matter. The accuracy curves are smoothed with a moving average and displayed in the decreasing order from Task ID 1 to 20. The results support our intuition that training large sample tasks first (decreasing order) performs better, as larger tasks build high quality representations that benefit generalization for future small tasks with less data. More strikingly, the dotted Individual line falls strictly between solid and dashed curves. This means that increasing order actually \emph{hurts accuracy} whereas decreasing order \emph{helps accuracy} compared to training from scratch (i.e.~no representation). Specifically, decreasing helps (solid$>$dotted) on the right side of the figure where tasks are small (thanks to good initial representations) whereas increasing hurts on the left side where tasks are large. The latter is likely due to the fact that, adding a large task requires a larger/better subnetwork to achieve high accuracy, however, since we train small tasks first, supernetwork runs out of sufficient free weights for a large subnetwork. %\ylm{since we add large tasks later, supernetwork runs out of free weights to train them properly.}
% than increasing samples as representations of small initial tasks are intuitively not useful
%In fact, on the right hand side of the figure, it can be seen that, dashed curves coincide with individual training. This shows that starting with small samples (which is the  increasing data samples provides little representation learning benefit. \Salnote{Please check this last line. I modified it according to my understanding.} 

% \input{sec/fig/sample_size}
\subsection{Theoretical Analysis and Performance Bounds for Continual Representation Learning}\label{sec:crl_theory}

In this section, we provide theoretical analysis to explain how CRL provably promotes sample efficiency and benefits from initial tasks with large diversity and sample size.

%Complementing these observation, following discussion provides theoretical insights into CRL and its statistic benefits.%and generates useful feature representations so that a relatively simple classifier $h$ can achieve small risk (similar in spirit to our SplitCIFAR100 experiments).}
Denote $[n]=\{1,2,\dots,n\}$. We use $\ordet{\cdot}$ to denote equality up to a factor involving at most logarithmic terms. Following our experiments as well as \cite{maurer2016benefit}, a realistic model for deep representation learning is the compositional hypothesis $f=h\circ \phi$ where $h\in\Hb$ is the classifier head and $\phi\in\bPhi$ is the shared backend feature extractor. In practice, $\phi$ has many more parameters than $h$. To model continual learning, let us assume that we already trained a frozen feature extractor $\pf\in\bPf$\ylm{ on earlier tasks {from which $\phi$ can be learned faster}}. {Here $\Hb,\bPhi,\bPf$ are the hypothesis sets to learn from.} Suppose we are now given a set of $\Tn$ new tasks represented by independent datasets $\Sc_t=\{(\x_{ti},y_{ti})\}_{i=1}^N{\subset \Xc\times \Yc}$, each drawn i.i.d.~from different distributions $\Dc_t$ for $1\leq t\leq T$. \ylm{$\Xc,~\Yc$ are the sets of feasible input features and labels respectively.} Our goal is to build the hypotheses $(f_t)_{t=1}^{\Tn}:\Xc\rightarrow\R$ for these new tasks with small sample size $N$ while leveraging $\pf\in\bPf$. 

%and the choice of $\bPn$ depends on the representation diversity learned from
 \textbf{CRL setting.} In a realistic CL setting the new tasks are allowed to learn new features. We will capture this with an \emph{incremental} feature extractor {$\pn\in \bPn$, and represent the hypothesis of each task via the composition $f_t=h_t\circ \phi$ where $\phi=\pn+\pf$. For PackNet/ESPN, $\bPn$ corresponds to the free/trainable weights allocated to the new task, $\pf$ corresponds to the trained weights of the earlier tasks and $\phi$ corresponds to the eventual task subnetwork and its weights. We will evaluate the quality of $\phi$ (which lies in the Minkowski sum $\bPn+\bPf$) with respect to a global representation space $\bPhi$ which is chosen to be a superset: $\bPn+\bPf\subseteq \bPhi$. For instance, in PackNet, $\bPn+\bPf$ denotes the sparse sub-networks allocated to the \ylm{new and previous} tasks whereas $\bPhi$ corresponds to the full supernet.}

%with define the combined representation space $\bPhi$ via the Minkowski sum \red{$\bPhi=\bPn+\bPf$}.}

%In practice, the search space for the incremental representation $\bPn$ is determined by us and its complexity depends on the size of the subnetwork dedicated to the new tasks. not unique and it corresponds to weight allocation level, we have $\bPn\in\{\bPn^1,\dots\bPn^k\}$, where $|\bPn^1|<\dots|\bPn^k|$ and $\bPn^k=\bigcup_{i=1}^k\bPn^i$ corresponding to weight allocation $\alpha=1$. 

This motivates us to pose a CRL problem that builds a continual representation by searching for $\pn$ and combining with $\pf$. Let $\hb=(h_t)_{t=1}^\Tn\in \Hb^\Tn$ denote all $\Tn$ task-specific classifier heads, we solve% \red{and we solve the following CRL problem}
\begin{align}
\underset{\phi=\pn+\pf}{\underset{\hb\in\Hb^\Tn,\pn\in\bPn}{\arg\min}}&\Lch(\hb,\phi):=\frac{1}{\Tn}\sum_{t=1}^\Tn \Lch_{\Sc_t}(h_t\circ\phi)\nn\\
\text{WHERE}\quad &\Lch_{\Sc_t}(f):=\frac{1}{N}\sum_{i=1}^N \ell(y_{ti},f(\x_{ti})) \tag{CRL}\label{crl}.
%&\text{s.t.}\quad\Lch_t(f)=\frac{1}{n}\sum_{i=1}^n \ell(y_{ti},f(\x_{ti})),~.\nn
\end{align}
% \red{
% \begin{align}
%     \underset{\phi=\pn+\pf}{\underset{\hb\in\Hb^\Tn,\pn\in\bPn}{\arg\min}}\Lch(\hb,\phi):=\frac{1}{N\Tn}\sum_{t=1}^\Tn\sum_{i=1}^N \ell(y_{ti},h_t\circ\phi(\x_{ti})) \tag{CRL}\label{crl}.
%     %&\text{s.t.}\quad\Lch_t(f)=\frac{1}{n}\sum_{i=1}^n \ell(y_{ti},f(\x_{ti})),~.\nn
%     \end{align}
% }
\noindent \textbf{Intuition:} \eqref{crl} aims to learn the task-specific headers $\hb$ and the shared incremental representation $\pn$. Let $\cc{\cdot}$ be a complexity measure for a function class (e.g.~VC-dimension). Intuitions from the MTL literature would advocate that when the total sample size obeys $N\times \Tn\gtrsim \Tn\cc{\Hb}+\cc{\bPn}$, then \eqref{crl} would return generalizable solutions $\hhb,\pnh$. This is desirable as in practice $\pf$ is a much more complex hypothesis obtained by training on many earlier tasks. Thus, from continual learning perspective, theoretical goals are:
\begin{enumerate}
    \item The sample size should only depend on the complexity $\cc{\bPn}$ of the incremental representation rather than the combined complexity that can potentially be much larger ($\cc{\bPn}+\cc{\bPf}\gg \cc{\bPn}$).
    \item To explain Figure \ref{fig:LS}, we would like to quantify how frozen representation $\pf$ can provably help accuracy. Ideally, thanks to $\pf$, we can discover a near-optimal $\phi$ from the larger hypothesis set $\bPhi$.
    \item Finally, we emphasize that, we add the $T$ new tasks to the network in one round for the sake of cleaner exposition. \app{In Section~\ref{app C}, we provide synergistic theory and detailed investigation of the scenario where tasks are learned sequentially in a continual fashion and frozen features $\pf$ evolve as we add more tasks.} In a nutshell, this theory explains Figure \ref{fig:Sample} by quantifying the role of sample size in the quality of continual representations.
\end{enumerate}

%\textbf{Remark:} To add a single task we can simply set $T=1$. We consider adding $T$ tasks to demonstrate the benefit of multiple tasks in learning complex representations similar to \cite{maurer2016benefit}. Additionally, Theorem \ref{seq thm} and Section \ref{sec seq} provide theoretical analysis for the more realistic (but more involved) setting where $T$ tasks are added sequentially. %Finally, adding $T$ new tasks can also be interpreted as adding a single multiclass task with $T$ class categories.

%This is also known as the uniform entropy . 
Before stating our technical results, we need to introduce a few definitions. To quantify the complexities of the search spaces $\bPn,\Hb$, we introduce \emph{metric dimension} (\cite{mendelson2003few}), which is a generalization of the VC-dimension (\cite{vapnik2015uniform}). %For simplicity, our definition will use the $\ell_2$ distance (between the outputs of two hypotheses). 

\begin{definition}[Metric dimension] \label{def:cov} Let $\Gb:\Zc\rightarrow\Zc'$ be a set of functions. {Let $\BC_{\Zc}>0$ be a scalar that is allowed to depend on $\Zc$.} Let $\Gb_{\eps}$ be a minimal-size $\eps$-cover of $\Gb$ such that for any $g\in\Gb$ there exists $g'\in \Gb_{\eps}$ that ensures $\sup_{\z\in\Zc}\tn{g(\z)-g'(\z)}\leq \eps$. The metric dimension $\cc{\Gb}$ is the smallest number that satisfies $\log|\Gb_{\eps}|\leq \cc{\Gb}\log(\BC_{\Zc}/\eps)$ for all $\eps>0$.
\end{definition}
%Choose $\Gc_{\eps,R}$ to have minimal cardinality equal to $N_{\eps,R}$. 
%$d_{\Bal,R}$ be the smallest number such that $\Bal$ admits an $\
% and $R>0$ be the $\ell_2$-radius of its input space $\Xc$
%$\cc{\Gb}$ implicitly depends on (the size of) the input set $\Xc$. 
{$\BC_{\Zc}$ typically depends only logarithmically on the Euclidean radius of the feature space under mild Lipschitzness conditions, thus, $\BC_{\Zc}$ dependence will be dropped for cleaner exposition.} In practice, for neural networks or other parametric hypothesis, metric dimension is bounded by the number of trainable weights up to logarithmic factors (\cite{barron2018approximation}). %oymak2018learning
%and will \blue{$\Zc$}\ylm{$\Xc$}. Under a standard Lipschitzness assumption on $\bPn$ and $\pf$, we will operate with bounded feature spaces;

Metric dimension will help us characterize the sample complexity. However, we also would like to understand when $\bPf$ can help. To this end, we introduce definitions that capture the population loss (infinite data limit) of new tasks and the \emph{feature compatibility} between the new tasks and $\pf$ of old tasks. These definitions help decouple the finite sample size $N$ and the distribution of the new tasks.
\begin{definition}[Distributional quantities]\label{def pop}Define the population (infinite-sample) risk as $\Lc(\hb,\phi)=\E[\Lch(\hb,\phi)]$. Define the optimal risk over representation $\bPhi$ as $\Lco=\min_{\hb\in\Hb^\Tn,\phi\in\bPhi}\Lc(\hb,\phi)$. Note that the optimal risk can always choose the best representations within $\bPn$ and $\bPf$ since $\bPn+\bPf\subseteq \bPhi$. Finally, define the optimal population risk using frozen $\pf$ to be $\Lco_{\pfh}=\min_{\hb\in\Hb^\Tn,\pn\in\bPn}\Lc(\hb,\phi)$ s.t.~$\phi=\pn+\pf$.
%and $\Lco_{\pfh}=\E[\hat\Lc_{\pf}]$.}
% Finally, define the optimal risk with fixed frozen model \red{and chosen $\bPn$} to be $\Lco_{\pf}\red{(\bPn)}=\min_{\hb\in\Hb^\Tn,\pn\in\bPn}\Lc(\hb,\phi\red{|\bPn})$ s.t.~$\phi=\pn+\pf$.
\end{definition}
Following this, the \emph{representation mismatch} introduced below assesses the suboptimality of $\pf$ for the new task distributions compared to the optimal hypothesis within $\bPhi$.
\begin{definition}[New \& old tasks mismatch]\label{def MM} %Let $\Dc_\new=(\Dc_t)_{t=1}^\Tn$ denote the new task distributions. 
{The representation mismatch between the frozen features $\pf$ and the new tasks} is defined as \vspace{-8pt}
\begin{align*}
    \MM=\Lco_{\pf}-\Lco.
    %&=\MM^\star+\hat{\Lc}_{\pf}{(\bPn)}-\Lco_{\pf}\red{(\bPn)}.
\end{align*}
% \red{Therefore, the corresponded population mismatch is given by  $\MM^\star\red{(\bPn)}=\E[\MM(\bPn)]=\Lco(\bPn)-\Lco$.}
%We say that $\pf$ is $\eps$-compatible with $(\Dc_t)_{t=1}^T$ if it satisfies the following criteria. 
\end{definition}
%Let $((h^\st_t)_{t=1}^T,\phi^\st)$ be the solution of the population risk where we solve \eqref{crl} with $n=\infty$ samples to minimize the expected task-averaged risk $\Lc=\E[\Lch]$. 
%Similar notions are introduced in the meta-lerning
By construction, $\MM$ is guaranteed to be non-negative. Additionally, $\MM=0$ if we choose global space to be $\bPhi=\bPn+\bPf$ and $\pf$ to be the optimal hypothesis wihin $\bPf$. With these definitions, we have the following generalization bound regarding \eqref{crl} problem. %\app{The proof is deferred to appendix\ylm{the Appendix \ref{app B}}.}
\begin{theorem}\label{cl thm} Let $\hb,\hhb$ denote the set of classifiers $(h_t)_{t=1}^\Tn,(\hh_t)_{t=1}^\Tn$ respectively and $(\hhb,\hat{\phi}=\pnh+\pf)$ be the solution of \eqref{crl} given $\pf$. Suppose that the loss function $\ell(y,\hat{y})$ takes values on $[0,1]$ and is $\Gamma$-Lipschitz w.r.t.~$\hat{y}$. {Suppose that input set $\Xc$ is bounded and all $\pn\in\bPn,~h\in\Hb$, \ylm{$\bPn\in\{\bPn^i,1\leq i\leq k\}$,} and $\pf$ have Lipschitz constants upper bounded with respect to Euclidean distance}. With probability at least $1-2e^{-\tau}$, the task-averaged population risk of the solution $(\hhb,\hat{\phi})$ obeys 
\red{
\begin{align}
\Lc(\hhb,\hat{\phi})&\leq \Lco_{\pf}+ \sqrt{\frac{\Lco_{\pf}\cdot\ordett{\Hb,\bPn}}{\Tn N}}+\frac{\ordett{\Hb,\bPn}}{\Tn N},\nn\\
%\text{WHERE}&\quad \hat{\Lc}_{\pf}(\bPn)=\Lco+\MM\red{(\bPn)}.\nn
&\leq \Lco+{\MM}+{\sqrt{\frac{(\Lco+{\MM})\cdot\ordett{\Hb,\bPn}}{\Tn N}}}+\frac{\ordett{\Hb,\bPn}}{\Tn N},\nn
%\text{WHERE}&\quad\epsilon(\bPn)=\sqrt{\frac{\ordet{\Tn\cc{\Hb}+\cc{\bPn}+\tau}}{\Tn N}}.\nn
%\Lc(\hhb,\hat{\phi})&\leq\min_{\bPn}\Lc(\hhb,\hat{\phi}\red{|\bPn})+\sqrt{\frac{\ordet{k}}{TN}}.
%\quad  \Gamma\sqrt{\frac{d_\Bhi+d_\Rho+d_\Hb+\tau}{n}}
\end{align}
where $\ordett{\Hb,\bPn}:=\ordet{\Tn\cc{\Hb}+\cc{\bPn}+\tau}$}.
\end{theorem}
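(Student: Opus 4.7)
The plan is to run a standard empirical process / uniform convergence argument, but one in which the multi-task structure is carefully exploited so that the sample-complexity term scales as $T\cc{\Hb}+\cc{\bPn}$ rather than $T(\cc{\Hb}+\cc{\bPn})$. The shared representation $\pn$ (hence $\phi=\pn+\pf$) is counted once, while each task contributes its own head $h_t\in\Hb$. The second inequality in the statement is a triviality given Definition~\ref{def MM} ($\Lco_{\pf}=\Lco+\MM$), so the whole work is in establishing the first inequality.

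First I would set up uniform concentration. Note that $\Lch(\hb,\phi)=\frac{1}{TN}\sum_{t=1}^{T}\sum_{i=1}^{N}\ell(y_{ti},h_t(\phi(\x_{ti})))$ is an average of $TN$ independent bounded random variables (the datasets $\Sc_t$ are independent and each is i.i.d.\ within), so for any \emph{fixed} pair $(\hb,\pn)\in\Hb^T\times\bPn$, Hoeffding yields $\Pr[\,|\Lc(\hb,\pn+\pf)-\Lch(\hb,\pn+\pf)|>u\,]\le 2e^{-2TNu^2}$.

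Next I would build an $\eps$-cover of the parameter/hypothesis space using Definition~\ref{def:cov}. The set $\bPn$ admits a cover of log-cardinality at most $\cc{\bPn}\log(\BC/\eps)$; the set $\Hb^T$ admits a cover of log-cardinality at most $T\cc{\Hb}\log(\BC/\eps)$ (independent covers per task). Their product is a joint cover of log-cardinality at most $\bigl(T\cc{\Hb}+\cc{\bPn}\bigr)\log(\BC/\eps)$. Crucially, since $\phi$ is shared across tasks, we pay for one copy of $\cc{\bPn}$ rather than $T$. The Lipschitz assumptions on $\ell$, $h$, and $\pn$ convert $\eps$-closeness in parameters to $O(\Gamma\cdot\text{poly}(\text{Lip}))\cdot \eps$-closeness in the induced loss $(\x,y)\mapsto \ell(y,h_t(\phi(\x)))$ uniformly over $\x\in\Xc$; and since $\bPf$ is fixed, adding $\pf$ does not enlarge the cover.

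Then I would union-bound Hoeffding over the cover and pick $\eps$ polynomially small in $1/(TN)$ (which only inflates logs, hence the $\ordet{\cdot}$ notation) to absorb the discretization error. This yields that, with probability $\ge 1-2e^{-\tau}$, uniformly over $(\hb,\pn)\in\Hb^T\times\bPn$,
\begin{align*}
\bigl|\Lc(\hb,\pn+\pf)-\Lch(\hb,\pn+\pf)\bigr|\;\le\;\sqrt{\frac{\ordet{T\cc{\Hb}+\cc{\bPn}+\tau}}{TN}}=:\mathrm{err}.
\end{align*}
Finally, let $(\hb^\star,\pn^\star)$ attain $\Lco_{\pf}=\Lc(\hb^\star,\pn^\star+\pf)$. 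Using that $(\hhb,\hat\phi=\pnh+\pf)$ is the ERM of \eqref{crl}, the standard chain
\begin{align*}
\Lc(\hhb,\hat\phi)\;\le\;\Lch(\hhb,\hat\phi)+\mathrm{err}\;\le\;\Lch(\hb^\star,\pn^\star+\pf)+\mathrm{err}\;\le\;\Lc(\hb^\star,\pn^\star+\pf)+2\,\mathrm{err}\;=\;\Lco_{\pf}+2\,\mathrm{err}
\end{align*}
gives the first claimed bound (absorbing the constant $2$ into $\ordet{\cdot}$), and the second follows by substituting $\Lco_{\pf}=\Lco+\MM$.

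The main technical obstacle is the Lipschitz-composition bookkeeping in the third step: one must argue that an $\eps$-perturbation of $\pn$ together with an $\eps$-perturbation of $h_t$ produces at most an $O(\Gamma\cdot\text{poly}(\text{Lip}))\cdot\eps$ change in the pointwise loss \emph{uniformly} over a bounded $\Xc$, so that the covering-number estimate for the parameter class genuinely controls the covering number of the induced loss class. Provided the assumed Euclidean Lipschitz bounds on $h,\pn,\pf$ combine multiplicatively (as they do under a bounded $\Xc$ and the $\Gamma$-Lipschitzness of $\ell$ in $\hat y$), the resulting enlargement only rescales $\BC$ inside the logarithm and is absorbed by $\ordet{\cdot}$. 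Everything else is a routine Hoeffding–union-bound–ERM assembly.
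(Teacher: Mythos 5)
Your proposal is correct and follows essentially the same route as the paper's proof in Appendix~\ref{app B}: a product $\eps$-cover of $\Hb^T\times\bPn$ with log-cardinality $(T\cc{\Hb}+\cc{\bPn})\log(\BC/\eps)$, Hoeffding plus a union bound over the cover, a Lipschitz perturbation argument bounding both population and empirical risk changes by $\Gamma(L+1)\eps$, the choice $\eps\propto 1/(\Gamma(L+1)NT)$ to absorb discretization into the logarithm, and the standard ERM chain against the minimizer attaining $\Lco_{\pf}$, with the second inequality following from $\MM=\Lco_{\pf}-\Lco$. Your handling of the one subtle point --- that the cover of $\Hb$ must be taken over the bounded intermediate feature set induced by $\pf+\bPn$ on $\Xc$, which only rescales $\BC$ inside the log --- matches the paper's treatment exactly.
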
 
\red{\textbf{Proof sketch:} Our proof uses a covering argument following Definition~\ref{def:cov} to approximate continuous search spaces $\Hb$ and $\bPn$ via discrete sets $\Hb_\eps$ and $\bPne$. With properly sized covers, for any $\hb\in\Hb^T, \pn\in\bPn$, we can find neighbors $\hb'\in\Hb_\eps^T,\pn'\in\bPne$ such that $|\Lc(\hb,\phi)-\Lc(\hb',\phi')|, |\Lch(\hb,\phi)-\Lch(\hb',\phi')|\leq\Gamma(L+1)\eps$, where $\phi=\pn+\pf$, $\phi'=\pn'+\pf$, and we assume $\pn,\pf,h$ are $L$-Lipchitz. We then apply Bernstein's inequality over all cover elements $\fb=(\hb',\phi')$, and derive that with probability at least $1-2e^{-\tau}$, we have that $|\Lc(\fb)-\Lch(\fb)|\leq\sqrt{\frac{D_\eps\cdot\Lc(\fb)+\tau}{TN}}+\frac{D_\eps+\tau}{TN}$. Here $D_\eps:=\Tn\cc{\Hb}\log(\bar C_\Hb/\eps)+\cc{\bPn}\log(\bar C_{\bPn}/\eps)$ captures dependence on dimension and specific hypothesis, namely, the bound is localized and smaller $\Lc(\fb)$ leads to a tighter bound. Finally, Theorem \ref{cl thm} is proved by combining both inequalities and setting $\eps=\frac{B(D_\eps+\tau)}{2\Gamma(L+1)TN}$. Further proof details are deferred to Appendix \ref{app B}.}

%Similar to literature on representation learning (e.g.~\cite{maurer2016benefit}),
In words, this theorem shows that as soon as the total sample complexity obeys $\Tn N\gtrsim \Tn \cc{\Hb}+\cc{\bPn}$, we achieve small excess statistical risk and avoid the sample cost of learning $\pf$ from scratch. {Importantly, the sample cost of learning the incremental representation is shared between the tasks since per-task sample size $N$ only needs to grow with $\cc{\bPn}/T$.} 
%\Salnote{Is this correct, where is $\cc{\Hb}$?}
%
Reusing $\bPf$ comes at the cost of a prediction bias $\MM$ arising from the feature mismatch. Also, with access to a larger sample size (e.g.~$\Tn N\gtrsim \Tn\cc{\Hb}+ \cc{\bPhi}$), new tasks can learn a near-optimal $\phi^\st\in\bPhi$ from scratch\footnote{In this statement, we ignore the continual nature of the problem and allow $\pf$ to be overridden for the new tasks if necessary.}. Thus, the benefit of \ref{crl} on data-efficiency is most visible when the new tasks have few samples, which is exactly the setting in Figure \ref{fig:LS}.   %However, with enough samples, %Here, we say ``potentially'' because learning $\pf^\st$ might take extra space

\noindent$\bullet$ \textbf{$\bPn$ \& representation diversity.} Imagine the scenario where $\pf$ is already very rich and approximately coincides with the optimal hypothesis within the global space $\bPhi$. This is intuitively the ImageNet setting of Figure \ref{fig:diversity}\ylm{ where even fine-tuning $\pf$ will achieve respectable results}. Mathematically, this corresponds to the scenario where \red{$\bPn\approx \emptyset$ (compared to the $\bPf$ and sample size $NT$)} but the mismatch is $\MM\approx 0$. In this case, our theorem reduces to the standard few-shot learning risk where the only cost is learning $\Hb$ i.e.~\red{$\Lc(\hhb,\hat{\phi})\leq \Lco+ \sqrt{{\Lco\cdot\ordet{\cc{\Hb}}}/{N}}+{\ordet{\cc{\Hb}}}/{N}$}.

%our theorem recovers this setting by choosi
%The above theorem is based on given $\bPn$. However, different choice of $\bPn$ results in different mismatch $\MM$ since $\Lco$ is population loss using all available features $\bPn^k$ and there is loss gap between 

\noindent$\bullet$ \textbf{$\MM$ \& initial sample size.} Note that $\pf$ is built using previous tasks which have finite samples. \app{The sequential CL analysis we develop in {Section~\ref{app C}} decomposes mismatch as $\MM\lesssim\MM^\st+\red{\sqrt{\ordet{\cc{\bPf}}/N_{\text{prev}}}}$ where $\MM^\st$ is the mismatch if previous tasks had $N_{\text{prev}}=\infty$ samples and $\ordet{\cc{\bPf}/N_{\text{prev}}}$ is the excess mismatch due to finite samples shedding light on Figure \ref{fig:Sample}. }

\noindent\red{$\bullet$ \textbf{$\MM$ \& fast rates.} The square-root term in Theorem~\ref{cl thm} also shows that larger representation mismatch $\MM$ might result in a slower statistical learning rate. Consider the scenario where $\Lco\approx0$. If the new tasks have higher similarity with the previous tasks and can fully utilize the frozen features, that is $\MM\approx0$, then it achieves fast rate, i.e. $\Lc(\hhb,\hat{\phi})\lesssim{\ordet{T\cc{\Hb}+\cc{\bPf}}}/{NT}$. In contrast, when $\MM>0$ and $N$ is large, the square-root term is not negligible and $\Lc(\hhb,\hat{\phi})\lesssim\MM+\sqrt{{\MM\cdot\ordet{T\cc{\Hb}+\cc{\bPf}}}/{NT}}$. Therefore, the later-coming tasks (intuitively with higher mismatch) may incur slower statistical rates. 
}
% \jmlrm{Add more discussions here.}

%\noindent$\bullet$ \textbf{Data efficiency.}
% \mli{Discussion followed with exp}

Our analysis is related to the literature on representation learning theory (\cite{maurer2016benefit,kong2020meta,wu2020understanding,du2020few,gulluk2021sample,tripuraneni2020theory,arora2019theoretical}). Unlike these works, we consider the CL setting and show how the representation learned by earlier tasks provably helps learning the new tasks with fewer samples and how initial representation diversity and sample size benefit CRL. \red{We also show that representation mismatch is a significant term determining the statistical rate. In the following discussions, we expand our results in two ways: Section \ref{app:application} instantiates our result for neural networks (see Corollary~\ref{cl thm3}) to obtain tight sample complexity bounds (in the degrees of freedom) and Section~\ref{app C} provides bounds for learning tasks sequentially (see Theorem~\ref{seq thm}).}

\section{Application to Shallow Networks}\label{app:application}
As a concrete instantiation of Theorem \ref{cl thm} let us consider a realizable regression setting with a shallow network. More sophisticated examples are deferred to future work. Fix positive integers $d$ and $r_\frz\leq r$. Here, $d$ is the raw feature dimension and $r$ is the representation dimension which is often much smaller than $d$. The ingredients of our neural net example are as follows.
%Specifically $\phi(\x):=\W\x$ for $\W\in\R^{r\times d}$. 
\begin{myenumerate}
\item Let $\psi:\R\rightarrow\R$ be a Lipschitz activation function with $\psi(0)=0$ such as Identity or (parametric) ReLU.
\blue{\item Let $\sigma:\R\rightarrow[-1,1]$ be a Lipschitz link function such as logistic function $1/(1+e^{-x})$.}
\item Let $Z$ be a zero-mean noise variable taking values on $[-1,1]$.
\item Fix vectors $(\vb^\st_t)_{t=1}^T\in\R^r$ with $\ell_2$ norms bounded by some $\bz>0$.
% \item Fix matrix $\W^\st\in\R^{r\times d}$ with spectral norm (maximum singular value) upper bounded by some $\bo>0$.
\red{\item Fix matrix $\W^\st\in\R^{r\times d}$ with Frobenius norm upper bounded by some $\bo>0$.}
\item Given input $\x\in\R^d$, task $t$ samples an independent $Z$ and assigns the label
\begin{align}
y=\sigma({\vb^\st_t}^\top \psi(\W^\st\x))+Z.\label{planted}
\end{align}
\item Fix $r_\frz\leq r$. Let $\W_\frz\in\R^{r\times d}$ be the matrix where the first $r_\frz$ rows \blue{are same as $\W^\st$}\ylm{This assumes no mismatch} whereas the last $r_\new:=r-r_\frz$ rows are equal to zero.
%The frozen representation is th
%Let us represent each task as $f_t(\x)=\sigma(\vb_t^\top \psi(\W\x)$ where $\W\in\R^{r\times d}$ and $f_t$ maps $\Xc=\R^d$ onto $[0,1]$.
\end{myenumerate}
This setting assumes that first $r_\frz$ features are generated by $\W_\frz$ and \eqref{crl} should learn remaining features $\W^\st_\new:=\W^\st-\W_\frz$ and the classifier heads $(\vb^\st_t)_{t=1}^T$. We remark that above one can use arbitrary $[-C,C]$ limits rather than $[-1,1]$ or one can replace $[-1,1]$ limits with subgaussian tail conditions.
%removeeven subgaussian norm bounds

\blue{For some $\bo\geq \|\W^\st\|_F$}, we choose search space $\Wc$ to be the set of all matrices $\W_\new$ such that spectral norm obeys $\|\W_\new\|\leq \bo$ and the first $r_\frz$ rows of $\W_\new$ are zero. This way, we focus on learning the missing part of the representation $\W^\st$. Let us fix the loss function $\ell$ to be quadratic and denote $\Vb=(\vb_t)_{t=1}^T$. Then, \eqref{crl} takes the following parametric form
\begin{align}
&\hat{\Vb},\hat{\W}_\new=\underset{\W=\W_\frz+\W_\new}{\underset{\tn{\vb_t}\leq \bz,\W_\new\in \Wc}{\arg\min}}\Lch(\Vb,\W):=\frac 1 T\sum_{t=1}^\Tn \Lch_{\Sc_t}(\vb_t,\W) \nn\\
&\text{WHERE}\quad \Lch_{\Sc_t}(\vb_t,\W)=\frac{1}{N}\sum_{i=1}^N (y_{ti}-\sigma(\vb_t^\top \psi(\W\x_{ti})))^2. \nn%\label{crl_param}.\tag{CRL-NN}
%&\text{s.t.}\quad\Lch_t(f)=\frac{1}{n}\sum_{i=1}^n \ell(y_{ti},f(\x_{ti})),~.\nn
\end{align}
We have the following result regarding this optimization. It is essentially a corollary of Theorem \ref{cl thm}. The proof is deferred to the Appendix \ref{app nn proof}.
\begin{corollary}\label{cl thm3} Consider the problem above with $T$ tasks containing $N$ samples each with datasets $(\Sc_t)_{t=1}^T$ generated according to \eqref{planted}. Suppose input domain $\Xc\in\R^d$ has bounded $\ell_2$ norm. With probability at least $1-2e^{-\tau}$, the task-averaged population risk of the solution $(\hat{\Vb},\hat{\W}_\new)$ obeys 
\red{
\begin{align}
\Lc(\hat{\Vb},\hat{\W}_\new)\leq \left(\sqrt{\E[Z^2]}+ \sqrt{\frac{\ordet{\Tn r+r_\new d+\tau}}{\Tn N}}\right)^2.\nn
%\quad  \Gamma\sqrt{\frac{d_\Bhi+d_\Rho+d_\Hb+\tau}{n}}
\end{align}}
\end{corollary}
\textbf{Interpretation:} Observe that the minimal risk is $\Lc^\st=\E[Z^2]$ which is the noise independent of features. The additional components are the excess risk due to finite samples. In light of Theorem \ref{cl thm}, we simply plug in $\cc{\Hb}=r$, $\cc{\bPn}=r_\new d$ and $\Lc^\st_{\pf}=\Lc^\st$. The first two arise from counting number of trainable parameters: each classifier has $r$ parameters and representation $\bPn$ has $r_\new d$ parameters. $\Lc^\st_{\pf}=\Lc^\st$ arises from the fact that we chose $\W_\frz$ to be subset of $\W^\st$ thus there is no mismatch. \blue{When $\W_\frz=0$} (i.e.~learning representation from scratch), this bound is comparable to prior works (\cite{tripuraneni2020provable,du2020few}), and in fact, it leads to (slightly) improved sample-complexity bounds. \blue{For instance, when $\psi$ is identity activation (i.e.~linear setting) \cite{tripuraneni2020provable} requires $TN\gtrsim r^2d$ samples to learn the task whereas our sample size grows only linear in $r$ and requires $TN\gtrsim rd$. Additionally, \cite{du2020few} requires per-task sample size to obey $N\gtrsim d$ samples whereas we only require $N\gtrsim r$.}

\section{Theoretical Analysis of Adding $T$ New Tasks Sequentially}\label{app C}

Theorem \ref{cl thm} adds $T$ tasks simultaneously on frozen feature extractor $\pf$. 
% \red{In Theorem~\ref{cl thm}, the convergence rate is related to the optimal loss $\Lco_{\pf}$. 
Below, we consider the setting where we add these tasks sequentially and a new task $t$ builds upon the cumulative representation learned from tasks $1$ to $t-1$. This setting better reflects what actually happens in continual learning and in our experiments but is more involved because representation quality of an earlier task will impact the accuracy of the future tasks. 

An intuitive way is to repeatedly apply Theorem~\ref{cl thm}. However, the population/empirical \red{excess risks} ($\Lc(\hhb,\hat{\phi})- \Lco_{\pf}/\Lc(\hhb,\hat{\phi})- \Lco$) shown in Theorem~\ref{cl thm} are coupled with $\Lco_{\pf}$, which are different for different tasks due to the evolving $\pf$ and various task distributions. \red{In order to provide a cleaner analysis} for the sequential setting, we will utilize the following bound that follows as a corollary of Theorem~\ref{cl thm} (see Cor.~\ref{cl thm variant}),\vspace{-2pt}
\[
\Lc(\hhb,\hat{\phi})\leq \Lco_{\pf}+{\sqrt{\frac{\ordet{\Tn\cc{\Hb}+\cc{\bPn}+\tau}}{\Tn N}}}.
\]
\red{While this statistical rate is slower, it has the advantage that the excess risk is decoupled from $\Lco_{\pf}$ thanks to its simpler form. To proceed, we first describe the sequential setting and assumptions on the representation mismatch.}

%We obviously have the option to use separate sets indexed as $\bPn^t$ for task $t$ however it doesn't add much to discussion besides complicating the notation. 
\noi\textbf{Sequential learning setting:} We learn a new task with index $t$ over the hypothesis set $\bPn^t$ for $t\in[T]$. Suppose we are at task $t$, that is, we assume that we have already built incremental (continual) feature-extractors $\pn^1,\dots,\pn^{t-1}$ for tasks $1$ through $t-1$ where each one obeys $\pn^\tau\in\bPn^\tau$. Thus, the (cumulative) frozen representation at time $t$ is given by
\[
\pf^t=\pf+\sum_{\tau=1}^{t-1} \pn^\tau.
\]
Here $\pf:=\pf^1\in\bPf$ is the representation built before any new task arrived. Using $\pf^t$, we solve the following (essentially identical) variation of \eqref{crl} where \textbf{we focus on task $t$ given the outcome of the continual learning procedure until task $t-1$.}\vspace{-4pt}
\begin{align}
h^t,&\pn^t=\underset{\phi=\pn+\pfh^t}{\underset{h\in\Hb,\pn\in\bPn^t}{\arg\min}} \Lch_{\Sc_t}(f):=\frac{1}{N}\sum_{i=1}^N \ell(y_{ti},f(\x_{ti}))\quad\text{where}\quad f=h\circ\phi. \tag{CRL-SEQ}\label{crlseq}
%&\text{s.t.}\quad\Lch_t(f)=\frac{1}{n}\sum_{i=1}^n \ell(y_{ti},f(\x_{ti})),~.\nn
\end{align}
%Compared to Theorem \ref{cl thm}, 
After obtaining $\pn^t$, the feature-extractor of task $t$ is given by $\phi^t=\pf^t+\pn^t$ and prediction function becomes $f^t=h^t\circ \phi^t$. Finally, $\phi^t$ of task $t$ becomes the next frozen feature-extractor i.e.~$\pf^{t+1}=\phi^t$.

%Within our analysis, similar in spirit to exponentially-decaying weight allocation of ESPN \eqref{wa-eq}, we will assume that complexities of $\cc{\bPn^\tau}$ are proportional to $\alpha^{\tau-1}$ for some $0<\alpha \leq 1$ and new tasks progressively make more use of features built by previous tasks. 
In this sequential setting, intuitively $(\bPn^t)_{t=1}^T$ are less complex hypothesis spaces compared to $\bPn$ of Theorem \ref{cl thm}. This is because we learn $\bPn^t$ using a single task. In that sense, the proper scaling of hypothesis set complexity is $\cc{\bPn^t}\propto \cc{\bPn}/T$ for $t\in[T]$. Specifically, we assume that for some global value $\Ccn>0$
\begin{align}
    \cc{\bPn^t}\leq \Ccn\quad \text{for all}\quad 1\leq t\leq T.\label{ccn decay}
\end{align}
Secondly, compared to Theorem \ref{cl thm}, we need to introduce a more intricate compatibility condition to assess the benefit of the representations learned from finite data $\pn^1,\dots,\pn^{t-1}$ for the new task $t$. This will be accomplished by first introducing population level compatibility and then introducing an assumption that controls the impact of finite sample learning on the new task. The definition and assumption arise naturally to control the learnability of a new task given features of earlier tasks. Related assumptions (e.g.~\emph{task diversity} condition) have been used by other works for transfer/meta learning purposes (\cite{tripuraneni2020theory,oymak2021generalization,du2020few,xu2021representation}).

Let $(h^{\st,1},\pn^{\st,1}),\dots,(h^{\st,t},\pn^{\st,t}),\dots$ be the (classifier, representation) sequence obtained by solving \eqref{crlseq} using infinite samples $N=\infty$ (that is, solving the population-level optimization rather than finite-sample ERM). The following definition introduces the representation mismatch at task $t$ to capture the suitability of the population-level representations $(\pn^{\st,\tau})_{\tau=1}^{t-1}$ for a new task $t$. Set $\pf^{\st,t}=\pf+\sum_{\tau=1}^{t-1}\pn^{\st,\tau}$ and define the set of all feasible representations for task $t$ as \blue{$\bPhi^t=\sum_{\tau=1}^t\bPn^\tau+\bPf${$\subseteq\bPhi$}.}
\begin{definition}[Population quantities and representation mismatch]\label{def pop seq}For task $t$, define the population (infinite-sample) risk as $\Lc_t(h,\phi)=\E[\Lch_{\Sc_t}(h,\phi)]$. Define the optimal risk of task $t$ over all feasible representations in {$\bPhi$} as $\Lci{t}=$ $\min_{h\in\Hb,\phi\in{\bPhi}}\Lc_t(h,\phi)$. Note that the optimal risk gets to choose the best representations within $(\bPn^\tau)_{\tau=1}^t$ and $\bPf$ {since $\sum_{\tau=1}^t\bPn^\tau+\bPf\subseteq \bPhi$.} Finally, define the optimal risk with fixed frozen model $\pf^{\st,t}=\pf+\sum_{\tau=1}^{t-1}\pn^{\st,\tau}$ to be $\Lci{t}_\seq=\min_{h\in\Hb,\pn^t\in\bPn^t}\Lc_t(h,\phi)$ s.t.~$\phi=\pn^t+\pf^{\st,t}$. The sequential representation mismatch at task $t$ is defined as
\begin{align}
\MS{t}=\Lci{t}_\seq-\Lci{t}.\label{mst}
\end{align}
\end{definition}
% This definition is the sequential counterpart of Definitions \ref{def pop} and \ref{def MM}.
This definition quantifies the cost of continual learning with respect to choosing the best (oracle) representation. It also aims to capture the properties of the task distributions thus it uses infinite samples for tasks $1\leq \tau\leq t$. In practice, a new task $t$ is learned on top of finite sample tasks. We need to make a plausible assumption to formalize
\begin{quote}
\emph{With enough samples, representations learned from finite sample tasks are almost as useful as representations learned from infinite sample tasks.}
\end{quote}
We accomplish this by introducing empirical/population compatibility below. The basic idea is that, quality of the representation should decay gracefully as we move from infinite to finite samples.
%if a representation achieves near-optimal risk on earlier tasks, it should also work reasonably well on the new task.%Similar assumptions are used in recent works on representation learning theory
\begin{assumption} [Empirical/population compatibility] \label{fin comp} For task $t$, define the population risk $\Lc_t(h,\phi)=\E[\Lch_{\Sc_t}(h,\phi)]$. Recall the definitions of $\pf,(h^{\st,t},\pn^{\st,t})_{t\geq 1}$ from Def.~\ref{def pop seq}. Given a sequence of incremental feature-extractors $\bhi:=(\pn^\tau)_{\tau\in[t]}$, recall from \eqref{crlseq} that task $t$ uses the extractor $\phi^{t}=\pf^{t+1}=\pf+\sum_{\tau=1}^{t}\pn^{\tau}$. {To quantify representation quality}, we introduce the risks $\Lcb^t_\seq:=\Lcb^{t,\bhi}_\seq,\Lc^t_\seq:=\Lc^{t,\bhi}_\seq$ induced by $\bhi$ (similar to Def.~\ref{def pop seq})
\begin{align}
&\Lcb^t_\seq=\min_{h\in\Hb}\Lc_t(h,\phi^t)\nn\\
&\Lc^t_\seq=\min_{h\in\Hb,\pn\in\bPn^t}\Lc_t(h,\phi)~\text{s.t.}~\phi=\pf^t+\pn.\label{ltseq}
\end{align}
Here $\Lcb^t_\seq$ uses the given $\pn^t$ (within $\bhi$) whereas $\Lc^t_\seq$ chooses the optimal $\pn^t\in \bPn^t$, and they both assume frozen representation $\pf^t$\footnote{$\Lc^t_\seq$ definition is needed to quantify the representation quality of a new task where incremental update $\pn^t$ has not been built yet. In contrast, $\Lcb^t_\seq$ quantifies the representation quality for which (full) feature extractors are known.}. Thus $\Lcb^t_\seq\geq \Lc^t_\seq$. Based on these, define the mismatch between empirical and population-level optimizations for task $t$ as
\begin{align}\nn
\ME{t}=\Lc^t_\seq-\Lci{t}_\seq\quad\text{and}\quad\MA{t}=\Lcb^t_\seq-\Lci{t}_\seq.
\end{align}
Again by construction {$\MA{t}\geq \ME{t}$}. We say \textbf{empirical and population representations} are compatible if there exists a constant $\cz>0$ such that, for all choices of $(\pn^\tau)_{\tau=1}^t\in \bPn^1\times \dots\bPn^t$, we have that
\[%\MAvg(t-1)\quad\text{where}\quad\MAvg(t-1):= 
\blue{\underbrace{\ME{t}}_{\text{subopt on new task}}\leq\underbrace{\cz}_{\text{additive mismatch}}+\underbrace{\frac{1}{t-1} \sum_{\tau=1}^{t-1} \MA{\tau}}_{\text{avg subopt on first $t-1$ tasks}}}
\]
\end{assumption}
\textbf{Interpretation:} Here, \blue{$\frac{1}{t-1} \sum_{\tau=1}^{t-1} \MA{\tau}$} quantifies the suboptimality of the representations used by first $t-1$ tasks. Recall that task $\tau$ uses representation $\pf^\tau$ for $\tau\leq t-1$. Verbally, this assumption guarantees that, task $t$ can find an (incremental) representation $\pn^t\in\bPn^t$ and classifier $h\in\Hb$ such that its suboptimality to population-optimal risk $\Lci{t}_\seq$ is upper bounded in terms of the average of the suboptimalities over the first $t-1$ tasks. Note that, this can also be viewed as a \textbf{sequential task diversity} condition because we are assuming that good quality representations on the first $t-1$ tasks (w.r.t.~population minima) ensure a small excess risk (w.r.t.~population minima) on the new task.

Following Definition~\ref{def pop seq} and Assumption \ref{fin comp}, the lemma below probabilistically quantifies the generalization risk when we add one task. Using this lemma, we will state our main result which quantifies the generalization risk when adding $T$ tasks.
\begin{lemma}\label{lem seq} Suppose we are given the output pairs $(h^\tau,\pn^\tau)_{\tau=1}^{t-1}$ of the first $t-1$ applications of sequential CRL problem \eqref{crlseq}. Now, we solve for the $t$'th solution denoted by the pair $(h^t,\pn^t)$. Under same conditions as in Theorem \ref{cl thm} (Lipschitz hypothesis $\bPn^t,\Hb$, Lipschitz loss $\ell:\R\times \R\rightarrow [0,1]$ and bounded input features $\Xc$), for some absolute constant $C>0$, with probability $1-2e^{-\tau}$, the solution $(h^t,\pn^t)$ of \eqref{crlseq} satisfies the following two properties
\begin{align}\label{eq mm}
&\Lc(h^t,\pn^t+\pf^t)-\Lci{t}\leq \MS{t}+\ME{t}+\sqrt{\frac{\ordet{\cc{\Hb}+\cc{\bPn^t}+\tau}}{N}}\\
&\MA{t}\leq \sqrt{\frac{\ordet{\cc{\Hb}+\cc{\bPn^t}+\tau}}{N}}+\ME{t}.\nn
\end{align}
Here, $\MS{t},\ME{t},\MA{t}$ are mismatch definitions introduced in Definition~\ref{def pop seq} and Assumption \ref{fin comp} based on given incremental feature-extractor sequence of previous tasks $(\pn^\tau)_{\tau=1}^{t-1}$.
\end{lemma}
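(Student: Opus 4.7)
The plan is to view Lemma \ref{lem seq} as a direct specialization of Theorem \ref{cl thm} with $T=1$, where the role of the frozen feature extractor $\pf$ is played by the cumulative empirical representation $\pf^t = \pf + \sum_{\tau=1}^{t-1}\pn^\tau$ built from the first $t-1$ invocations of \eqref{crlseq}. The hypothesis set $\bPn$ is replaced by the task-$t$ set $\bPn^t$. Since each $\pn^\tau\in\bPn^\tau$ and $\pf$ are Lipschitz by assumption, $\pf^t$ is Lipschitz as a finite sum, so the hypotheses of Theorem \ref{cl thm} are satisfied verbatim.

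The core step is Step 1: invoke the uniform concentration bound established in the proof of Theorem \ref{cl thm} (equation \eqref{unif conv}) at $T=1$, treating $\pf^t$ as frozen. This yields, with probability $1-2e^{-\tau}$, the single-task bound
\begin{align*}
\sup_{h\in\Hb,\,\pn\in\bPn^t} \bigl|\Lc_t(h,\pn+\pf^t) - \Lch_{\Sc_t}(h,\pn+\pf^t)\bigr| \;\leq\; \eps_N,
\end{align*}
where $\eps_N:=\sqrt{\ordet{\cc{\Hb}+\cc{\bPn^t}+\tau}/N}$. Step 2 is the standard ERM argument: since $(h^t,\pn^t)$ minimizes $\Lch_{\Sc_t}(\,\cdot\,+\pf^t)$ over $\Hb\times\bPn^t$, two applications of the uniform bound give $\Lc_t(h^t,\pn^t+\pf^t) \leq \min_{h,\pn\in\bPn^t}\Lc_t(h,\pn+\pf^t) + 2\eps_N = \Lc^t_\seq + 2\eps_N$, where $\Lc^t_\seq$ is the population-optimal risk using the empirical frozen representation $\pf^t$ from Assumption \ref{fin comp}.

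Step 3 decomposes this in terms of the mismatch quantities defined in the paper. By definition of $\ME{t}$, we have $\Lc^t_\seq = \Lci{t}_\seq + \ME{t}$, and by Definition \ref{def pop seq} of $\MS{t}$ we have $\Lci{t}_\seq = \Lci{t} + \MS{t}$. Chaining these gives
\begin{align*}
\Lc_t(h^t,\pn^t+\pf^t) - \Lci{t} \;\leq\; \MS{t} + \ME{t} + 2\eps_N,
\end{align*}
which is the first advertised bound after absorbing the factor of $2$ into the $\ordet{\cdot}$ on $\eps_N$. For the second bound, note that $\Lcb^t_\seq = \min_{h\in\Hb}\Lc_t(h,\pn^t+\pf^t) \leq \Lc_t(h^t,\pn^t+\pf^t)$ by construction. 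Subtracting $\Lci{t}_\seq$ and bounding the right side using Step 2 yields $\MA{t} = \Lcb^t_\seq - \Lci{t}_\seq \leq \Lc^t_\seq - \Lci{t}_\seq + 2\eps_N = \ME{t} + 2\eps_N$, which is the second bound.

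The proof is essentially bookkeeping — no new probabilistic machinery beyond the covering/Hoeffding argument already invoked in Theorem \ref{cl thm}. The only mild subtlety will be emphasizing that $\pf^t$, although data-dependent through the first $t-1$ tasks, is independent of the fresh sample $\Sc_t$ used to train task $t$; this is what allows the concentration bound to be applied conditionally on $\pf^t$ without a further union bound over the (finite but possibly large) realizations of $\pf^t$. I would state this explicitly once at the outset and then proceed as above.
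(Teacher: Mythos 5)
Your proposal is correct and follows essentially the same route as the paper's proof: the paper likewise invokes Theorem \ref{cl thm} with $T=1$ treating $\pf^t$ as the frozen extractor, chains the identities $\Lc^t_\seq=\Lci{t}_\seq+\ME{t}$ and $\Lci{t}_\seq=\Lci{t}+\MS{t}$ for the first bound, and obtains the second via $\MA{t}\leq[\Lc(h^t,\pf^{t+1})-\Lc^t_\seq]+\ME{t}$ together with the uniform concentration event \eqref{unif conv}. Your explicit remark that $\pf^t$ is independent of the fresh sample $\Sc_t$ (so the concentration applies conditionally, with Lipschitz constants bounded uniformly over realizations of $\pf^t$) is a point the paper leaves implicit, and is a welcome clarification rather than a deviation.
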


The following theorem is our main guarantee on sequential CRL. It is obtained by stitching $T$ applications of Lemma \ref{lem seq} which adds a single new task.
\begin{theorem}\label{seq thm} Suppose we solve the sequential continual learning problem \eqref{crlseq} for each task $1\leq t\leq T$ to obtain hypothesis $(h^t,\pn^t)_{t=1}^T$. The $t$'th model uses the prediction $h^t\circ \phi^{t}$ where $\phi^t=\pf+\sum_{\tau=1}^{t-1}\pn^\tau$. Consider the same core setting as in Theorem \ref{cl thm}: Namely, we assume Lipschitz hypothesis sets $\bPn^t,\Hb$, Lipschitz loss function $\ell:\R\times \R\rightarrow[0,1]$ and bounded input feature set $\Xc$ all with respect to Euclidean distance. Recall that $\Lci{t}$ is the optimal risk for task $t$. Suppose the complexity of each $\bPn^t$ is upper bounded by $\Ccn$ as in \eqref{ccn decay} for $t\in [T]$. For some absolute constant $C>0$, with probability $1-2Te^{-\tau}$, the solutions $(h^t,\pn^t)_{t=1}^T$ of \eqref{crlseq} satisfy the following cumulative generalization bound (summed over all $T$ tasks)
\begin{align}%\Lc((h^t,\pn^t)_{t=1}^T,\pf)&:=
\underbrace{\sum_{t=1}^T\left(\Lc_t(h^t,\phi^t)-\Lci{t}\right)}_{\text{excess test risk w.r.t.~oracle}}&\leq \underbrace{\sum_{t=1}^T\MS{t}}_{\text{sequential representation mismatch}}+\underbrace{T^2\left(\cz+\sqrt{\frac{\ordet{\cc{\Hb}+\Ccn+\tau}}{N}}\right)}_{\text{cost of finite sample learning}}.\label{gen bound 3}
%\quad  \Gamma\sqrt{\frac{\cc\bPne+\cc\Rho+\cc\Hb+\tau}{n}}
\end{align}
In light of Def.~\ref{def pop seq}, we can write the suboptimality with respect to solving sequential problems with $N=\infty$ as
\begin{align}%\Lc((h^t,\pn^t)_{t=1}^T,\pf)&:=
\underbrace{\sum_{t=1}^T\left(\Lc_t(h^t,\phi^t)-\Lci{t}_\seq\right)}_{\text{excess test risk w.r.t.~sequential learning}}&\leq \underbrace{T^2\left(\cz+\sqrt{\frac{\ordet{\cc{\Hb}+\Ccn+\tau}}{N}}\right)}_{\text{cost of finite sample learning}}.\label{gen bound 4}
%\quad  \Gamma\sqrt{\frac{\cc\bPne+\cc\Rho+\cc\Hb+\tau}{n}}
\end{align}
\end{theorem}
\noi \textbf{Discussion.} Here are a few remarks in place. First, we state the sum of test errors rather than the average. Secondly, observe that \eqref{gen bound 3} compares the test errors to the optimal possible errors $\Lci{t}$. On the right hand side there are two terms: ``representation mismatch'' and ``cost of finite sample learning''. 

``Representation mismatch'' quantifies the population-level error that arises even if each task had access to infinite samples. This is because, even if each task solved \eqref{crlseq} perfectly, the resulting sequence of representations does not have to be optimal for the next task and $\MS{t}$ precisely captures this suboptimality. Recall that this population-level gap arises from Definition \ref{def pop seq}. {This also emphasizes that we should train diverse tasks firstly, since diverse features learned from previous tasks help reduce $\MS{t}$ due to its highly relevant representation. }

The ``cost of finite sample learning'' term originally captures the finite sample effects, and it is proportional to the statistical error rate of solving \eqref{crlseq} for the first task-only i.e.~$\sqrt{\frac{\Cc(\Hb)+\Ccn}{N}}$. Here, recall from \eqref{ccn decay} that $\Ccn$ is an upper bound to the complexities of $\bPn^1,\dots,\bPn^T$. Perhaps unexpected dependence is the quadratic growth in $T$. This is in contrast to linear growth one would get from adding tasks simultaneously as in Theorem \ref{cl thm}. This quadratic growth arises from the accumulation of the finite-sample representation suboptimalities as we add more tasks. Specifically, Assumption \ref{fin comp} helps guarantee that feature-extractors of tasks $1,\dots,t-1$ are useful for task $t$; however, as more tasks are added they incur more divergence from $(\phi^{\st,t})_{t=1}^T$. Each new task has a finite sample size and contributes to this divergence and our analysis leads to $\order{T^2}$ upper bound on the error. $\cz$ is an additional mismatch term that makes Assumption \ref{fin comp} significantly more flexible (albeit ideally, it is close to zero). Finally, it would be interesting to explore the tightness of these bounds for concrete analytical settings (e.g.~\ref{crlseq} with linear models or neural nets), run more experiments and further study the role of finite sample effects \& representation divergences. 

% \red{\eqref{bound fin gen} quantify the generation bound of task $t$ trained in CL manner. Assume task $1,\dots,t-1$ are highly diverse tasks, then the learned features can fit task $t$ perfectly and $\MS{t}\approx0$. Then the only error is caused by the finite sample size }

%Recall $(h^t,\pn^t)$ is the sequence of (classifier,representation) pairs. 

%\vspace{-7pt}
\section{Inference-efficient Continual Representations via ESPN} \label{sec: espn_detail}
%We will denote the complement of \red{a set $S$}\ylm{Is set here same as mask?} by $\comp{S}$.
% whereas \red{they are same as $\bt_0$ outside of the mask $\ma_t$
In this section, we introduce more implementation and evaluation details of our ESPN algorithm. We use the phrases \emph{mask} and \emph{sub-network} interchangeably because we obtain the task sub-network by masking weights of the supernet. This sub-network is the nonzero support of the task, that is, the task-specific model is obtained by setting other weights to zero. We assume a sequence of tasks $\{\Tc_t$, $1\leq t\leq T\}$ is received during training time, where $t$ is task identifier, $T$ is the number of tasks, and $\Tc_t$ has training dataset $\Sc_t=\{(\x_{ti},y_{ti})\}_{i=1}^{N_t}$ (In experiments, $(N_t)_{t=1}^T$ do not need to be the same.). 
Given task sequence and a single model, our goal is to find optimal sparse sub-networks that satisfy both FLOPs and sparsity restrictions without performance reduction and knowledge forgetting. FLOPs constraint is important for efficient inference whereas sparsity constraint is important for adding all tasks into the network even for large number of tasks $T$. {To this end, we use joint channel and weight pruning strategy.} Let $\ell$ be a loss function, $f$ be a hypothesis (prediction function) and $\bt\in\R^p$ denote the weights of $f$. We focus on task $\Tc_t$, assuming $\Tc_1,...,\Tc_{t-1}$ are already trained. Let $\m_t\subset[p]$ be the nonzero support of task $t$ and $\ma_t=\cup_{\tau=1}^{t-1} \m_\tau$ be the combined support until task $t-1$ (mask of frozen weights for training $\Tc_t$). Initially $\ma_1=\emptyset$. Let $\bt_t\in\R^p$ be the model weights at time $t$. Note that all the trained weights of the previous tasks ($\Tc_1\dots \Tc_{t-1}$) lie on the sub-network $\ma_t$. We use the notation $\bt\odot\m$ to set the weights of $\bt$ outside of the mask $\m$ to zero. 
% Let $\Fc$ be a set of all available subnetworks and our goal is to find an optimum that satisfies the FLOPs restriction but keeps its performance and zero-forgets the old tasks, and to implement continual learning, our method should also adapt to many tasks. To reach that goal, we introduce two level search work. Assume there is a SuperNet/MotherNet/BaseNet/DenseNet
% and we could derive many subnetworks from it by channel pruning. Therefore at first level, prune channels to find the optimal subnetwork $f_t^*\in\Fc$ that satisfies its FLOPs restriction. Next in order to load the SuperNet with as many tasks as we can, we do weight pruning over selected channels to release more entries so that the following tasks could reuse them. We freeze the trained weights useful for task identifier and during all the way of continual learning, we will not touch and update the frozen entries so that the performance over previous tasks could be fully recovered.
Define the loss $\Lc_{\Sc_t}(\bt)=\frac{1}{N_t}\sum_{i=1}^{N_t}\ell(y_{ti},f(\x_{ti};\bt))$. The procedure for learning task $\Tc_t$ is formulated as the following optimization task that updates {supernet} weight/mask pair and {returns $(\bt_t,\m_t)$} given $(\bt_{t-1},\ma_{t})$:
%Algorithm \ref{algo 1} provides an efficient approximate solution.
\begin{align}
    \bt_t,\m_t=\arg&\min_{\bt,\m}~~~\Lc_{\Sc_t}(\bt\odot\m)\tag{ESPN-OPT}\label{cl_prob}\\
    \text{s.t.} 
    &~~~\FLOP(\m)\leq \OFLOP_t,\nn\\
    &~~~\mn=\m\setminus\ma_{t},\nn\\
    &~~~\NNZ(\mn)\leq \ONNZ_t,\nn\\
    &~~~{\bt\odot\ma_{t}}=\bt_{t-1}\odot\ma_{t}.\nn
\end{align}
Here $\m_t$ is the channel-constrained mask that corresponds to the sub-network of $\Tc_t$ and $\bt_t\odot\m_t$ are the weights we use for task $\Tc_t$ and the prediction function is $f(\cdot;\bt_t\odot\m_t)$. {The updated mask until task $t$ is obtained by $\ma_{t+1}=\m_t\cup \ma_{t}$, which is also seen as mask of frozen weights for training task $t+1$.} $\FLOP(\m)$ and $\NNZ(\m)$ return the FLOPs and nonzeros of a given mask $\m$. $\OFLOP_t$ and $\ONNZ_t$ are the FLOPs and nonzero constraints of task $t$. Observe that we only enforce NNZ constraint on the new weights $\mn$ whereas FLOPs constraint applies on the whole sub-network. {The last equation in \eqref{cl_prob} highlights that the weights of earlier tasks on $\ma_{t}$ are kept frozen}. 
%only updates few  \ihere Assume the optimum are $(\hat\m_t,\hbtn)$, then we have $\hbtf^t=\hbtf^{t-1}\cup(\hbtn\odot\hat\m_t)$. The specific predictor of $\Tc_t$ can be written as $f(\x;\hbtf^t\odot\hat\m_t)$.
% subnetwork function with FLOPs no more than its target ${\text{FLOP}}_t^{tgt}$. Given $f_t$, its parameters can be divided as $\bt^{f_t}=\bt_\text{NEW}^{f_t}\cup\hat\bt_\text{FRZ}^{f_t}$ where $\bt_\text{NEW}^{f_t}$ are free weights and $\hat\bt_\text{FRZ}^{f_t}$ are frozen. $\m$ is weight mask over $\bt^{f_t}$. Assume the optimum are $(\hat f_t,\hat\m_t,\hat\bt_{\text{NEW}}^{f_t})$, then the predictor of $\Tc_t$ is over function $\hat f_t$ loaded with parameters $(\hat\bt_\text{NEW}^{f_t}\cup\hat\bt_\text{FRZ}^{f_t})\odot\hat\m$. When training procedure is done for $\Tc_t$, update $\hat\bt_\text{FRZ}$ by $\hat\bt_\text{FRZ}\cup(\hat\bt_\text{NEW}^{f_t}\odot\hat\m)$ where $\hat\bt_\text{FRZ}$ is total frozen weghts over SuperNet.
% In practice, we propose Algorithm \ref{algo 1} to solve \eqref{cl_prob}.
% \input{LaTeX/sec/algo}
%Algorithm \ref{algo 1} has \dots inside. 
In practice, FLOPs \& NNZ constraints in \eqref{cl_prob} lead to a combinatorial problem. We propose Algorithm \ref{algo 1} to (approximately) solve this problem efficiently which learns the new sub-network in three phases: pre-training (over all free weights), gradual pruning (to satisfy constraints and obtain $\mnt$), and fine-tuning (to refine the weights on $\mnt$). While not shown in Algorithm \ref{algo 1}, we also introduce the following innovations.

% !TeX root = ../ESPN.tex
%  \begin{wrapfigure}{rt}{0.53\textwidth}
%     \begin{minipage}{0.53\textwidth}
%     \vspace{-12pt}
      \begin{algorithm}[t]\caption{Efficient Sparse PackNet (ESPN-{$\gamma$})}
%      \small
    \begin{algorithmic}[1]
    \Require Task sequence $\{\Tc_t\}$; model weights $\bt$; step size $\eta$; pre-training, pruning, fine-tuning durations; \clr{FLOPs constraint $\gamma$; weight allocation parameter $\alpha$}.
    \State Set of task masks $\Mc\gets\emptyset$%\Comment{Initially $\Mc=\emptyset$}
    %\State Set initial $\m$ equal to all True's; initial percentage of nonzeros in scaler factor $p=1$.
    % \State \texttt{\comc{\% Initially all weights are free}}%\textcolor{black}
    \State Set of new weights {$\mn\gets[p]$} \Comment{Initially all weights are new/free}
    %\State Set initial $\M$ equal to all \texttt{True}'s. 
    \For {$\Tc_t \in \{\Tc_t\}$}
    % \State \texttt{\comc{\% Pre-train over all free weights}}
    \While {pre\_training}%\Comment{Pre-train over $\mf$}
    \State $\bt\leftarrow\bt-\eta\cdot\nabla\Lc_{\Sc_t}(\bt)\odot \mn$
    \Comment{Pre-train over all free weights}
    \EndWhile
    \State Initialize sub-network mask $\m_t\gets[p]$
    % \State Determine the sequence of FLOPs, nonzero (NNZ) 
    % \State targets for pruning iterations: $\OFLOP_t^{\text{iter}}, \ONNZ_t^{\text{iter}}$
    % \State \texttt{\comc{\% Weight \& channel pruning for $\m_t$}}
    %\State \texttt{\comc{\% Each iteration prunes $\m_t$}}
    % \For {{iter \textbf{in} pruning\_iter}}%\Comment{Prune over $\m_t$}
    \While {\clr{FLOPs($\gamma$) \& weight\_sparsity($\alpha$) constraints not satisfied}} %\Comment{Break when achieve $\gamma$ and $\alpha$}
    \State Gradual channel \& weight pruning: update $\m_t$\label{algo:pruning}
    \State $\bt\leftarrow\bt-\eta\cdot\nabla\Lc_{\Sc_t}(\bt)\odot (\m_t\cap\mn)$
    \EndWhile
    % \EndFor
    %\State $\hat\m_t\leftarrow\m_t$
    % \State \texttt{\comc{\% Fine-tune the free weights in $\m_t$}}
    \While {fine\_tuning}%\Comment{Fine-tune the free weights in $\m_t$}
    \State $\bt\leftarrow\bt-\eta\cdot\nabla\Lc_{\Sc_t}(\bt)\odot(\m_t\cap\mn)$
    \Comment{Fine-tune over selected free weights}
    \EndWhile
    % \State \texttt{\comc{\% Save the mask of $\Tc_t$}}
    \State $\Mc\leftarrow\Mc\cup\{t:\m_t\}$\Comment{Save the subnet mask of $\Tc_t$}
    %\State \texttt{\comc{\% Remove the weights in $\m_t$}}
    \State $\mn\leftarrow\mn \setminus \m_t$ \Comment{Update the free weights set $\mn$\ylm{Remove the weights in $\m_t$}}
    % \Comment here consider $\Mc^{(\tau)}$ as binary mask
    % \State Update $\M$ depending on $\Mc^{(\tau)}$ \Comment zero entries are free channels in $\Mc's$
    %\State Freeze the weights of the masked channels.
    \EndFor
    %\State Each task $\mathcal T_\tau$ has a ($\Mc^{(\tau)}, \Gamma^{(\tau)}$) pair for inference and each inference is under FLOPs restriction.
    \end{algorithmic}\label{algo 1}
\end{algorithm}
%    \end{minipage}      \vspace{-30pt}
%
%  \end{wrapfigure}

%\begin{myitemize}
%In our convolutional experiments, i
\noindent$\bullet$ \textbf{Trainable task-specific BatchNorm.} 
We train separate BatchNorm layers for each task. This has multiple synergistic benefits. First, our algorithm trains faster and generalizes better than PackNet which does not train BatchNorm weights (see Table \ref{CIFARtable}). Specifically, training BatchNorm weights allows ESPN to re-purpose the (frozen) weights of the earlier tasks with negligible memory cost. BatchNorm weights also guide our channel pruning scheme described next.

\noindent$\bullet$ \textbf{FLOP-aware pruning.} Many of the prior works on channel pruning (\cite{liu2017learning,zhuang2020neuron}) focus on reducing the number of channels rather than the computation/FLOPs cost of the channels which varies across layers. {As {shown in Fig.~\ref{fig:CIFAR10_ratio}} in appendix, in order to satisfy FLOPs constraint $\gamma$, prior methods require numerous channels to be pruned}. This results in unsatisfactory performance especially under aggressive FLOPs constraint {(shown in Fig.~\ref{fig:CIFAR10_acc})}. However in CL setup, in order to train a single model with many tasks, a significantly larger supernet with high capacity is needed compared to a single task requirement, and we aim to find a sub-network with very few FLOPs without compromising performance. To fit our specific needs for channel pruning, in this paper we present an innovative channel pruning algorithm called {FLOP-aware channel pruning} that preserves the performance up to 80\% FLOPs reduction in our SplitCIFAR100 experiments (Table \ref{CIFARtable}). Following \cite{liu2017learning}, we consider BatchNorm weights as trainable saliency scores for convolutional channels and prune all channels with scores lower than a certain threshold by setting them to zero. Let $\bGam$ be the BatchNorm vectors. Given dataset $\{(\x_i,y_i)\}_{i=1}^N$, in practice, rather than solving the constrained problem \eqref{cl_prob}, Algorithm~\ref{algo 1} minimizes a regularized objective $\Lc_{\Sc_t}(\bt\odot\m)+\Rc(\bGam,\m)$. Here $\Rc$ is a regularization term to promote channel sparsity in \eqref{app eq:reg}. % our FLOP-aware channel pruning method is given as follows (as also stated in the main body)

% At the end of the training, we keep the channels with higher scores.
%Similar to a fully-connected network, we regard neurons as channels and prune neurons to reduce FLOPs.  We introduce a trainable value to score each neuron and prune by keeping the largest value related neurons.
%we found that only focusinconstraint and, in our experiments, led to unsatisfactory performance under  less FLOPs reduction and these methods lead to visible performance decreasing when pruning numerous FLOPs. 
%In our continual learning case, tasks are simple and model is relatively large. 

%To prune channel effectively and iteratively, we introduce channel mask in continuous value ($\Mc_l\in\R^{n_l}, l\in [L]$) to induce sparsity where $L$ is layer count each with $n_l$ channels/neurons. In our experiments, inspired by \cite{liu2017learning} we adopt BatchNorm layers and extract their weights as masks. Then we have
%\begin{align*}
%	\x_l=(\Mc_l\cdot \text{norm}(\text{conv}(\x_{l-1})))_+,~~\text{norm}(\z)=\frac{\z-\mu_\z}{\sqrt{\sigma_\z^2+\epsilon}}
%\end{align*}
%where $\x_l$ is the output feature of $l^{th}$ layer and $\Mc_l$ is trainable masked weights of BN layer.
For an $L$ layer network, use $\bGam_l$ to denote the $l$'th {BatchNorm weights} for $l\in[L]$. Intuitively, we wish to use $\ell_1$-regularization on $\bGam$. However, since layers of the network show variation, layerwise regularization parameters $(\lambda_l)_{l=1}^L$ are needed. Instead of designing $\lambda_l$ by trial-and-error which is time-consuming and expert-dependent, we introduce a method that chooses $\lambda_l$ automatically, fine-tunes $\lambda_l$ during gradual pruning and adapts to the global FLOPs constraint.  %We adopt layerwise regularization parameter $\lambda_l$ and relate it to FLOPs load. Our goal is to push layers with more FLOPs sparser.
 Specifically, $\lambda_l$ is chosen based on the {FLOPs load of a channel} determined by its input feature dimensions and operations. Here an implicit goal is pruning as few channels as possible while achieving maximum FLOPs reduction. To achieve these goals, we use the following FLOP-weighted sparsity regularization %as followed 
 \begin{align}
	\label{app eq:reg}\Rc(\bGam,\m)&=\sum_{l=1}^L\lambda_l\|\bGam_l\|_1,~~\text{where}~~\lambda_l=g(\FLOP_l(\m)).
	% &=\sum_{l\in[L]}\frac{g(\text{FLOP}_l(\Mc))}{\sum_{j\in[L]}g(\text{FLOP}_j(\Mc))}\|\Mc_l\|_1
\end{align}
%where $\bGam$ is channel/neuron weights,
Here $\FLOP_l(\m)$ calculates the FLOPs load for a channel in the $l$'th layer of the subnet, and $g(\cdot)$ is a monotonically increasing function. Notably, channels in the same layer load the same number of FLOPs. We use $\ell_1$-penalty to enforce unimportant elements to zeros and prune the channels with the smallest weights over all layers. Since $g(\cdot)$ is increasing, channels costing more FLOPs are assigned with larger $\lambda_l$ and are pushed towards zero, thus they are easier to be pruned. Additionally, since $\FLOP_l(\cdot)$ is based on subnet $\m$, $\lambda_l$ is automatically tuned while we use gradual pruning (Line~\ref{algo:pruning} in Algorithm \ref{algo 1}). In our experiments, we use $g(x)=C\sqrt{x}$ for a proper scaling choice $C>0$. {Here $C$ can be seen as a normalized term and in detail we have
\begin{align*}
    \lambda_l=\frac{\sqrt{\FLOP_l(\m)}}{\sum_{i=1}^L\sqrt{\FLOP_i(\m)}}.
\end{align*}}
% $C=1/\sum_{l\in[L]}g(\FLOP_l(m_l))$
%\ylm{C is a function of g and g is a function of C. Maybe just write as C is a scale constant}

% Results show that this approach achieves good results especially when the network is pruned to very sparse. 

% Given a single task, structure pruning is trying to find and remove insignificant channels/neurons without hurting performance. To achieve this goal, we propose a FLOP-base sparsity regularization

% where $\lambda_l$, $\Mc_l$ and $\text{FLOP}_l(\cdot)$ denote regularization parameter, channel mask and FLOP calculation function of $l^{th}$ layer. We apply $\ell_1$-norm to achieve sparsity, since it enforces unimportant elements to zeros. $g(\cdot)$ is monotonically increasing function used to calculate $\lambda_l$ automatically.

%Weight pruning is a active field under research and abundant evidences show that deep neural network can reduce over 90\% parameters without compromising accuracy \cite{frankle2018lottery}.

%benefits from this finding to do continual learning. However, 
% it designs pruning ratio by hand and it runs out of parameters quickly. 
\noindent$\bullet$ \textbf{Weight allocation.} Since we do not modify the supernet architecture, without care, supernet might run out of free weights if there is a huge number of tasks. While original PackNet paper (\cite{mallya2018packnet}) also uses weight pruning, since they consider relatively fewer tasks, they don't develop an algorithmic strategy for allocating the free weights to new tasks. In our experiments, we introduce a simple weight allocation scheme to assign $\ONNZ_t$ {depending on the number of remaining free weights}. Let $p$ be the total number of weights, $p_t$ be the total number of weights used by tasks $1$ to $t$ and $p_0=0$. We set 
\begin{align}\label{wa-eq}
\ONNZ_t=\lceil (p-p_{t-1})\cdot\alpha\rceil\quad\text{for some}\quad 0<\alpha<1.
\end{align} Here $\alpha$ is {the \emph{weight-allocation} level} and a new task gets to use $\alpha$ fraction of all unused weights in the supernet. We emphasize that weight allocation controls the number of new nonzeros allocated to a task. A new task is allowed to use all of the (frozen) nonzeros that are allocated to the previous tasks (as long as FLOPs constraint is not violated).

\subsection{Experimental Setup}\label{sec:setting}
We evaluate the performance of our proposed ESPN in terms of accuracy and efficiency metrics on three datasets: SplitCIFAR100, RotatedMNIST, and PermutedMNIST. We compare ESPN to numerous baselines, which include training each task individually, multitask learning (MTL), PackNet~(\cite{mallya2018packnet}), CPG~(\cite{hung2019compacting}), RMN~(\cite{kaushik2021understanding}), and SupSup~(\cite{wortsman2020supermasks}). The last four methods are zero-forgetting CL methods. \red{To the best of our knowledge, none of the existing methods consider the inference-efficiency of individual tasks within the CL setting. In our experiments, in addition to embedding each task to a sparse subnet, we also enforce each subnet to be computation-efficient via our channel pruning technique while minimally harming accuracy. }
%\som{Emphasize no others are efficient: e.g.~SupSup is learning full-dimensional parameter mask}
%\som{}
%\som{Empirically verifying our FLOP constraint vs theirs}

\noindent{\textbf{Datasets. }} SplitCIFAR100, RotatedMNIST and PermutedMNIST are popular datasets for continual learning that we also use in our experiments. We follow the same setting as in \cite{wortsman2020supermasks}. For SplitCIFAR100 dataset, we randomly split CIFAR100 (\cite{krizhevsky2009learning}) into 20 tasks where each task contains $5$ classes, $2500$ training samples, and $500$ test samples. RotatedMNIST is generated by rotating all images in MNIST by the same degree. In our experiments, we generate $36$ tasks with $10,20,\ldots, 360$ degree rotations {and train in a random order}. PermutedMNIST dataset is created by applying a fixed pixel permutation to all images, and we created $36$ tasks with independent random permutations. 

\begin{table*}[t]
    \caption{Continual learning results on SplitCIFAR100 tasks with ResNet18 model using different algorithms. 20 tasks are trained in order. Results for each task is the average accuracy over 5 independent runs. Last column shows the average accuracy for all the tasks with standard error of 5 runs. Individual-0.2/-1 and ESPN-0.2/-0.5/-1 denote the methods with different level of FLOPs constraints.}
    \centering\hspace{-7pt}
    \scriptsize
    \setlength{\tabcolsep}{0.36mm}{
    \begin{tabular}{lcccccccccccccccccccc|c}
    \midrule
    \hfill \textbf{Task ID} $\rightarrow$ & 1 & 2 & 3 & 4 & 5 & 6 & 7 & 8 & 9 & 10 & 11 & 12 & 13 & 14 & 15 & 16 & 17 & 18 & 19 & 20 & avg$\pm$ error\\
    \midrule
    CPG &80.2 &87.9 &92.0 &77.7 &87.7 &81.1 &88.2 &93.8 &87.8 &80.7 &83.3 &86.0 &90.9 &89.4 &81.4 &92.6 &87.2 &90.6 &88.8 &85.5 &86.6$\pm$0.27 \\
    
    RMN &78.5 &83.7 &86.6 &68.4 &85.3 &73.4 &77.8 &88.1 &79.7 &68.3 &71.2 &76.7 &85.2 &79.7 &66.8 &81.8 &70.7 &79.3 &71.9 &72.7 &77.3$\pm$0.07 \\

    % PackNet &78.3 &87.8 &90.7 &76.8 &87.7 &80.7 &88.4 &93.4 &88.9 &83.4 &82.8 &86.9 &92.1 &90.9 &83.3 &93.1 &87.6 &90.2 &89.6 &85.9 &87.9$\pm$0.13 \\
    PackNet &84.5 &89.6 &92.6 &78.7 &89.8 &83.7 &89.2 &94.4 &89.7 &83.5 &84.6 &87.6 &93.0 &91.9 &84.2 &93.2 &86.6 &91.6 &89.7 &86.8 &88.3$\pm$0.10\\
    
    SupSup &86.5 &91.8 &92.9 &81.0 &89.1 &83.4 &88.5 &95.1 &87.8 &83.0 &81.9 &89.0 &94.3 &93.4 &86.6 &94.0 &91.0 &92.6  &91.2 &88.3 &89.1$\pm$0.06\\
    
    \midrule
    % Multitasks & 92.0 &93.8 &96.2 &85.6  &93.2 &87.2  &93.9 &96.8  &91.6 &88.0   &89.8 &92.9 &95.6  &95.6 &88.9 &96.8 &94.6 &96.0 &95.1 &92.9 &92.8 \\
    MTL &89.8 &93.0 &95.2 &82.5 &91.0 &85.7 &92.0 &95.8 &91.3 &86.1 &86.3 &90.0 &94.1 &94.7 &86.7 &95.2 &93.6 &94.8 &92.9 &90.0 &91.0$\pm$0.21\\
    
     % Individual-0.2 &86.2 &89.0 &92.3 &78.9 &87.6 &83.9 &85.5 &93.6 &85.0 &82.7 &81.5 &86.5 &92.8 &91.6 &84.2 &93.8 &89.6 &92.8 &91.2 &88.7 &87.9$\pm$0.10\\
     Individual-0.2 &86.4 &88.8 &93.0 &80.4 &87.5 &85.1 &85.6 &94.7 &85.7 &83.4 &82.3 &88.2 &92.5 &93.2 &85.0 &94.5 &90.5 &92.7 &91.7 &89.5 &88.5$\pm$0.07\\
   
    % Individual-1 &85.2 &88.8 &93.2 &80.0 &87.8 &84.1 &84.8 &93.9 &84.7 &82.0 &81.9 &88.7 &93.1 &92.6 &84.0 &93.0 &90.6 &92.7 &91.4 &88.8 &88.1$\pm$0.12\\
    Individual-1 &86.2 &89.2 &93.2 &79.8 &88.6 &84.1 &85.4 &94.6 &87.0 &82.2 &82.8 &88.3 &93.4 &92.4 &85.6 &94.5 &90.8 &93.0 &92.0 &89.9 &88.6$\pm$0.10\\
    
    \midrule
    % ESPN-0.2 &84.2 &91.3 &93.5 &80.6 &88.2 &83.9 &89.6 &94.8 &89.5 &82.5 &83.4 &87.0 &93.2 &91.6 &83.2 &94.0 &89.4 &91.2 &89.6 &86.4 &\textbf{88.4}$\pm$0.04 \\
    
    ESPN-0.2 &86.4 &90.8 &93.4 &82.4 &89.5 &83.1 &89.8 &95.2 &89.8 &83.5 &85.2 &88.6 &93.7 &92.8 &85.1 &94.8 &88.8 &91.8 &91.2 &86.8 &\textbf{89.1}$\pm$0.12\\
    
    % ESPN-0.5 &84.2 &90.8 &93.7 &80.6 &89.1 &84.4 &90.7 &95.7 &90.3 &84.6 &86.4 &89.0 &93.5 &93.1 &85.3 &95.3 &90.0 &92.3 &91.6 &88.4 &\textbf{89.4}$\pm$0.13 \\
    
    ESPN-0.5 &86.8 &91.3 &94.2 &81.6 &89.2 &84.0 &90.0 &95.7 &90.3 &84.6 &85.3 &89.4 &94.1 &93.0 &85.5 &94.7 &90.2 &93.1 &92.2 &88.8 &\textbf{89.7}$\pm$0.11\\
    
    ESPN-1 &85.0 &91.0 &93.5 &82.3 &89.1 &84.4 &90.2 &95.8 &91.1 &85.2 &86.4 &89.6 &94.3 &93.8 &85.6 &95.4 &90.2 &93.0 &92.9 &88.0 &\textbf{89.8}$\pm$0.14\\
    \midrule
    \end{tabular}}\vspace{-7pt}\hspace{-7pt}\label{CIFARtable}
\end{table*}
% \vspace{-7pt}

%with fewer channels compared to standard ResNet18 
\noindent{\textbf{Models and implementation. }} For SplitCIFAR100, {following \cite{lopez2017gradient} we use a variation of ResNet18 model with fewer channels.} For each task, we use a batch size of 128 and Adam optimizer (\cite{kingma2014adam}) with hyperparameters $(\beta_1,\beta_2)=(0,0.999)$. As shown in Algorithm~\ref{algo 1}, for each task we apply pretraining, pruning, and finetuning strategies. First, we pretrain the model for $60$ epochs with learning rate $0.01$. Then, we gradually prune the channels and weights within $90$ epochs using the same learning rate. For the finetuning stage, we apply cosine decay (\cite{loshchilov2016sgdr}) over learning rate, starting from $0.01$, and train for $100$ epochs. Therefore, each task trains for $250$ epochs in total. 

For RotatedMNIST and PermutedMNIST experiments, we use the same setting \blue{of FC1024 model} in \cite{wortsman2020supermasks}. This is a fully connected network with two hidden layers of size $1024$. We train for $10$ epochs with RMSprop optimizer (\cite{graves2013generating}), batch size of $256$, and learning rate $0.001$.  
%In these experiments, similar to BatchNorm weights, we employ trainable weights which act as saliency scores for fully-connected neurons. 
The number of pretraining, pruning, and finetuning epochs are $3$, $4$, and $3$, respectively.

 %\som{clarify multitask has 20 heads (i.e. fina layers) and same backend}
%\Salnote{same as ESPN, what does it mean?}
%\som{multitask=CL only diff is parallel vs sequential}

%is shared and each task gets a separate 5-way classifier.
% Since all tasks are jointly trained, we expect to get good representation and task gets improved by the other tasks. 
As for comparison baselines, MTL and individual training baselines follow the same configurations (e.g.~architecture, hyperparameters) as ESPN. In SplitCIFAR100 experiments, labels are not shared among different tasks. Thus, each task is assigned a separate classifier head while sharing the same backend supernet as a feature extractor. {Unlike SplitCIFAR100, all the tasks in RotatedMNIST and PermutedMNIST experiments share the same head via weight pruning.} MTL simultaneously trains all 20 tasks while sharing the backend supernet. In individual training, each task trains their own backend. Finally, CPG, RMN, and SupSup are all trained with their own publicly available codes but over the same ResNet18 model. To ensure fair comparison, we do not allow dynamic model expansion/enlargement in CPG. Similarly, for PackNet, instead of using its original setting with fixed pruning ratio for each task, we run it with our \emph{weight allocation} strategy to make it easier to compare. These baselines are all evaluated in Table~\ref{CIFARtable} which is discussed below.

\subsection{Investigation of Inference Efficiency}

%The evaluations in this section will use differen we will evaluate our approach as well as the Individual training baseline for different FLOP constraints. Le

% resulting in $\ONNZ_t=\lfloor (p-p_{t-1})\cdot\alpha\rfloor$ where $p-p_{t-1}$ is the number of free weights

\noindent{\textbf{SplitCIFAR100.}} Table~\ref{CIFARtable} presents results for our experiments on SplitCIFAR100. We use our FLOP-aware pruning technique to prune the channels and apply weight allocation with parameter $\alpha=0.1$. 
% We also use weight allocation \ylm{(as discussed in Eq.~\ref{wa-eq}) \Salnote{This equation is in supplementary}} with parameter $\alpha=0.1$.
\blue{Since the impact of classifier head is rather negligible as it contains $<0.1\%$ weights and $<0.01\%$ FLOPs of the backend, we evaluate the sparsity and FLOPs for each task inside backend.} To compare the performance of different methods, we use the same random seed to generate $20$ tasks so that task sequence is exactly the same over all experiments. We conduct ESPN experiments with $20\%$, $50\%$\blue{, and $100\%$} of FLOPs constraints corresponding to ESPN-0.2, ESPN-0.5\blue{, and ESPN-1} in Table~\ref{CIFARtable}.

The results of Individual-0.2/-1 show that our FLOP-aware pruning technique can effectively reduce the computation requirements while maintaining the same level of model accuracy.  Moreover, PackNet performing better than both CPG and RMN shows the benefit of our weight allocation technique. \blue{We remark that CPG method performs pruning and fine-tuning multiple times until it finds the optimal pruning ratio, which costs significantly more time in training compared to PackNet{/ESPN}.}  ESPN-0.2/-0.5 results show that our continual learning algorithm outperforms both baselines in accuracy despite up to 80\% FLOPs reduction. \ylm{Our accuracy improvement over PackNet arises from the trainable task-specific BatchNorm weights described under Algorithm \ref{algo 1}.}
\ylm{In practice, for all CPG, RMN, PackNet, and SupSup methods, task-specific running mean and running variance inside BatchNorm layers are essential to reconstruct the same performance. Therefore, despite our method applies additional BatchNorm weights for each task, since we prune BatchNorm layers and less replay memory is needed overall compared to the other methods (except ESPN-1).}
% \input{sec/fig_cifar}

% (regardless of $\alpha$)

\noindent\textbf{RotatedMNIST and PermutedMNIST.} Figure~\ref{fig:mnist} presents the RotatedMNIST and PermutedMNIST results. We run experiments on ESPN, PackNet, and individual training and report the average accuracy over $5$ trials. Note that, for fully-connected layers, we prune neurons to reduce FLOPs (rather than channels of CNN). In our experiments, we assigned neurons with a pruning parameter $\bGam$. We use our FLOP-aware pruning technique to prune neurons based on $\bGam$ and use weight allocation with parameter $\alpha=0.05$. \ylm{$\bGam$ is released after pruning.} Similar to channel-sharing in Figure \ref{overview fig}, different tasks are allowed to share the same neurons. Unlike SplitCIFAR100 experiments, all the tasks in both MNIST experiments share the same classifier head because they use the same 10 classes \blue{and FLOPs evaluation includes this head}. \blue{Therefore, \ylm{same as other zero-forgetting methods,}only binary mask is needed to restore performance.}

\begin{figure}[t]
\centering
\vspace{-30pt}
%\hspace{-14pt}
    \begin{subfigure}{0.5\textwidth}
    \centering
        \begin{tikzpicture}\centering
        \node at (0 ,0) [scale=0.33] {\includegraphics[trim={0 0 2cm 0},clip]{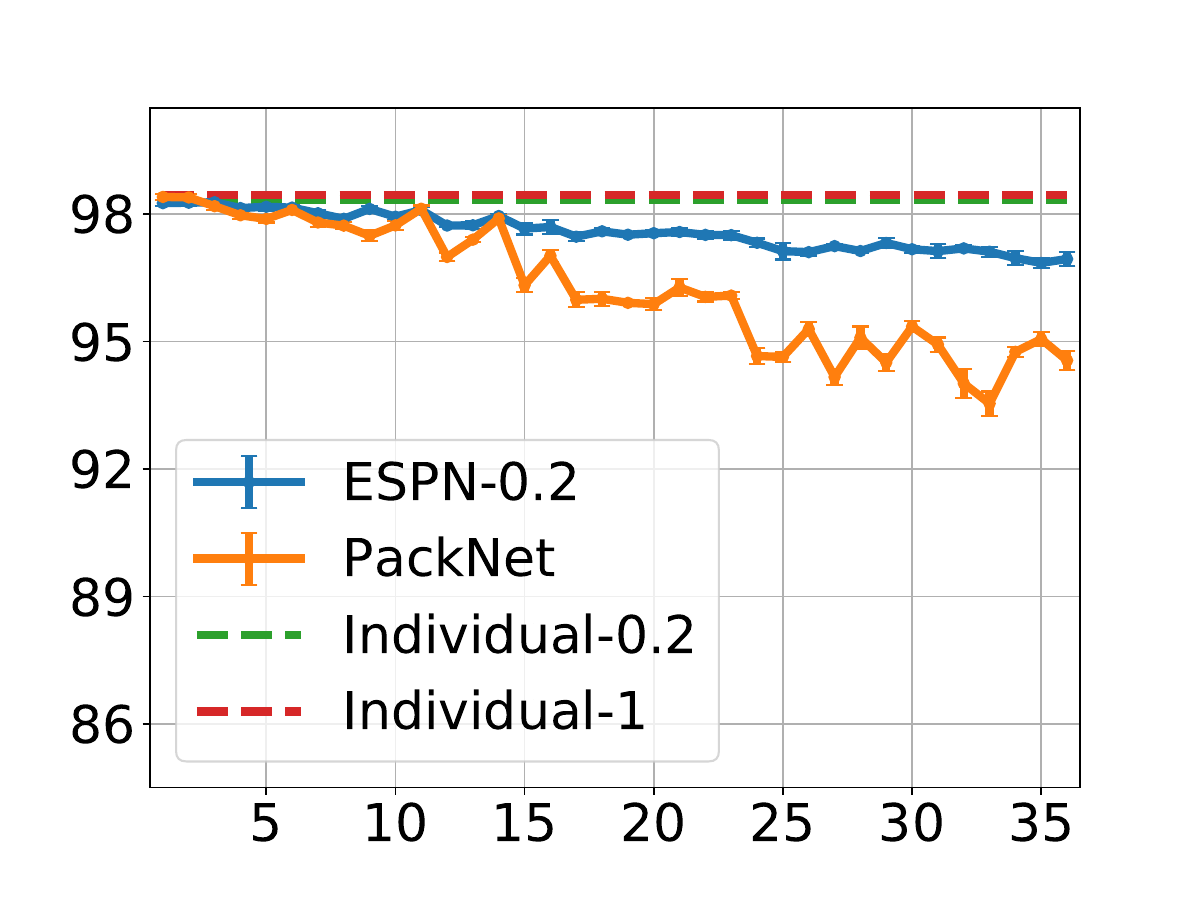}};
        \node at (0,-2.5) [scale=1.0] {\# of trained tasks};
        \node at (-3.0, 0) [scale=1.0, rotate=90] {Accuracy};
        \end{tikzpicture}\caption{{
        RotatedMNIST
        %Test accuracy when add large number of tasks in RotatedMNIST and experiments. Observe that, the test accuracy remaining high in RotatedMNIST-ESPN displays the ability of continual representation learning in high correlated tasks.
		}}\label{fig:rotated}
    \end{subfigure}\hspace{-30pt}
    \begin{subfigure}{0.5\textwidth}\centering
    
        \begin{tikzpicture}\centering
        \node at (0,0) [scale=0.33] {\includegraphics[trim={2.2cm 0 2cm 0},clip]{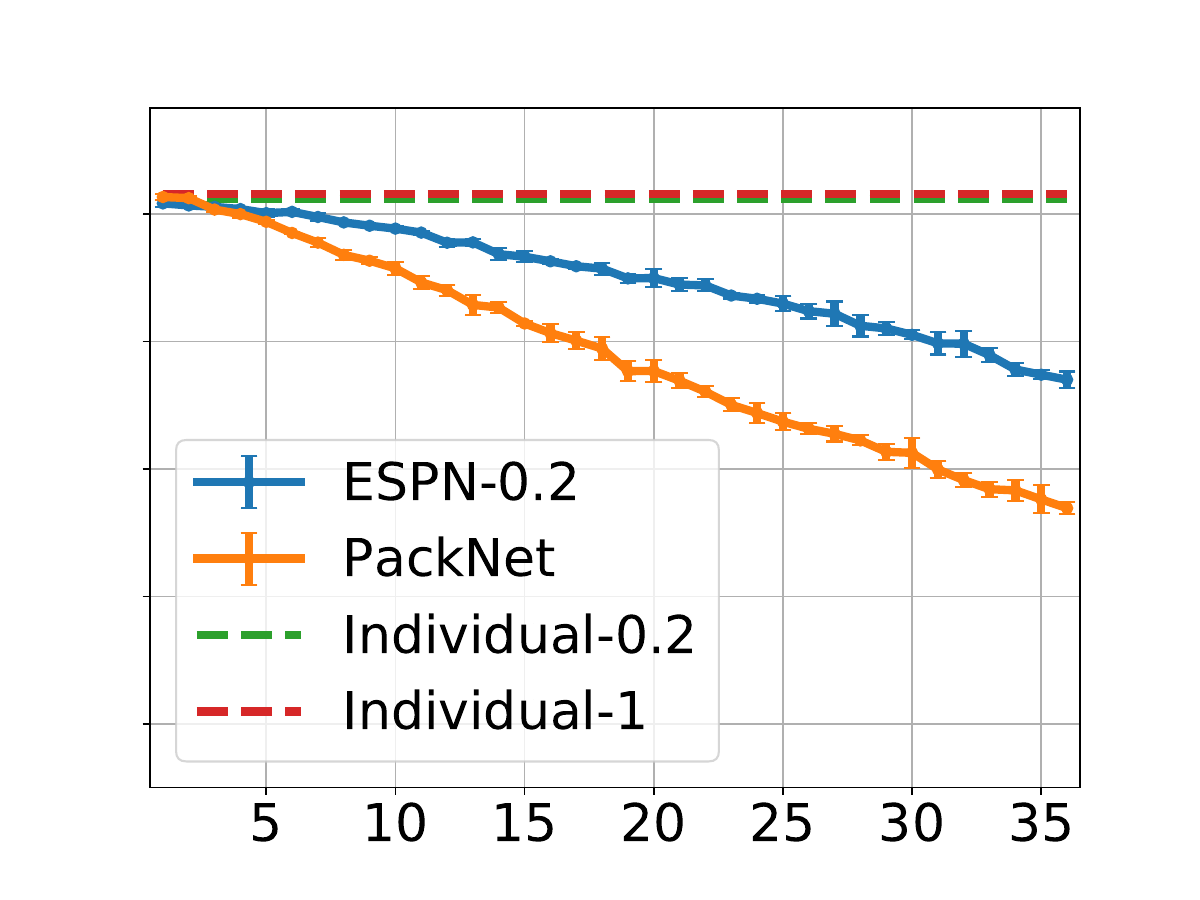}};
        \node at (0,-2.5) [scale=1.0] {\# of trained tasks};
        % \node at (-2.2, 0) [scale=1.0, rotate=90] {Accuracy};
        \end{tikzpicture}	\caption{{
        PermutedMNIST
        % Test accuracy in PermutedMNIST experiments. We remark that test accuracy decrease dramatically since the pixel-level permuted tasks have lower correlation.\\ \\
		}}\label{fig:permuted}% \hspace{20pt}
    \end{subfigure} \vspace{-5pt}
    \caption{Results of experiments on RotatedMNIST and PermutedMNIST datasets with FC1024 model. Blue curves show ESPN-0.2 accuracies and Orange curves are PackNet. Both use weight allocation with $\alpha=0.05$. \blue{Both figures share the same y-axis.} (a) The test accuracy remains high because tasks in RotatedMNIST are highly correlated. (b) The test accuracy decreases over time because the pixel-permuted tasks have lower correlation and the features built for earlier tasks are not useful. % importance of  % \Salnote{Is y-axis on both a,b the same?}
    }\vspace{-7pt}\label{fig:mnist}
\end{figure}
%\som{Is Figure 3 accuracy of individual task or average acc until that task?}

Results of RotatedMNIST and PermutedMNIST experiments are shown in Figure~\ref{fig:mnist}. Blue curves show the results of ESPN-0.2. For fair comparison, PackNet (Orange curves) use the same weight allocation parameter $\alpha=0.05$. The Green and Red dashed lines show the task-averaged accuracy of Individual-0.2/-1 baselines that train each task with separate models for $20\%/100\%$ of FLOPs constraints. \blue{In both experiments, the gap between Individual-0.2 (Green) and Individual-1 (Red) curves is negligible and it again shows the benefit of our FLOP-aware pruning technique. While PackNet performs well for the first few tasks, it degrades gradually as more tasks arrive. Thus, when there are more than a few tasks, we see that ESPN algorithm works better than PackNet despite enforcing inference-efficiency and despite using the same weight allocation method. Figure \ref{fig:rotated} shows that ESPN-0.2 on RotatedMNIST exhibits mild accuracy degradation over $36$ tasks.} While in Figure \ref{fig:permuted}, test accuracy decreases more noticeably as the tasks are added. A plausible explanation is that, because each PermutedMNIST task corresponds to a random permutation, the tasks are totally unrelated. Thus, knowledge gained from earlier tasks are not useful for training new tasks and CRL does not really help in this case. In contrast, since tasks in RotatedMNIST are semantically relevant, ESPN and PackNet both achieve higher accuracy thanks to representation reuse across tasks. Specifically, the significant accuracy gap between the Blue curves in Figures \ref{fig:rotated} and \ref{fig:permuted} (especially for larger Task IDs) demonstrate the clear benefit of CRL.
%the features built by earlier tasks for learning the new tasks. 

%Specifically, since Rotated{As another viewpoint, task relevancy also influences the representational mismatch in CRL}.

\section{Conclusion and Discussion}

To summarize, our work elucidates the benefit of continual representation learning theoretically and empirically and sheds light on the role of task ordering, diversity and sample size. We also propose a new CL algorithm to learn good representations subject to inference considerations. Extensive experimental evaluations on the proposed Efficient Sparse PackNet demonstrate its ability to achieve good accuracy as well as fast inference.

%data efficiency, and the role of  a principled way of constructing an efficient CL algorithm considering task order and inference speed. 
%To summarize, we present the \emph{Efficient Sparse PackNet} algorithm for CL using joint weight and channel pruning of deep networks. We provide extensive experimental evaluation and comparisons of ESPN to demonstrate its effectiveness, and present generalization bounds. ESPN takes an important step towards continual learning with less data and computation. 

\noindent \textbf{Limitations and future directions.} 
% For the limitation, in empirical continual learning, 
Although we highlight the importance of task order in CRL, the task sequence is not always under our control. It would be desirable to develop adaptive learning schemes that can better identify an exploit diverse tasks and discover semantic connections across task pairs even for a predetermined task sequence. 
Another potential direction is to develop similar inference-efficient continual learning schemes for other architectures by appropriately adapting our joint weight and channel pruning strategy. An example is transformer-based models where computation and memory efficiency is particularly critical.

%\section*{Acknowledgements}\vspace{-10pt}
%This work is supported in part by the National Science Foundation under grants CCF-2046816, CCF-2046293, CNS-1932254, and by the Army Research Office under grant W911NF-21-1-0312.
% {Data efficiency and sample complexity is also worthy of further investigation, both theoretically and empirically. In particular, it would be interesting to see if order of tasks is important for representation learning and quantifying the synergies across different tasks to pinpoint the source of data efficiency more precisely.}

%\section*{Acknowledgements}
% Y.L., M.L., and S.O. were supported in part by the NSF CAREER award CCF-2046816. S.O. was also supported in part by the NSF grant CNS-1932254, UCR Regents Faculty Development Award and Army Research Office MURI award W911NF-21-1-0312. M.A. was supported in part by NSF CAREER award CCF-2046293. 

%This work was supported in part by NSF awards CCF-2046816, CCF-2046293, and CNS-1932254, UCR Regents Faculty Development Awards, ARO MURI award W911NF-21-1-0312, and AFOSR award FA9550-21-1-0330.
%Thus, it would be desirable to develop proper 

%Variations of our weight and channel pruning strategy can be adapted to 

%We believe our idea of joint channel and weight pruning can be extended to transformer architectures where compute/memory-efficiency is

\acks{This work is supported in part by the National Science Foundation under grants CCF-2046816, CCF-2046293, CNS-1932254, and by the Army Research Office under grant W911NF-21-1-0312.}

% Manual newpage inserted to improve layout of sample file - not
% needed in general before appendices/bibliography.

%\newpage
\vskip 0.2in
%\bibliography{jmlr-style-file-master/sample}
\setlength{\bibsep}{2pt}
\bibliography{macros,refs}

\appendix
\red{
% \vspace{-20pt}
\section*{Organization of the Appendix}
The material in the appendix is organized as follows.
\begin{enumerate}
%\item Appendix~\ref{appsec: espnalgo} discusses additional details of our ESPN algorithm. 
\item Appendix~\ref{pruning sec} demonstrates the benefits of our FLOP-aware pruning and weight allocation strategies.
%\item Appendix~\ref{app:application} discusses an application of CRL, that is we introduce a shallow network by setting feature extractor $\phi$ and classifier $h$ to be specific functions.
\item Appendix~\ref{app B} provides the proofs of our Theorem \ref{cl thm} and Corollary~\ref{cl thm3}. %Recall that Theorem \ref{cl thm} adds the $T$ new tasks in a multitask fashion. Complementing this, we also 
%While this theorem is more realistic, it is more involved and requires 
%\item Appendix~\ref{app C} proposes Theorem \ref{seq thm} which provides guarantee for continual representation learning when the $T$ tasks are added into the super-network in a sequential fashion. This theorem adds further insights into the bias-variance tradeoffs surrounding sequential learning (e.g.~how representation mismatch might aggregate as we add more tasks).
\item Appendix~\ref{app:proof seq thm} provides the proof of our Theorem \ref{seq thm}. 
%\item Appendix~\ref{app:related} provides an expanded discussion of related work on continual learning, representation learning and neural network pruning.
\item Appendix~\ref{app:imagenet} provides the implementation details of the experiments shown in Fig.~\ref{fig:diversity}.
% \item We also kept the content of the main submission for completeness. We have added some clarifications and highlighted minor corrections in \red{RED} color.% for further clarification.
%\item We kept the main text for better readability.
\end{enumerate}
}

% \vspace{-20pt}
\section{Experimental Evaluations of Channel Pruning and Weight Allocation Techniques}\label{pruning sec}

\noi\textbf{Empirical benefits of our proposed FLOP-aware pruning.} \blue{Both SplitCIFAR100 (Table~\ref{CIFARtable}) and MNIST (Fig.~\ref{fig:mnist}) experiments show that our pruning algorithm performs well (by comparison between Individual-0.2/-1)}\ylm{Table~\ref{CIFARtable} shows that our algorithm performs well in our continual learning manner}, but missing additional experimental comparisons to alternative channel-pruning techniques. In this section, we evaluate our FLOP-aware pruning algorithm separately without CL and compare our results to $\ell_1$-based~(\cite{liu2017learning}) and Polarization-based~(\cite{zhuang2020neuron}) channel pruning methods. For all the experiments, we use the same ResNet18 architecture and hyperparameters as Table~\ref{CIFARtable} and train over CIFAR10 dataset. Results are presented in Fig.~\ref{fig:CIFAR10_acc}\&\ref{fig:CIFAR10_ratio} (averaged from 5 trials). We implement these three methods over different FLOPs constraints and track the fraction of nonzero channels after pruning. Fig.~\ref{fig:CIFAR10_acc} shows test accuracy after pruning. We observe our FLOP-aware pruning method (shown in the Blue curve) does well in maintaining high accuracy compared to the other methods. The benefit of our approach is most visible when the FLOPs constraint is more aggressive (e.g., around 1\% FLOPs). The Green curve shows the Polarization-based method and demonstrates that it does not execute the pruning task successfully with less than 50\% FLOPs constraint. We found that this is because below a certain threshold, Polarization tends to prune a whole layer resulting in disconnectivity within the network. Fig.~\ref{fig:CIFAR10_ratio} shows the fraction of nonzero channels that remained after pruning, and our method succeeds in keeping more channels while pruning the same number of FLOPs. Here, we emphasize keeping more channels is a measure of connectivity within the network (as we wish to avoid very sparse layers). We also observe that $\ell_1$ works well even for small FLOPs; however, it uses the same $\lambda_l$ penalization for all layers thus it is not FLOP-aware and performs uniformly worse than our FLOP-aware algorithm.

\begin{figure*}[t]
\vspace{-10pt}
% \hspace{-30pt}
\centering
\begin{subfigure}[t]{.33\textwidth}
  \centering
  \begin{tikzpicture}\hspace{-0pt}
        \node at (0,0) [scale=1.] {\includegraphics[width=\linewidth]{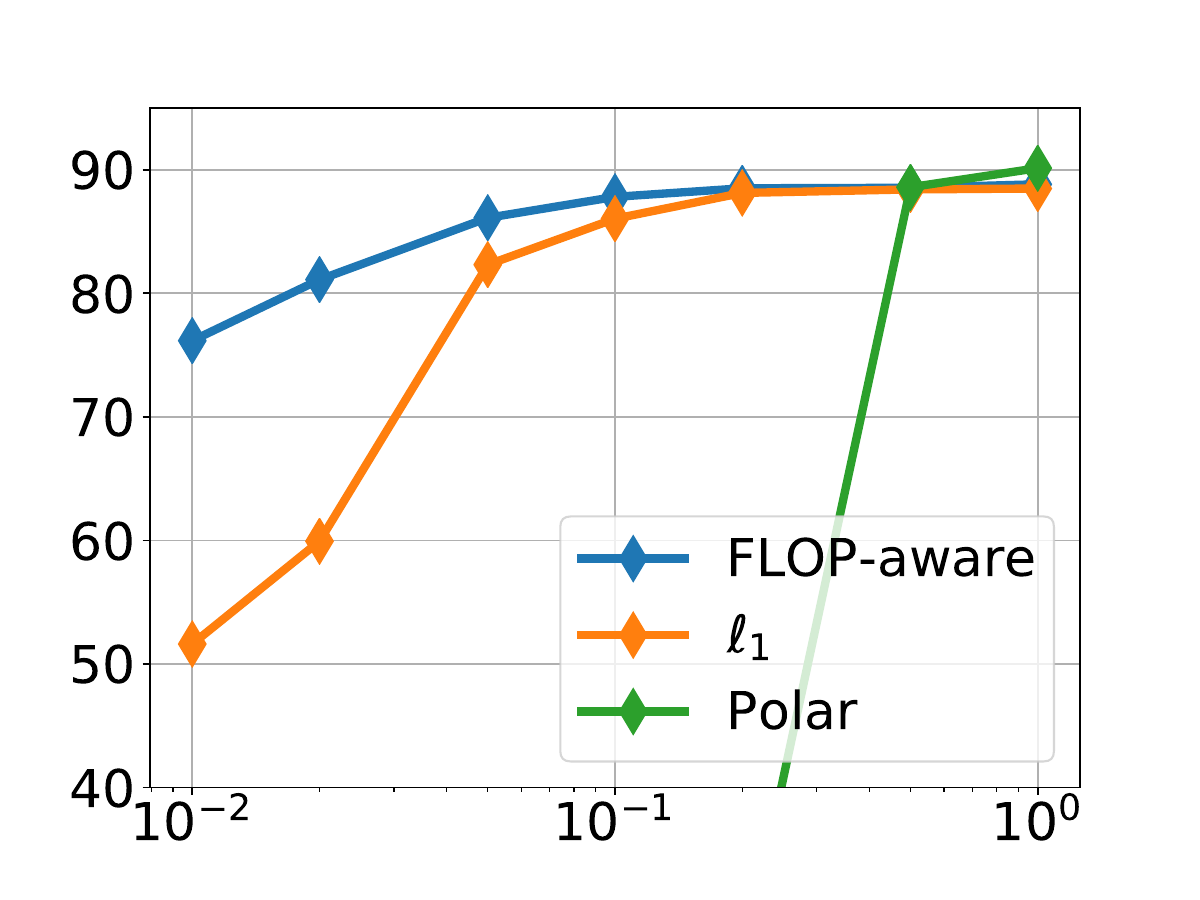}};
        \node at (0,-1.9) [scale=0.8] {{FLOPs constraint $\gamma$}};
        \node at (-2.4,0) [scale=0.8,rotate=90] {Accuracy};
    \end{tikzpicture}
        \centering
  \caption{Test accuracy}
	\hspace{-0pt}\label{fig:CIFAR10_acc}
\end{subfigure}\hspace{0pt}\begin{subfigure}[t]{.33\textwidth}
  \centering
  \begin{tikzpicture}\hspace{-0pt}
        \node at (-0,0) [scale=1.] {\includegraphics[width=\linewidth]{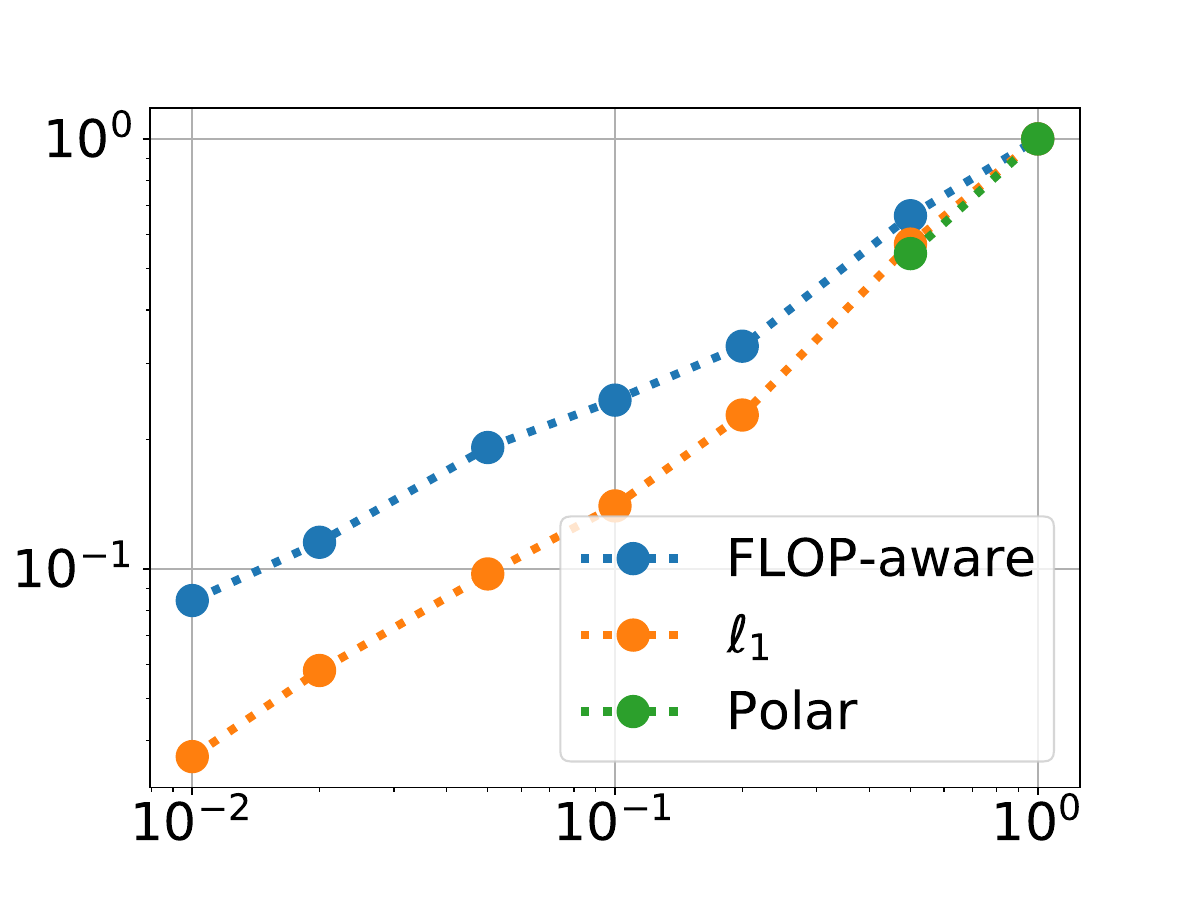}};
        \node at (0,-1.9) [scale=0.8] {FLOPs constraint $\gamma$};
        \node at (-2.6,0) [scale=0.6,rotate=90] {Fraction of Nonzero Channels};
    \end{tikzpicture}
    \caption{{Fraction of nonzero channels
    }}\label{fig:CIFAR10_ratio}
	
\end{subfigure}\hspace{0pt}\begin{subfigure}[t]{.33\textwidth}
  \centering
  \begin{tikzpicture}\hspace{-0pt}
        \node at (0,0) [scale=1.] {\includegraphics[width=\linewidth]{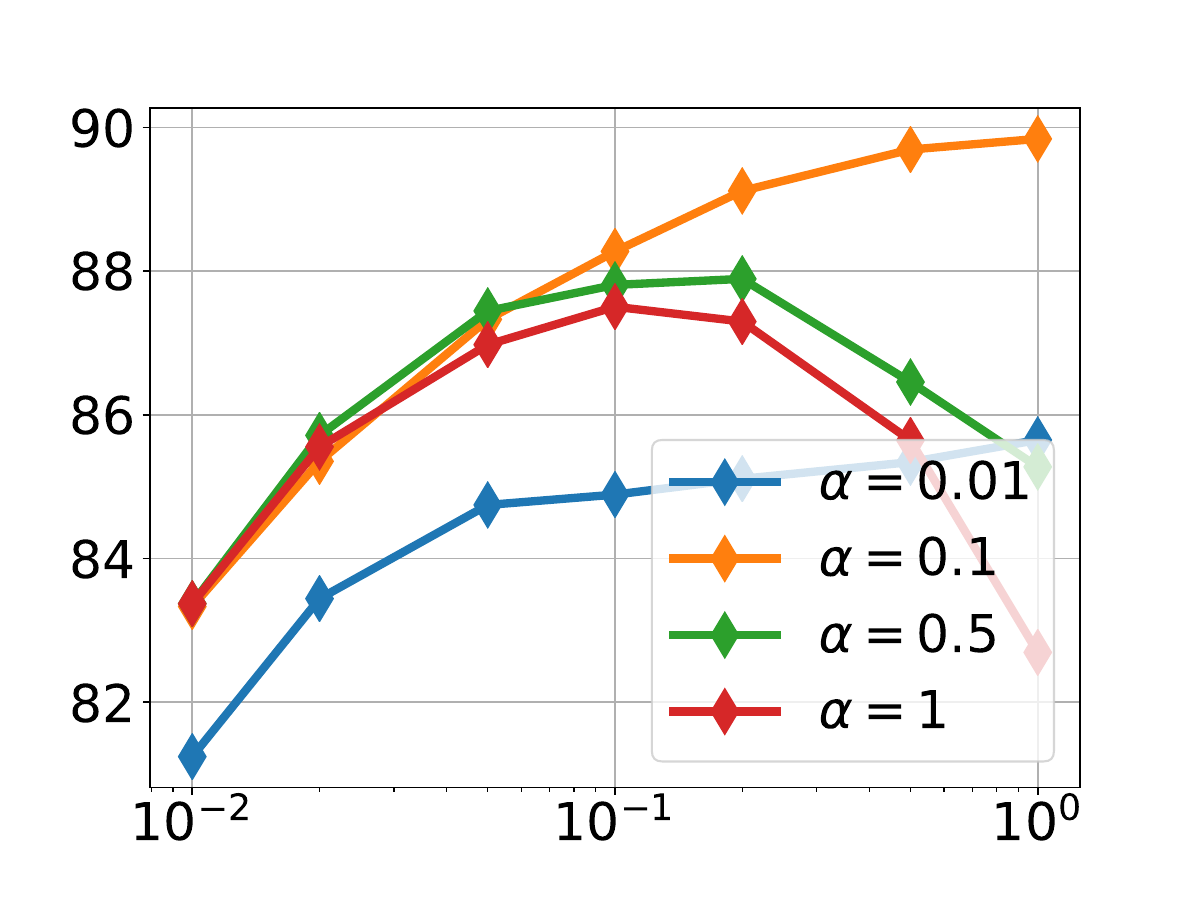}};
        \node at (0,-1.9) [scale=0.8] {FLOPs constraint $\gamma$};
        \node at (-2.4,0) [scale=0.8,rotate=90] {Accuracy};
    \end{tikzpicture}
    \centering
    \caption{{Impact of weight allocation $\alpha$
    }}\label{fig:FLOP}
	
\end{subfigure}\vspace{-10pt}
\caption{{
{Experimental evaluations of our FLOP-aware channel pruning and weight allocation techniques.} Fig.~\ref{fig:CIFAR10_acc} \& \ref{fig:CIFAR10_ratio} show channel pruning results for different methods. We compare our FLOP-aware pruning algorithm to the $\ell_1$-based~(\cite{liu2017learning}) and Polarization-based~(\cite{zhuang2020neuron}) methods over ResNet18 model used in our CL experiments. These experiments are conducted for standard classification tasks on CIFAR10 dataset and only focus on pruning performance (rather than CL setting). Our FLOP-aware pruning is displayed as the Blue curve, and it broadly outperforms the alternative approaches ($\ell_1$-based and Polarization-based). The performance gap is most notable in the very-few FLOPs regime (which is of more interest for inference-time efficiency).
In Fig.~\ref{fig:FLOP}, we display the impact of weight allocation $\alpha$ (Eq.~\eqref{wa-eq}) and FLOPs constraint $\gamma$ on \emph{task-averaged ESPN accuracy} for SplitCIFAR100. Setting $\alpha$ too small (Blue curve) degrades performance as tasks do not get enough nonzeros to train on. Setting both $\alpha$ and $\gamma$ to be large also degrades performance (Red \& Green curves), because the first few tasks get to occupy the whole supernetwork at the expense of future tasks. Finally $\alpha=0.1$ (Orange) achieves competitive performance for all $\gamma$ choices.
}}\vspace{-10pt}
\end{figure*}
\noi\textbf{Empirical Benefits of Weight Allocation $\alpha$.} Figure \ref{fig:FLOP} displays the interaction between the FLOPs restriction $\gamma$ and weight allocation $\alpha$ in ESPN. Left handside shows that ESPN achieves respectable performance even using only $\gamma=1\%$ of MAX\_FLOPs. In our experiments in Table \ref{CIFARtable}, we use weight allocation $\alpha=0.1$ which is shown as the Orange curve. $\alpha=0.1$ strikes a balance between allocating too few (Blue curve) and too many (Red curve) new weights. As a result, accuracy gracefully improves as we relax the FLOPs constraint. {It also has competitive accuracy for all choices of $\gamma$. While $\alpha=0.1$ works well for SplitCIFAR100, we emphasize that optimal $\alpha$ depends on size of the supernet, number of tasks and difficulty of tasks. For instance, in our MNIST experiments with 36 tasks (Figure \ref{fig:mnist}), we chose $\alpha=0.05$. Parameter counting based on Eq.~\eqref{wa-eq} reveals that setting $\alpha$ inversely proportional to the total \# of tasks would enable fair allocation and good utilization of the network weights.} When we set $\alpha=1$, observe that, ESPN only uses channel pruning without any weight pruning. As a result, the comparison between Red ($\alpha=1$) and Orange ($\alpha=0.1$) curves demonstrates the accuracy benefits of our joint Channel \& Weight pruning strategy (over simply using channel pruning). 
%
%\red{Our strategy shines when $\gamma\geq0.1$}
%We also found that performance is robust to
%From this figure one can deduce that 0.1 is a nice choose fitting our experiment setting well and the accuracy keeps increasing. While for $\alpha=0.01$ in blue curve, the subnetwork is pruned to be too sparse and the remaining parameters are not sufficient to learning the task. For $\alpha=0.5$ in green curve, $\ONNZ_{20}\approx10^{-6}p$, which means free weights are almost running out and there is not space left for $\Tc_{20}$. Then we can only do channel selection by keeping the useful channels. However as more FLOPs are allowed, unimportant channels will be chosen that hurts the performance. 
%\lorem{2}
% \vspace{-10pt}
\section{Proof of Theorem \ref{cl thm} and Corollary \ref{cl thm3}}\label{app B}
%\textbf{Discussion on the setting of Theorem \ref{cl thm}:} In what follows, we consider a generalization of the scenario where tasks arrive one by one. Specifically, let us assume that each task is composed of multiple sub-tasks. We are given a new task with samples $\Sc=(\Sc_t)_{t=1}^\Tn$ composed of $\Tn$ new tasks. These subtasks are represented by datasets $\Sc_t=(\x_{ti},y_{ti})_{i=1}^N$ each drawn i.i.d. from distributions $\Dc_t$ for $t\in[\Tn]$. Our goal is building hypothesis $(f_t)_{t=1}^{\Tn}$ for these new sub-tasks with small sample size $N$ while leveraging $\bPf$.
\red{In this section, we will first prove Theorem \ref{cl thm} which adds $T$ new tasks by leveraging a frozen feature extractor $\pf$. Then, we will prove Corollary~\ref{cl thm3} via directly applying Theorem~\ref{cl thm} to the neural net setting. } %Specifically, the compatibility condition we introduce between new tasks and previous tasks are similar in spirit to the task-diversity assumptions of \cite{xu2021representation,tripuraneni2020theory,du2020few}.

% \vspace{-10pt}
\subsection{Proof of Theorem \ref{cl thm}}
We begin with the statement of Bernstein's inequality showing in Fact~\ref{berns}, which will be used in the proof of Theorem~\ref{cl thm}.
\begin{fact}[Bernstein's inequality] \label{berns}Let $(X_i)_{i=1}^n$ be independent zero-mean random variables with $\red{\frac{1}{n}\sum_{i=1}^n} \E[X_i^2]\leq \sigma^2$ and $|X_i|\leq M$ almost surely. Then, %the inequality below holds with probability at least $1-\delta$
\begin{align*}
    \Pro\left(\left|\frac{1}{n}\sum_{i=1}^n X_i\right|\geq \sigma\sqrt{\frac{2t}{n}} +\frac{Mt}{n}\right)\leq 2e^{-t}.
\end{align*}
\end{fact}
    \begin{proof} The standard Bernstein's inequality states $\Pro(|\frac{1}{n}\sum_{i=1}^n X_i|\geq \tau)\leq 2\exp(-\frac{\tau^2n/2}{\sigma^2+M\tau/3})$. To obtain above, we use the change of variable which implies
    \[
    t=\frac{\tau^2n/2}{\sigma^2+M\tau/3}\implies \tau^2n/2=\sigma^2t+Mt\tau/3\implies \tau\leq \sigma\sqrt{\frac{2t}{n}} +\frac{Mt}{n}.
    \]
    \end{proof}

\red{To proceed, we aim to prove Theorem~\ref{cl thm}.} The original statement of Theorem \ref{cl thm} does not introduce certain terms formally to provide a cleaner statement. Here, we first add some clarifying remarks on this. Let $\Bc(S)$ return the smallest Euclidean ball containing the set $S$ (that lies on an Euclidean space). For a function $f:\Zc\rightarrow\Zc'$, define the $\lin{\cdot}$ norm to be the Lipschitz constant $\lin{f}:=\lin{f}^{\Zc}=\sup_{\x,\y\in \Bc(\Zc),\x\neq\y}\frac{\tn{f(\x)-f(\y)}}{\tn{\x-\y}}$. Below, we assume that $\pf$ and all $\pn\in\bPn$, $h\in\Hb$ are $L$-Lipschitz (on their sets of input features). Suppose $\Xc$ is the set of feasible input features and has bounded Euclidean radius $R=\sup_{\x\in\Xc}\tn{\x}$. This means that input features of the classifier $h$ lie on the set $\Xc'=\pf\circ\Xc+\bPn\circ\Xc$ with Euclidean radius bounded by $2LR$. Thus, both raw features and intermediate features (output of $\phi=\pn+\pf$) are bounded {and we use these sets in Def.~\ref{def:cov}.} {Set $\BC:=\max(\BC_{\Xc},\BC_{\Xc'})$ below.}
% and covering dimensions of $\bPn,\Hb$ are defined based on these bounded sets.
%and $\BC_{\Zc}$ of Def.~\ref{def:cov} depends only on $\max(R,2LR)$. 

\begin{theorem}[Theorem \ref{cl thm} restated] \label{cl thm2} Recall that $(\hhb,\hat{\phi}=\pnh+\pfh)$ is the solution of \eqref{crl}. Suppose that the loss function $\ell(y,\hat{y})$ takes values on $[0,\UB]$ and is $\Gamma$-Lipschitz w.r.t.~$\hat{y}$. Draw $T$ independent datasets $\{(\x_{ti},y_{ti})\}_{i=1}^N\subset \Xc\times \Yc$ for $t\in [T]$ where each dataset is distributed i.i.d.~according to $\Dc_t$. Suppose the input set $\Xc$ has bounded Euclidean radius. Suppose $\lin{\pf},\lin{\pn},\lin{h}\leq L$ for all $\pn\in\bPn,h\in\Hb$. With probability at least $1-2e^{-\tau}$, \red{for some absolute constant $C>4$}, the task-averaged population risk of the solution $(\hhb,\hat{\phi})$ obeys 
\red{
\begin{align}
\Lc(\hhb,\hat{\phi})&\leq \Lco_{\pfh}+ C\sqrt{\frac{\Lco_{\pfh} B(D+\tau)}{TN}}+\frac{CB(D+\tau)}{TN},\nn\\
&\leq \Lco+\MM+C\sqrt{\frac{(\Lco+\MM) B(D+\tau)}{TN}}+\frac{CB(D+\tau)}{TN}\nonumber,
\end{align}
where $D:=(T\cc{\Hb}+\cc{\bPn})\log(\BC\Gamma (L+1)NT)$.}
\end{theorem}
%Define the radius of a set $\Sc$ by $\rad{\Sc}=\sup_{\x\in\Sc}\tn{\x}$. Given $\Zc$ in an Euclidian space, let $\Bc(\Zc)$ be the Euclidian ball with radius $\rad{\Zc}$ to which $\Zc$ belongs. 
\begin{proof} Below $c,C>0$ denote absolute constants. For a scalar $a$, define $a_+=a+1$. The proof uses a covering argument following the definition of the covering dimension. Fix $1>\eps>0$. To start with, let $\bPne$ and $\Hb_\eps$ be $\eps$-covers (per Definition \ref{def:cov}) of the sets $\bPn$ and $\Hb$ respectively. Let $\Hb_\eps^T$ be $T$-times Cartesian product of $\Hb_\eps$. Our goal is bounding the supremum of the gap between the empirical and population risks to conclude the result. 

% \noi\textbf{$\bullet$ Step 1: Union bound over the cover.} Following Definition \ref{def:cov}, we have that $\log|\Hb^T_\eps|\leq T\cc{\Hb}\log(\BC/\eps)$, $\log|\bPne| \leq \cc{\bPn}\log(\BC/\eps)$. Define $\Sc=\bPne\times \Hb^T_{\eps}$. These imply that
% \[
% \log|\Sc|\leq D\quad\text{where}\quad  D:=(\cc{\bPn}+T\cc{\Hb})\log\left(\frac{\BC}{\eps}\right).%\label{d def}
% \]
% Set $t=\UB\sqrt{\frac{D+\tau}{cNT}}$ for an absolute constant $c>0$ which corresponds to the concentration rate of the Hoeffding inequality that follows next. Using the fact that the loss function is bounded in $[0,B]$, applying a Hoeffding bound over all elements of the cover, we find that for all \blue{$(\bhb,\phi)\in \Sc$}
% \begin{align}
% \Pro(|\Lc(\bhb,\phi)-\Lch(\bhb,\phi)|\geq t)&\leq 2|\Sc|e^{-\frac{cNTt^2}{\UB^2}}\nonumber\\
% &\leq 2|\Sc|e^{-D-\tau}=2e^{\log|\Sc|-D-\tau}\nonumber\\
% &\leq 2e^{-\tau}.\label{prob bound}
% \end{align}
% %where we used $|\Sc|=e^{\NN{\Sc}}\leq e^{D}$. 

\noi\textbf{$\bullet$ Step 1: Union bound over the cover.} Following Definition \ref{def:cov}, we have that $\log|\Hb^T_\eps|\leq T\cc{\Hb}\log(\BC/\eps)$, $\log|\bPne| \leq \cc{\bPn}\log(\BC/\eps)$. Define $\Fc=\bPne\times \Hb^T_{\eps}$. These imply
\[
\log|\Fc|\leq D_\eps\quad\text{where}\quad  D_\eps:=(\cc{\bPn}+T\cc{\Hb})\log\left(\frac{\BC}{\eps}\right).%\label{d def}
\]
Next, we will use two uniform concentration bounds to obtain an upper bound on the excess risk (based on $\bPne$ and $\Hb_\eps$). \red{Recall from \eqref{crl} that $\Lch(\hb,\phi)=\frac{1}{\Tn}\sum_{t=1}^\Tn\Lch_{\Sc_t}(h_t\circ\phi)$ where $\Lch_{\Sc_t}(h_t\circ\phi)=\frac{1}{N}\sum_{i=1}^N\ell(y_{ti},h_t\circ\phi(\x_{ti}))$. Let $\Lc_t(h_t\circ\phi)=\E[\Lch_{\Sc_t}(h_t\circ\phi)]$ and then $\Lc(\hb,\phi)=\frac{1}{\Tn}\sum_{t=1}^\Tn\Lc_t(h_t\circ\phi)$. Apply Fact~\ref{berns} with $X_{ti}=\ell(y_{ti},h_{t}\circ\phi(\x_{ti}))-\Lc_t(h_t\circ\phi)$ where $\{(\x_{ti},y_{ti})\}_{i=1}^{N}\sim\Dc_{t}$. Note that $\E[X_{ti}]=0$, $|X_{ti}|\leq B$, and
% \[
% \E[X_j^2]\leq \E[\ell(y_{j},h_{\tj}\circ\phi(\x_{j}))^2]-\Lc(\hb,\phi)^2\leq B\cdot\Lc(\hb,\phi)-\Lc(\hb,\phi)^2\leq B^2/4.
% \]
\[
\E[X_{ti}^2]\leq \E[\ell(y_{ti},h_{t}\circ\phi(\x_{ti}))^2]-\Lc_t(h_t\circ\phi)^2\leq B\cdot\Lc_t(h_t\circ\phi)-\Lc_t(h_t\circ\phi)^2\leq B\cdot\Lc_t(h_t\circ\phi).
\]
We can rewrite $\Lch(\hb,\phi)-\Lc(\hb,\phi)=\frac{1}{\Tn}\sum_{t=1}^\Tn(\Lch_{\Sc_t}(h_t\circ\phi)-\Lc_{t}(h_t\circ\phi))=\frac{1}{N\Tn}\sum_{t=1}^{\Tn}\sum_{i=1}^NX_{ti}$, where $\frac{1}{N\Tn}\sum_{t=1}^\Tn\sum_{i=1}^N\E[X_{ti}^2]\leq B\cdot\frac{1}{\Tn}\sum_{t=1}^T\Lc_t(h_t\circ\phi)=B\cdot\Lc(\hb,\phi)$. Then for each $(\hb,\phi)\in\Fc$, applying Fact \ref{berns} with $\sigma^2= B\cdot\Lc(\hb,\phi)$, we obtain that with probability at least $1-2e^{-\tau}$
% \som{Here, you might be ignoring the fact that each task might have a different variance level. If so, Bernstein only applies with the max variance rather than average. You need to use the version of Bernstein that sums up the variances.}
\[
|\Lch(\hb,\phi)-\Lc(\hb,\phi)|\leq \sqrt{\frac{2\Lc(\hb,\phi)B\tau}{N\Tn}} +\frac{B\tau}{N\Tn}.
\]
Applying a union bound, with same probability, we obtain
\begin{align}
\sup_{(\hb,\phi)\in\Fc}|\Lch(\hb,\phi)-\Lc(\hb,\phi)|\leq \sqrt{\frac{2\Lc(\hb,\phi)B(D_\eps+\tau)}{N\Tn}} +\frac{B(D_\eps+\tau)}{N\Tn}\label{prob bound}.
\end{align}}

\noi\textbf{$\bullet$ Step 2: Perturbation analysis.} We showed the concentration on the covers. Now we need to relate the covers to the continuous sets. Given any candidate $\phi=\pn+\pf$ with $\pn\in\bPn,\bhb:=(h_t)_{t=1}^T\in\Hb^T$, we draw a neighbor $\phi'=\pn'+\pf$ with $\pn'\in\bPne,\bhb':=(h'_t)_{t=1}^T\in\Hb_\eps^T$.

 %and decompose the risk gap as follows. 
Recall that $\lin{\pn},\lin{h}\leq L$ for all $\pn\in\bPn,h\in\Hb$. The task-averaged risk perturbation relates to the individual risks which in turn relates to individual examples as follows. Let $f_t:=h_t\circ \phi$ and $f'_t:=h'_t\circ \phi'$ for all $t\in[T]$.
\begin{align} 
|\Lc(\bhb,\phi)-\Lc(\bhb',\phi')|&\leq \sup_{1\leq t\leq T} |\Lc_t(h_t,\phi)-\Lc_t(h_t',\phi')|\\
&\leq\sup_{t\in[T],\red{(\x,y)\in\Xc\times\Yc}}|\ell(y,f_t(\x))-\ell(y,f'_t(\x))|\\
&\leq\Gamma\sup_{t\in[T],\red{\x\in\Xc}}|f_t(\x)-f'_t(\x)|.\label{pert1 bound}
\end{align}
The perturbations for individual examples are bounded via triangle inequalities as follows. %Specifically, we have that 
\begin{align}
|f_t(\x)-f'_t(\x)|\leq  &|h_t\circ\phi(\x)-h'_t\circ\phi'(\x)|\nonumber\\
\leq & |h_t\circ\phi(\x)-h_t\circ\phi'(\x)|+|h_t\circ\phi'(\x)-h'_t\circ\phi'(\x)|\nonumber\\
\leq& (L+1)\eps:=L_+\eps.\label{last line}
\end{align}
The last line follows from the triangle inequality via $\eps$-covering and $L$-Lipschitzness of $h$ and $\bhi$ as follows
\begin{itemize}
\item Since $\tn{\phi(\x)-\phi'(\x)}=\tn{\pn(\x)-\pn'(\x)}\leq \eps\implies |h_t\circ\phi(\x)-h_t\circ\phi'(\x)|\leq L\eps$.
\item Set $\vb=\phi'(\x)$. Since $\tn{h(\vb)-h'(\vb)}\leq \eps\implies |h'_t\circ\phi'(\x)-h_t\circ\phi'(\x)|\leq \eps$.
\end{itemize}
Combining these, following \eqref{pert1 bound}, and repeating the identical perturbation argument \eqref{last line} for the empirical risk $\Lch$, we obtain
\begin{align}
\max(|\Lc(\bhb,\phi)-\Lc(\bhb',\phi')|, |\Lch(\bhb,\phi)-\Lch(\bhb',\phi')|)\leq \Gamma L_+\eps.\label{pert bound}
\end{align}

\noi\textbf{$\bullet$ Step 3: Putting things together.} Combining \eqref{prob bound} and \eqref{pert bound}, we found that, with probability at least $1-2e^{-\tau}$, for all $\bhi=\pn+\pf$ with $\pn\in\bPn$ and all $\bhb\in\Hb^T$, we have that
\red{
\begin{align}
|\Lc(\bhb,\phi)-\Lch(\bhb,\phi)|\leq 2\Gamma L_+\eps+\sqrt{\frac{2(\Lc(\hb,\phi)+\Gamma L_+\eps)B(D_\eps+\tau)}{N\Tn}} +\frac{B(D_\eps+\tau)}{N\Tn}.
\end{align}
Setting \blue{$\eps=\frac{1}{\Gamma L_+NT}$}, for an updated constant $C>0$, we find
\begin{align}
|\Lc(\bhb,\phi)-\Lch(\bhb,\phi)|\leq \sqrt{\frac{2\Lc(\hb,\phi)B(D+\tau)}{N\Tn}} +\frac{CB(D+\tau)}{N\Tn},\label{unif conv}
\end{align}
where \blue{$D=(\cc{\bPn}+T\cc{\Hb})\log(\BC\Gamma L_+NT)$} following the above definition of $D_\eps$.
}

\red{
Note that, the uniform concentration above also implies the identical bound for the minimizer of the empirical risk. Let $(\hat\bhb,\hat\bhi)$ be the minimizer of the empirical risk. Specifically, let $(\bhb^{\st,\pf},\pn^{\st,\pf})$ be the optimal hypothesis in $(\Hb,\bPn)$ minimizing the population risk subject to using frozen feature extractor $\pf$, that is, $\Lc(\bhb^{\st,\pf},\pn^{\st,\pf}+\pf)=\Lc^\st_{\pf}$. Then, we note that
\[
\Lc(\hat\bhb,\hat\bhi)-\Lch(\hat\bhb,\hat\bhi)\leq \sqrt{\frac{2\Lc(\hat\hb,\hat\phi)B(D+\tau)}{N\Tn}} +\frac{CB(D+\tau)}{N\Tn},
\]
\[
\Lch(\bhb^{\st,\pf},\pn^{\st,\pf}+\pf)-\Lc^\st_{\pf}\leq\sqrt{\frac{2\Lc^\st_{\pf}\cdot B(D+\tau)}{N\Tn}} +\frac{CB(D+\tau)}{N\Tn}.
\]
Since $(\hat\hb,\hat\phi)$ minimizes the empirical risk, $\Lch(\hat\bhb,\hat\bhi)\leq\Lch(\bhb^{\st,\pf},\pn^{\st,\pf}+\pf)$. Then, we can obtain
\[
    \Lc(\hat\bhb,\hat\bhi)-\Lc^\st_{\pf}\leq \sqrt{\frac{2\Lc(\hat\hb,\hat\phi)B(D+\tau)}{N\Tn}}+ \sqrt{\frac{2\Lc^\st_{\pf}\cdot B(D+\tau)}{N\Tn}}+\frac{2CB(D+\tau)}{N\Tn}.
\]
This in turn implies, for an updated constant $C'=2C+0.5$
\begin{align*}
    \left(\sqrt{\Lc(\hat\bhb,\hat\bhi)}-\sqrt{\frac{B(D+\tau)}{2N\Tn}}\right)^2&\leq\Lc^\st_{\pf}+\sqrt{\frac{2\Lc^\st_{\pf}\cdot B(D+\tau)}{N\Tn}}+\frac{C'B(D+\tau)}{N\Tn}\\
    &\leq\left(\sqrt{\Lc^\st_{\pf}}+\sqrt{\frac{C'B(D+\tau)}{N\Tn}}\right)^2.
\end{align*}
Now, observe that if $\Lc(\hat\bhb,\hat\bhi)\leq\frac{B(D+\tau)}{2N\Tn}$, the statement of the theorem holds. Otherwise, $\sqrt{\Lc(\hat\bhb,\hat\bhi)}-\sqrt{\frac{B(D+\tau)}{2N\Tn}}\geq0$ and we can obtain $\sqrt{\Lc(\hat\bhb,\hat\bhi)}\leq\sqrt{\Lc^\st_{\pf}}+\left(\sqrt{C'}+1\right)\sqrt{\frac{B(D+\tau)}{N\Tn}}$. Taking the square of both sides concludes the proof of the main statement (first line). The second statement follows directly from the definition of mismatch, that is, using the fact that $\MM=\Lc^\st_{\pf}-\Lc^\st$.
%For any choice of $t\in[T]$, we then obtain
%\begin{align}
%|\Lc_t(\bhi,\bho,\hb_t)-\Lc_t(\bhi',\bho',\hb_t')|&\leq |\sup_{1\leq i\leq n}\ell(y_{ti},f_t(\x_{ti}))-\sup_{1\leq i\leq n}\ell(y_{ti},f'_t(\x_{ti}))|\\
%\end{align}
}
\end{proof}
\vspace{-10pt}
\subsection{Proof of Corollary~\ref{cl thm3}}\label{app nn proof}

\begin{proof}
Within this setting, classifier heads correspond to $h(\ab):=h_{\vb}(\ab)=\sigma(\vb^\top\psi(\ab))$ and $\Hb=\{h_{\vb}\bgl \tn{\vb}\leq \bz\}$. Similarly, feature representations  correspond to $\phi(\x)=\W\x$, $\pf(\x)=\W_\frz\x$, $\pn(\x):=\pn^{\W_\new}(\x)=\W_\new\x$ where $\W=\W_\frz+\W_\new\in\R^{r\times d}$. The hypothesis set becomes $\bPn=\{\pn^{\W_\new}\bgl \W_\new\in\Wc\}$.

Here $\W_\new\in\Wc$ is the weights of the new feature representation to learn on top of $\W_\frz$. Importantly, $\W_\new$ only learns the last $r_\new$ rows since first $r_\frz$ rows are fixed by frozen feature extractor $\W_\frz$. For the proof, we simply need to plug in the proper quantities within Theorem \ref{cl thm}. First, observe that $\Lc^\st=\E[Z]^2$ since $Z$ is independent zero-mean noise thus for any predictor using $\hat{y}:=\hat{y}(\x)$ input features we have
\[
\E[\ell(y,\hat{y})]=\E[(y(\x)+Z-\hat{y}(\x))^2]\geq \E[Z^2],
\]
where $y(\x)=\sigma({\vb^\st_t}^\top \psi(\W^\st\x))$ is the noiseless label.

We next prove that $\Lc^\st_\frz=\Lc^\st=\E[Z^2]$. This simply follows from the fact that frozen representation $\W_\frz$ is perfectly compatible with ground-truth representation $\W^\st$. Specifically, observe that $\W^\st_\new:=\W^\st-\W_\frz$ lies within the hypothesis set $\Wc$ since by construction $\|\W^\st_\new\|\leq \|\W^\st\|\leq \bo$ and $\W^\st_\new$ is zero in the first $r_\frz$ rows. Similarly $(\vb^\st_t)_{t=1}^T$ obey the $\ell_2$ norm constraint $\tn{\vb^\st_t}\leq \bz$. Thus, $\W^\st_\new,\Vb^\st=(\vb^\st_t)_{t=1}^T$ are feasible solutions of the hypothesis space and since $\W^\st_\new+\W_\frz=\W^\st$, for this choice we have that $\hat{y}(\x)=y(\x)$ thus task-specific risks induced by $(\vb^\st_t,\W^\st)$ obey $\Lc_t(\vb^\st_t,\W^\st)=\E[Z^2]$ for all $t\in[T]$. Consequently, the task-averaged risk obeys $\Lc(\Vb^\st,\W^\st)=\E[Z^2]$ proving aforementioned claim.

The remaining task is bounding the covering dimensions of the hypothesis sets $\Hb$ and $\bPn$ and verifying Lipschitzness. The Lipschitzness of $h(\ab)=\sigma(\vb^\top\psi(\ab))\in\Hb$, $\pf(\x)=\W_\frz\x$, $\pn(\x)=\W_\new\x\in\bPn$ follows from the fact that all $\W_\new\in\Wc,\W^\st,\W_\frz$ have spectral norms bounded by $\bo$, and the fact that, the Lipschitz constant of $h$ (denoted by $\lip{\cdot}$) can be bounded as $\lip{h}\leq \lip{\sigma} \lip{\psi}\tn{\vb}\leq \bar{L}^2 \bz$ where $\bar{L}=\max(\lip{\psi},\lip{\sigma})$. Recall that dependence on the Lipschitz constant is logarithmic. 

{What remains is determining the covering dimensions of $\Hb,\bPn$ which simply follows from covering the parameter spaces of $\vb\in\Bc^r(\bz),\W_\new\in\Wc$. Here $\Bc^r(\bz)\subset\R^r$ is defined to be the Euclidean ball of radius $\bz$. Suppose $\Xc$ lies on an Euclidean ball of radius $R$ and let $\Fc:=(\{\pf\}+\bPn)\circ \Xc$ be the feature representations. Since $\W_\frz$ and $\W_\new$ have spectral norm at most $\bo$, $\Fc$ is subset of Euclidean ball of radius $2\bo R$.}

Fix an $\eps_0=\frac{\eps}{2\bo R\bar{L}^2}$ $\ell_2$-cover of $\Bc^r(\bz)$. This cover has cardinality at most $(\frac{6\bz\bo R\bar{L}^2}{\eps})^r$ and induces an $\eps$-cover of $\Hb$. To see this, given any $\fb\in\Fc$ and $\vb\in \Bc^r(\bz)$, there exists a cover element $\vb'$ with $\tn{\vb'-\vb}\leq \eps_0$, as a result $|h_{\vb'}(\fb)-h_{\vb}(\fb)|\leq \bar{L}^2\tn{\fb}\tn{\vb'-\vb}\leq \eps$. Consequently $\cc{\Hb}=r$. Similarly, since elements of $\Wc$ has $r_\new d$ nonzero parameters (and recalling that $\Wc$ is also subset of spectral norm ball of radius $\bo$) $\Wc$ admits a $\frac{\eps}{R}$ Frobenius cover of cardinality $(\frac{3R\bo\sqrt{r_\new}}{\eps})^{r_\new d}$. Consequently, for any $\phi_{\W}$ with $\W=\W_\new+\W_\frz$, there exists  a cover element $\phi_{\W'}$ with $\|\W'_{\new}-\W_\new\|\leq \tf{\W'_\new-\W_\new}\leq \frac{\eps}{R}$, such that for all $\x\in\Xc$, we have that $|\phi_{\W}(\x)-\phi_{\W'}(\x)|\leq \|\W'_{\new}-\W_\new\|\tn{\x}\leq \eps$. Consequently $\cc{\bPn}\leq r_\new d$. These bounds on covering dimensions conclude the proof after applying Theorem \ref{cl thm}.
\end{proof}
%Specifically, for some $R\geq \|\W^\st\|_F$, we choose search space $\Wc_\new$ to be the set of matrices $\W_\new$ such that $\|\W_\new\|_F\leq R$ and the first $r_\frz$ rows of $\W_\new$ are zero. This determines $\bPn$. Similarly, we optimize $\vb_t$ over $\ell_2$ ball with radius $B$. Let us fix the loss function $\ell$ to be quadratic. 

%Thus, we write $\W=\W_\new+\W_\frz$ Here $\W_\new\in\Wc_\new$ is the weights of the new feature representation to learn on top of $\W_\frz$. Specifically, 

%\lorem{2}

% \vspace{-10pt}
\section{Proof of Theorem \ref{seq thm}}\label{app:proof seq thm}
Observe that Theorem \ref{cl thm} adds the $T$ new tasks in a multitask fashion which models the setting where new tasks arrive in batches of size $T$. This still captures continual learning due to the use of the frozen feature-extractor that corresponds to the features built by earlier tasks. Also, setting $T=1$ in Theorem \ref{cl thm} corresponds to adding a single new task and updating the representation. Using this observation, we provide an additional result Theorem \ref{seq thm} where the tasks are added to the super-network sequentially. Theorem \ref{seq thm}, provided in Section \ref{app C}, arguably better captures the CL setting. In essence, it follows as an iterative applications of Theorem \ref{cl thm} after introducing proper definitions that capture the impact of imperfections due to finite sample learning (see Definition \ref{def pop seq} and Assumption \ref{fin comp}). \red{To start with, we introduce the corollary below, which is a variant of Theorem~\ref{cl thm}. Since in sequential setting, $\Lco_{\pf}$ is task and supernet dependent, the following corollary decouples the population risk for learning a new task ($\Lco_{\pf}$) from the excess risk ($\Lc(\hhb,\hat{\phi})-\Lco_{\pf}$) and sequential results can be easily obtained via repeatedly applying it. }

% \red{The Theorem~\ref{cl thm} shows that }

\begin{corollary}\label{cl thm variant} \red{Under identical setting to Theorem \ref{cl thm}, with probability at least $1-2e^{-\tau}$, the task-averaged population risk of the solution $(\hhb,\hat{\phi})$ obeys} 
    {
    \begin{align}
    \Lc(\hhb,\hat{\phi})&\leq \Lco_{\pf}+ \sqrt{\frac{\ordet{\Tn\cc{\Hb}+\cc{\bPn}+\tau}}{\Tn N}},\nn\\
    %\text{WHERE}&\quad \hat{\Lc}_{\pf}(\bPn)=\Lco+\MM\red{(\bPn)}.\nn
    &\leq \Lco+{\MM}+{\sqrt{\frac{\ordet{\Tn\cc{\Hb}+\cc{\bPn}+\tau}}{\Tn N}}}.\nn
    %\text{WHERE}&\quad\epsilon(\bPn)=\sqrt{\frac{\ordet{\Tn\cc{\Hb}+\cc{\bPn}+\tau}}{\Tn N}}.\nn
    %\Lc(\hhb,\hat{\phi})&\leq\min_{\bPn}\Lc(\hhb,\hat{\phi}\red{|\bPn})+\sqrt{\frac{\ordet{k}}{TN}}.
    %\quad  \Gamma\sqrt{\frac{d_\Bhi+d_\Rho+d_\Hb+\tau}{n}}
    \end{align}}
\end{corollary} 
\red{This corollary follows immediately from the fact that the loss in Theorem \ref{cl thm} is assumed to be bounded ($\Lco_{\pf}\leq1$).}
\subsection{Proof of Theorem \ref{seq thm}}
\begin{proof} Applying Lemma \ref{lem seq} for each task $1\leq t\leq T$ and union bounding, \blue{with probability $1-2Te^{-\tau}$}, for all $T$ applications of \eqref{crlseq}, we have that
\begin{align}
&\Lc(h^t,\pn^t+\pf^t)-\Lci{t}\leq \MS{t}+\underbrace{\ME{t}+\sqrt{\frac{\ordet{\cc{\Hb}+\Ccn+\tau}}{N}}}_{\text{excess empirical error}}\label{gen bound}\\
&\MA{t}\leq \sqrt{\frac{\ordet{\cc{\Hb}+\Ccn+\tau}}{N}}+\ME{t}.\nn
\end{align}
\blue{We simply need to control $\ME{t}$ at time $t$. Following Assumption \ref{fin comp}, observe that at $t=1$ we simply have $\ME{1}=0$. For $\ME{t}$ we have that
\begin{align}
&\ME{t}\leq \cz+\frac{1}{t-1} \sum_{\tau=1}^{t-1} \MA{\tau}\nn\\
&\implies \MA{t}\leq \left[\cz+\sqrt{\frac{\ordet{\cc{\Hb}+\Ccn+\tau}}{N}}\right]+\frac{\sum_{\tau=1}^{t-1}\MA{\tau}}{t-1}.\label{required eq}
\end{align}
Declare $B:=\cz+\sqrt{\frac{\ordet{\cc{\Hb}+\Ccn+\tau}}{N}}$. We will inductively prove that for all $t$
\begin{align}
\MA{t}\leq (t-1) B.\label{induct}
\end{align}
For $t=1$, obviously it works. Suppose claim holds until time $t-1$. Using \eqref{required eq} this implies
\[
\MA{t}\leq B+\frac{\sum_{\tau=1}^{t-1}\MA{\tau}}{t-1}\leq B+  \frac{\sum_{\tau=1}^{t-1}(\tau-1)}{t-1}B=B+\frac{t-2}{2}B=\frac{Bt}{2}\leq B(t-1).
\]
The last inequality holds since $t\geq 2$. Thus the claim holds for $t$ as well.
}
%It is not hard to see that (by induction and observing $\sum_{\tau=1}^{t-1}\frac{\tau-1}{t-1}+1=\frac{t}{2}$ and $t/2\leq t-1$ as soon as $t\geq 2$), the terms of this recursion can be upper bounded via %(which can be further tightened by a constant)

To proceed, using the fact that $\ME{t}\leq \MA{t}$ and using the upper bound \eqref{induct}, the excess error in \eqref{gen bound} is upper bounded by $tB=(t-1)B+B$ to obtain
\begin{align}
\Lc(h^t,\pn^t+\pf^t)-\Lci{t}\leq \MS{t}+t \left(\cz+\sqrt{\frac{\ordet{\cc{\Hb}+\Ccn+\tau}}{N}}\right).\label{bound fin gen}
\end{align}
 To conclude, we simply sum up \eqref{bound fin gen} for $1\leq t\leq T$ to obtain the advertised bound where the total excess finite-sample error grows as $\frac{T(T+1)}{2}B\leq T^2B$. This yields \eqref{gen bound 3}. \eqref{gen bound 4} is identical to \eqref{gen bound 3} via \eqref{mst}.
%Observe that, the excess error term in \eqref{gen bound} is equal to the upper bound on $\MA{t}$.
\end{proof}

% \red{\eqref{bound fin gen} quantify the generation bound of task $t$ trained in CL manner. Assume task $1,\dots,t-1$ are highly diverse tasks, then the learned features can fit task $t$ perfectly and $\MS{t}\approx0$. Then the only error is caused by the finite sample size }
% \vspace{-20pt}
\subsection{Proof of Lemma~\ref{lem seq}}
%Recall $(h^t,\pn^t)$ is the sequence of (classifier,representation) pairs. 
\begin{proof} %The first observation is that, we can define the mismatch between the new task and empirical (finite sample) features. Specifically, as already introduced in Def.~\ref{def pop seq}, set
As introduced in \eqref{ltseq}, the representation quality of the new task will be captured by
\[
\Lc^t_\seq=\min_{h\in\Hb,\pn\in\bPn^t}\Lc_t(h,\phi)~\text{s.t.}~\phi=\pf^t+\pn.
\]
Similarly, recall the empirical mismatch $\ME{t}=\Lc^t_\seq-\Lci{t}_\seq$. 
%Additionally, define the empirical version of $\MS{t}$ (i.e.~capturing suboptimality of finite-sample solution $\bhi=(\pn^\tau)_{\tau\geq 1}$) as 
%\[\MSE{t}=\Lc^t_\seq-\Lci{t}.\]
%By definition, $\MSE{t}$ is simply equal to
%\[\MSE{t}=\MS{t}+\ME{t}.\]
Based on this and recalling $\MS{t}$ definition \eqref{mst}, applying Corollary \ref{cl thm variant} with $T=1$, we obtain that, with probability $1-2^{-\tau}$ we have the following bounds (on the same event)
\begin{align}
%\Lc(h^t,\pn^t+\pf^t)-\Lc^t_{\pf}&\leq \ME{t}+C\sqrt{\frac{\cc{\Hb}+\Ccn+\tau}{N}}\\
\Lc(h^t,\pn^t+\pf^t)&\leq\Lc^t_\seq+\sqrt{\frac{\ordet{\cc{\Hb}+\cc{\bPn^t}+\tau}}{N}}\nn\\
\Lc(h^t,\pn^t+\pf^t)&\leq \Lci{t}+\MS{t}+\ME{t}+\sqrt{\frac{\ordet{\cc{\Hb}+\cc{\bPn^t}+\tau}}{N}}.\nn
\end{align}
The second equation establishes \eqref{eq mm}.
%Suppose $\ME{t}\leq \MS{t}+\delta_{t}$. Let us establish $\MH{t+1}\leq \MS{t+1}+\delta_{t+1}$ for a proper $\delta_{t+1}$ related to $\delta_{t}$. Following the definition of $\ME{t}$, we observe that $\ME{t}-\MS{t}=\ME{t}$. 
The remaining challenge is simply relating the population and empirical mismatches i.e.~controlling $\ME{t}$. Recall $h^t$ is the ERM solution of \eqref{crlseq} associated with $\pn^t$. \red{Using the uniform concentration event (implied within the application of \eqref{unif conv} but via bounded loss $\Lc_t(h,\pn+\pf^t)$)}, we have
\[
|\Lch_{\Sc_t}(h,\pn+\pf^t)-\Lc_t(h,\pn+\pf^t)|\leq \sqrt{\frac{\ordet{\cc{\Hb}+\cc{\bPn^t}+\tau}}{N}},
\]
Now, using the optimality of $h^t$ for $\pn^t$ and the fact that $(h^t,\pn^t)$ minimizes the empirical risk, observe that
\begin{align*}
\MA{t}&=\min_{h\in\Hb}\Lc(h,\phi^t)-\Lci{t}_\seq\\
&\leq [\min_{h\in\Hb}\Lc(h,\phi^t)-\Lc^t_\seq]+\ME{t}\\
&\leq [\Lc(h^t,\phi^t)-\Lc^t_\seq]+\ME{t}\\
&\leq \sqrt{\frac{\ordet{\cc{\Hb}+\cc{\bPn^t}+\tau}}{N}}+\ME{t},
\end{align*}
to conclude with the second line of \eqref{eq mm}.
\end{proof}
{
\begin{table*}[t]
    \vspace{-10pt}
    \caption{Statistics of 6 tasks used in Fig.~\ref{fig:diversity}.}\label{table:crl_dataset}
    \centering
    
    \setlength{\tabcolsep}{0.7mm}{
    \begin{tabular}{|l|c|c|c|}\hline
    &\# of train &\# of test &\# of classes\\\hline
    ImageNet & 1,281,167 & 50,000   & 1,000\\
    CUBS    & 5,994     & 5,794     & 200\\
    Standford Cars & 8,144 & 8,041& 196\\
    Flowers & 2,040 & 6,149 & 102   \\
    Wikiart & 42,129 & 10,628 & 195  \\
    Sketch & 16,000 & 4,000 & 250 \\ \hline
    \end{tabular}}\vspace{-7pt}%\label{CIFARtable2}
\end{table*}
\begin{table*}[t]
\caption{Test accuracies on 6 tasks  corresponds to Fig.~\ref{fig:diversity}.}\label{table:crl_exp}
\centering
\begin{tabular}{|l|c|c|c|}
\hline
&\multirow{2}{*}{Individual} &\multicolumn{2}{c|}{ESPN}\\ \cline{3-4}
&&no pretrain & ImageNet pretrain\\\hline
ImageNet       & N/A   & N/A   & 68.91 \\
CUBS           & 49.64 & 49.64 & \textbf{57.77} \\
Standford Cars & 79.58 & 80.86 & \textbf{84.26} \\
Flowers        & 66.06 & 85.28 & \textbf{88.11} \\
Wikiart        & 65.74 & 65.76 & \textbf{67.69} \\
Sketch         & 73.52 & 74.25  & \textbf{75.80} \\
\hline
\end{tabular}
% \vspace{-10pt}

\end{table*}
% \vspace{-20pt}
\section{Experimental Setting of Figure~\ref{fig:diversity}}\label{app:imagenet}
Following \cite{mallya2018packnet,hung2019compacting}, we conduct experiments in Fig.~\ref{fig:diversity} to study how task order and diversity benefit CL. We use $6$ image classification tasks, where ImageNet-1k~(\cite{krizhevsky2012imagenet}) is the first task, followed by CUBS~(\cite{wah2011caltech}), Stanford Cars~(\cite{krause20133d}), Flowers~(\cite{nilsback2008automated}), WikiArt~(\cite{saleh2015large}) and Sketch~(\cite{eitz2012humans}). Table~\ref{table:crl_dataset} provides the detail information of all datasets. In the experiment, we train a standard ResNet50 model on the last 5 tasks (CUBS to Sketch) to explore how the representation learned from ImageNet can benefit CRL. We follow the same learning hyperparameter setting as Table~\ref{CIFARtable}, where we use batch size of 128 and Adam optimizer with $(\beta_1,\beta_2)=(0,0.999)$. Also we train $60$ and $90$ epochs for pre-training and pruning with learning rate $0.01$, and then we fine-tune the remained free weights for $100$ epochs using cosine decay over learning rate starting from $0.01$. 
Specifically, we employ weight allocation $\alpha=0.1$ (Eq.~\eqref{wa-eq}) in both individual and ESPN to enable continue representation learning. Compared with Individual/ESPN without pretraining, the ESPN with ImageNet pretrained employs a sparse pretrained ImageNet model from \cite{Wortsman2019DiscoveringNW} with 20\% non-zero weights. Table~\ref{table:crl_exp} shows the test accuracies when learning the 
 last 5 tasks and performances get improved in all 5 tasks. %As for the Sketch dataset, we observe that individual training outperforms EPSN because, in the previous tasks, the input is mostly colorful natural objects, while the Sketch dataset consists of sketch pictures that are rough drawings. The inconsistency in input distribution makes the representation less correlated, leading to worse performance in ESPN than in the individual.  
}
% \begin{table*}[t]
%     \caption{Statistics of 6 tasks used in Sec.~\ref{sec:crl_exp}}\label{table:crl_dataset}
%     \centering
%     \small
%     \setlength{\tabcolsep}{0.7mm}{
%     \begin{tabular}{l|cccccc}\toprule
%               & ImageNet  & CUBS  & Standford Cars & Flowers & Wikiart & Sketch \\ \hline
% \# of train   & 1,281,167 & 5,994 & 8,144          & 2,040   & 42,129  & 16,000 \\
% \# of test    & 50,000    & 5,794 & 8,041          & 6,149   & 10,628  & 4,000  \\
% \# of classes & 1,000     & 200   & 196            & 102     & 195     & 250    \\ \bottomrule
%     \end{tabular}}\vspace{-7pt}\label{CIFARtable}
% \end{table*}

% \begin{table}[t]
% \centering
% \begin{tabular}{l|cccccc}
% \toprule
%                         & ImageNet & CUBS           & Standford Cars & Flowers        & Wikiart        & Sketch         \\
% \hline
% Individual              & N/A      & 44.56          & 61.67          & 57.23          & 42.87          & \textbf{75.20} \\
% ESPN, no pretrain       & N/A      & 42.35          & 63.69          & 66.38          & 45.42          & 70.65          \\
% ESPN, ImageNet pretrain & 68.91    & \textbf{69.00} & \textbf{81.54} & \textbf{89.53} & \textbf{68.59} & 70.37 \\
% \bottomrule
% \end{tabular}\caption{Test accuracy on 6 tasks  corresponds to Fig~\ref{fig:diversity}.}\label{table:crl_exp}
% \end{table}

\end{document}